\documentclass{article}
\usepackage[preprint]{neurips_2026}
\makeatletter
\renewcommand{\@noticestring}{Preprint. Under review.}
\makeatother
\usepackage[utf8]{inputenc}
\usepackage[T1]{fontenc}
\usepackage{newtxtext}
\usepackage{hyperref}
\hypersetup{colorlinks=true, linkcolor=black, citecolor=black, urlcolor=black}
\usepackage{url}
\usepackage{booktabs}
\usepackage{amsfonts}
\usepackage{amsmath}
\usepackage{amssymb}
\usepackage{amsthm}

\usepackage{newtxmath}
\usepackage{nicefrac}
\usepackage{microtype}
\usepackage{xcolor}
\usepackage{graphicx}
\usepackage{multirow}
\usepackage{makecell}
\usepackage{array}
\usepackage{adjustbox}
\usepackage{enumitem}
\usepackage{pifont}
\usepackage{threeparttable}

\newcolumntype{C}[1]{>{\centering\arraybackslash}p{#1}}
\newcolumntype{L}[1]{>{\raggedright\arraybackslash}p{#1}}

\newtheorem*{thmpartition}{Theorem 2A (Partition variance gap)}
\newtheorem*{thmtwoprime}{Theorem 2$'$ (Minimality of the metadata basis)}
\newtheorem*{thmfour}{Theorem 4 (Calibration feasibility)}

\title{Fortunate Recall: Ontology-Driven Memory Lifecycle Management\\for Persistent Coherence in LLMs}

\author{
  Ansuman Mullick \\
  Bilkent University \\
  \And
  Eray T\"uz\"un \\
  Bilkent University \\
}

\begin{document}
\maketitle

\begin{abstract}
Current LLM memory systems treat all personal facts identically, storing them in flat vector stores or knowledge graphs with uniform retention, so stores grow unboundedly while retrieval precision degrades. The core challenge is lifecycle management: which memories should persist, which should be replaced, and at what rate, conditioned on the behavioral type of each fact. \textbf{Fortunate Recall} (FR) is a composable policy layer that classifies personal facts into a 10+1 behavioral ontology and applies category-specific lifecycle policies (differential temporal decay, slot-key supersession, event-time validity, and category-aware retrieval routing) as deterministic functions over LLM-extracted metadata. The lifecycle policy layer is pure math; the LLM is used at ingestion and for a single lightweight candidate-selection step at retrieval. FR-Bank, our infrastructure-independent implementation, achieves 76.9\% pass rate on \textbf{LifecycleBench}, a new 516-question temporal-disambiguation benchmark, outperforming Mem0 (61\%), A-MEM (65.3\%), Memory-R1 (66.9\%), and MemoryOS (70.5\%) across seven system configurations, and reaches 75.2\% on the full 500-question LongMemEval-S benchmark under the canonical Wu et al.\ (ICLR 2025) judge protocol, demonstrating that lifecycle policies impose no measurable aggregate cost on standard retrieval tasks. A pre-registered ablation locates where the gains originate. Replacing the typed layer with three generic lifecycle primitives over a semantic-only ranker leaves correctness statistically unchanged ($\Delta = -1.7$pp, 95\% CI $[-6.0, +2.7]$, n.s.), so the \emph{generic lifecycle metadata} carries the correctness advantage; what the behavioral ontology carries is calibration, halving downstream confabulation (12.0\% vs.\ 24.2\% over all queries, $p < 0.001$) by making per-category parameterization feasible where no single global weight is. End-to-end, FR-Bank reduces confabulation from Mem0's 45.1\% to 22.4\% over answered queries and from 32.2\% to 13.0\% over all 516 queries, while answering more of them correctly (31.2\% vs.\ 18.6\%); the hierarchy replicates on the open-weight Kimi K2.5 generator, and upgrading Mem0's extractor to gpt-4.1-mini narrows the retrieval gap without touching its structural deficits on supersession and retraction. The decomposition transfers to \textbf{BEAM} (ICLR 2026), a benchmark constructed independently of this work: 46.8\% correct vs.\ Mem0's 32.9\% over 280 questions, splitting into $+23$ to $+26$ from the generic metadata and $+10$ to $+13$ from the ontology, concentrated in contradiction resolution. A six-point granularity sweep places the ontology's benefit on a plateau from roughly seven effective policy clusters, so 10+1 is an interpretable point on that plateau rather than a uniquely necessary granularity. The ontology, benchmark, and code are released as supplementary material and will be made publicly available upon publication.
\end{abstract}

\section{Introduction}
\label{sec:intro}

Large language models lose coherence over extended conversations because existing memory systems treat all personal facts uniformly. A user's ethnicity, current food preference, and six-month-old dentist appointment coexist in the same store with identical retrieval priority, yet their appropriate persistence dynamics differ by orders of magnitude. The core challenge is not retrieval but \emph{lifecycle management}: determining which memories should persist, which should be replaced, and at what rate, conditioned on the behavioral type of each fact.

Building on early work on LLM agent memory~\cite{genagents2023}, recent systems have established strong infrastructure: temporally-aware knowledge graphs (Zep/Graphiti~\cite{zep2025}), Zettelkasten organization (A-MEM~\cite{amem2025}), OS-inspired hierarchies (MemoryOS~\cite{memoryos2025}), RL-learned operations (Memory-R1~\cite{memoryr1_2025}), and adaptive structure selection (FluxMem~\cite{fluxmem2026}). None distinguishes a current preference from a superseded one, recognizes expired logistics, or models an approaching deadline, and existing benchmarks~\cite{longmemeval2024} likewise do not measure whether a system can tell current from outdated state.

We introduce \textbf{Fortunate Recall} (FR), a composable lifecycle policy layer that classifies each extracted fact into one of 10+1 behavioral categories (Table~\ref{tab:ontology})---not cognitive types (episodic/semantic/procedural) but behavioral domains that directly determine temporal dynamics---and applies category-specific decay, supersession, event-time validity, and routing policies as deterministic functions. Every retrieval decision is a named term of a single closed-form log-score, with no learned black box mediating lifecycle behavior, at a median $47\,\mu\mathrm{s}$ and empirically linear $O(k)$ scaling.

\textbf{Contributions.} (1)~A behavioral ontology derived from how personal facts change over time, with per-category deterministic lifecycle policies (Table~\ref{tab:ontology}). (2)~A lifecycle non-identifiability theorem with sufficiency, minimality, and individual-necessity results for the metadata basis $(c, \kappa, \xi, h)$; these are representation-level results about the \emph{basis} and do not claim the 10+1 partition, the rates, or the pipeline are optimal (\S\ref{sec:theory} scopes each claim). (3)~\textbf{LifecycleBench}, a 516-question temporal-disambiguation benchmark over 40 personas with 9 attack vectors, positioned as a dense lifecycle stress test rather than the first benchmark to touch temporal updates. (4)~\textbf{FR-Bank}, the infrastructure-independent reference implementation, at $76.9\%$ on LifecycleBench and $75.2\%$ on full LongMemEval-S under the Wu et al.\ protocol, with end-to-end validation across seven configurations and two generator families under both the answered-query and all-query denominators. (5)~\textbf{An attribution and transfer study} separating the stack's two layers: a pre-registered untyped ablation (\S\ref{sec:attribution}), transfer to the independently built BEAM benchmark (\S\ref{sec:beam}), a six-point granularity sweep, a full metadata-noise grid, and two machine audits of our own judge. Ontology, benchmark, code, pre-registration files, and raw per-question judgments are released and will be made public upon publication.

\section{Related Work and Gap Analysis}
\label{sec:gaps}

Existing systems exhibit complementary gaps, mapped feature by feature in Table~\ref{tab:gap-mapping} (Appendix~\ref{app:gaps}); we summarize the pattern rather than enumerate it. Flat vector stores (Mem0~\cite{mem0_2025}) and dynamic organizers (A-MEM~\cite{amem2025}) provide no structured lifecycle. Cognitive-type categorizations (MemoryOS~\cite{memoryos2025}, MIRIX~\cite{mirix2025}) describe memory \emph{form} rather than \emph{behavior}: a chronic illness and a lunch order are both ``semantic'' yet need very different dynamics. Memory-R1~\cite{memoryr1_2025} learns operations by RL over a flat \{ADD, UPDATE, DELETE\} space containing no expiry or retraction action, so its gap is vocabulary rather than optimization. MemoryBank~\cite{memorybank2024} applies Ebbinghaus~\cite{ebbinghaus1885} curves with one uniform decay for all fact types. MemGPT~\cite{memgpt2024} and, at query time, APEX-MEM~\cite{banerjee2026apex} delegate lifecycle decisions to LLM reasoning---expensive and opaque where FR's policies are deterministic---and neither types facts, applies differential decay, or models event-time validity. Zep/Graphiti~\cite{zep2025} offers production temporal knowledge graphs with no forgetting mechanism and only binary supersession; FluxMem~\cite{fluxmem2026} adapts memory \emph{structure} rather than \emph{policy}. FR's bi-temporal representation parallels the valid-time / transaction-time distinction in temporal databases~\cite{snodgrass2000}, extended with behavioral conditioning.

\textbf{Benchmarks.} LongMemEval~\cite{longmemeval2024} and LoCoMo~\cite{locomo2024} measure long-horizon retrieval without testing whether a system distinguishes current from outdated state, and BEAM~\cite{beam2026} spans ten memory abilities at contexts up to 10M tokens, several of them lifecycle-sensitive. We therefore position LifecycleBench not as the first benchmark to touch temporal updates or forgetting but as a \emph{dense lifecycle stress test}, every question constructed so the correct answer depends on lifecycle state. Since we introduce both benchmark and method, \S\ref{sec:beam} transfers the same comparison onto BEAM, which we did not build.

\textbf{Distinction from cognitive typologies.} MemoryOS and MIRIX~\cite{mirix2025} categorize by cognitive type (episodic/semantic/procedural), which describes the \emph{form} of memory. Our ontology describes the \emph{behavioral domain}: a parent's chronic illness is ``semantic'' in cognitive typology but ``Health \& Wellbeing'' in ours; a meeting time is also ``semantic'' but ``Logistical Context.'' The cognitive type does not determine appropriate lifecycle dynamics; the behavioral domain does.

\section{Fortunate Recall}
\label{sec:architecture}

\begin{figure}[t]
  \centering
  \includegraphics[width=\textwidth]{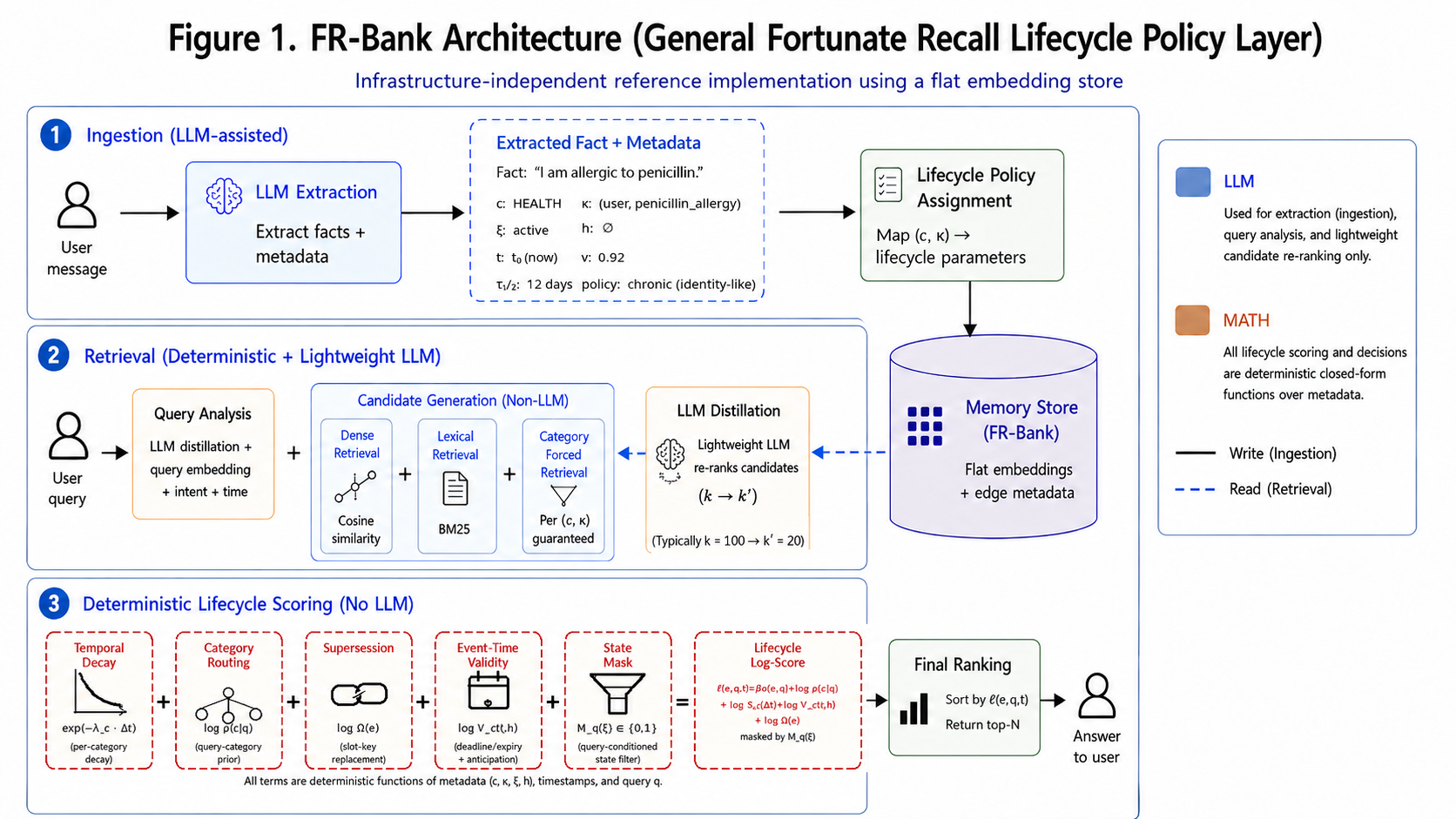}
  \caption{\textbf{Fortunate Recall architecture and the LLM / lifecycle-math boundary.} \emph{Ingestion} (top, asynchronous): one \textsc{gpt-4.1-mini} call per turn extracts each edge $e$ and tags it with behavioral category $c$, slot key $\kappa$, lifecycle state $\xi$, event-time anchor $h$, ingest time $t$, and confidence $v$; $t$ and $h$ are two independent clocks (bi-temporal). \emph{Retrieval} (bottom, per query): hybrid candidate generation is distilled by a single small-model call to top-20, which the \emph{deterministic lifecycle layer} ranks by the closed-form log-score of Eq.~\eqref{eq:logscore}. The pipeline uses exactly two LLM calls per turn, and every lifecycle decision is a named term of $\ell$, inspectable per edge and per term at millisecond latency.}
  \label{fig:architecture}
\end{figure}

The architecture (Figure~\ref{fig:architecture}) comprises an infrastructure layer adopting established patterns (bi-temporal validity, hybrid retrieval) and a lifecycle policy layer providing behavioral dynamics.

\subsection{The Behavioral Ontology}

Each extracted fact is classified into one of 10+1 behavioral categories (Table~\ref{tab:ontology}), by how it changes over time rather than by cognitive form. The category determines the fact's full lifecycle---decay rate, supersession semantics, expiry logic, and retrieval routing---so that a chronic illness and a lunch order, both ``semantic'' in cognitive typology, receive the policies their temporal behavior demands.

\textbf{Empirically grounded design.} An ``identity gravity well'' in our original 8+1 design (Identity absorbing ${\sim}$38\% of all facts) motivated the three-way split into Identity, Hobbies, and Preferences; emotional state is excluded as a category because mood modulates other categories rather than constituting a memory type. Facts are soft-clustered with membership weights summing to 1.0, so the effective decay rate is the weighted harmonic mean of category-specific rates (Lemma~1; soft-membership statistics, the granularity sweep, and the within-category variance comparison are in Appendix~\ref{app:ontology-empirical}, with Figure~\ref{fig:mechanisms} in Appendix~\ref{app:architecture-ext} illustrating supersession and event-time validity end-to-end).

\begin{table}[t]
\caption{The 10+1 behavioral ontology. Each category determines lifecycle dynamics: decay rate (half-life), supersession semantics, and expiry logic. Half-lives govern base decay; actual persistence also depends on access frequency and soft category membership (Appendix~\ref{app:architecture-ext}).}
\label{tab:ontology}
\centering
\setlength{\tabcolsep}{3pt}
\renewcommand{\arraystretch}{1.05}
\footnotesize
\begin{adjustbox}{max width=\textwidth}
\begin{tabular}{@{}c l L{3.4cm} c L{4.8cm}@{}}
\toprule
\textbf{\#} & \textbf{Category} & \textbf{Key Policy} & \textbf{Half-life} & \textbf{Example Transition} \\
\midrule
1  & Identity \& Self-Concept   & Accumulate; high-confidence supersession & 19 days & ``Works at Google'' $\to$ ``Joined Anthropic'' \\
2  & Relational Bonds           & Accumulate; explicit dissolution         & 19 days & ``Dating Alex'' $\to$ ``Broke up with Alex'' \\
3  & Intellectual Interests     & Accumulate; reactivatable                & 14 days & Dormant interest resurfaces after months \\
4  & Health \& Wellbeing        & Chronic: identity-like; Acute: expire    & 12 days & Flu resolves; diabetes persists \\
5  & Projects \& Endeavors      & State machine: active/done/paused        & 12 days & ``Writing thesis'' $\to$ ``Thesis defended'' \\
6  & Hobbies \& Recreation      & Slow decay; dormancy detection           & 8 days  & No fishing mentions for 6 months \\
7  & Preferences \& Habits      & Slot-key supersession                    & 6 days  & ``Likes pizza'' superseded by ``Likes sushi'' \\
8  & Financial \& Material      & State tracking; supersession             & 5 days  & ``Rent \$950/mo'' $\to$ ``Rent \$1{,}100/mo'' \\
9  & Obligations                & Event-time validity; anticipatory        & 5 days  & Deadline activates as it approaches \\
10 & Logistical Context         & Fastest decay; event-time expiry         & 4 days  & ``Flight at 6am tomorrow'' $\to$ expires \\
11 & Other / Open-Set           & Moderate default                         & 10 days & Facts resisting classification \\
\bottomrule
\end{tabular}
\end{adjustbox}
\end{table}

\subsection{Classify the Fact, Not the Entity}

Classification must target the \emph{fact} (the relational edge), not the conversational utterance or the entity node. Utterance-level classification confuses the conversational frame with the information being memorized (``I changed my last name while updating my insurance paperwork'' is about Identity, not Obligations). Entity-level classification assigns wrong lifecycle policies to edges that happen to involve the same node (``Alex caught 7 bass'' should be Hobbies, not Relational just because Alex is a person). Fact-level classification ensures each edge receives its category based on what it represents. A reproducible three-judge classifier-agreement audit ($\kappa=+0.67$ with full session context) is in Appendix~\ref{app:classifier-audit}.

\subsection{Lifecycle Mechanisms}

\textbf{Differential temporal decay.} Each category has a base decay rate $\lambda_c$ with activation $a(e) = \exp(-\lambda_c \cdot \Delta t)$. Rates span a $5.3\times$ range from Identity (slowest) to Logistical (fastest), calibrated to prevent category hierarchy squatting (Appendix~\ref{app:squatting}).

\textbf{Slot-key supersession with confidence weighting.} For categories with \emph{replace} semantics, normalized (subject, attribute) slot keys trigger supersession. High-confidence contradictions mark old edges inactive; low-confidence cases preserve both values (soft supersession). ``Thinking about moving to London'' preserves ``lives in Istanbul''; ``moved to London'' supersedes it (Appendix~\ref{app:slotkey}).

\textbf{Event-time validity and anticipatory activation.} For Obligations and Logistics, activation is governed by distance to the event, not from creation. Deadlines \emph{increase} in activation as they approach, then expire after passing. A date-aware filter with backward-looking bypass removes past-date events before reranking.

\textbf{Category-aware retrieval routing.} Query classification pulls per-category candidate sets alongside global semantic candidates. \emph{Category-forced retrieval} extends this: for the predicted category, all edges are retrieved (up to 20), bypassing vocabulary gaps between abstract queries and specific stored facts.

\subsection{Core Mechanism versus Implementation Engineering}
\label{sec:mechanism-table}

FR-Bank is a full retrieval stack, and not every part of it is a lifecycle mechanism. Table~\ref{tab:mechanism-vs-engineering} separates the two. The dividing line is measured rather than asserted: the ``core mechanism'' rows are exactly the components that the untyped lifecycle baseline of \S\ref{sec:attribution} retains or removes, so the measured difference between those two configurations is what isolates the classification layer's contribution.

\begin{table}[t]
\caption{Core FR mechanisms versus implementation engineering. Every engineering row is independently ablated; every core row is grounded in a theoretical result, a measured ablation, or both.}
\label{tab:mechanism-vs-engineering}
\centering
\footnotesize
\setlength{\tabcolsep}{4pt}
\begin{tabular}{@{}L{4.4cm} l L{6.6cm}@{}}
\toprule
\textbf{Component} & \textbf{Class} & \textbf{Grounding} \\
\midrule
Behavioral classification ($c$)                  & Core      & Thm~3 (\S\ref{app:necessity-c}); Thm~4 (enables per-category calibration); measured: confabulation halved, $+$CR on BEAM \\
Slot-key supersession ($\kappa, \Omega$)         & Core      & Thm~1 construction; Thm~3 (\S\ref{app:necessity-kappa}) \\
Lifecycle-state mask ($\xi, M_q$)                & Core      & Thm~3 (\S\ref{app:necessity-xi}); Thm~3$''$ \\
Event-time validity kernel ($h, V_c$)            & Core      & Thm~3 (\S\ref{app:necessity-h}); Thm~3$'$; Prop.~2 \\
Category-conditioned decay ($S_c$)               & Core      & Model~1; ablation $+6$pp (Table~\ref{tab:lifecyclebench-results}) \\
Category-aware routing ($\rho$)                  & Core      & Ablation $+6$pp; scale-emergent (Appendix~\ref{app:e2e}) \\
\midrule
Semantic floors                                  & Engineering & Ablated, Table~\ref{tab:ablation-summary} (Hit@1 $31\%\to19\%$ without) \\
Numeric rescue                                   & Engineering & Ablated, Appendix~\ref{app:numeric} (AV8 $81\%\to94\%$) \\
LLM distillation step                            & Engineering & Substrate-level retrieval aid, Appendix~\ref{app:frbank} \\
Date filter $+$ backward-looking bypass          & Engineering & Ablated, Table~\ref{tab:ablation-summary} \\
BM25 hybrid $+$ per-source normalization         & Engineering & Appendix~\ref{app:frbank} \\
World-knowledge filter                           & Engineering & Appendix~\ref{app:frbank} ($86\%\to1.4\%$ contamination) \\
\bottomrule
\end{tabular}
\end{table}

\subsection{The LLM Boundary}

LLMs are used at \textbf{ingestion} for entity/edge extraction, behavioral classification, world-knowledge detection, supersession detection, slot-key and event-time extraction, and emotional loading detection, and for a \textbf{single lightweight candidate-selection step at retrieval} (gpt-4.1-mini distillation that selects top-20 from the merged semantic+BM25+category-forced pool). The lifecycle policy layer is pure deterministic math: category-conditioned decay, supersession filtering, event-time checking, category-aware routing, and blended scoring all execute as deterministic functions. The deterministic layer adds a median $47\,\mu\mathrm{s}$ at $k{=}20$ candidates (single-threaded Python 3.13 on AMD Zen~3; Table~\ref{tab:lifecycle-latency}) with empirically linear $O(k)$ scaling, where $k$ is the number of candidate edges scored after distillation. Extended architecture details (session initialization, per-user parameter evolution, interpretability, safety) are in Appendix~\ref{app:architecture-ext}.

\subsection{FR-Bank: Infrastructure-Independent Implementation}
\label{sec:frbank}

To validate substrate independence we implement FR-Bank, a standalone lifecycle bank replacing Graphiti with a flat embedding store, using one GPT-4.1-mini call per turn for combined extraction and classification. Retrieval merges cosine (top-60), BM25 (top-20) and category-forced (top-20) candidates, distills to top-20, then applies the full lifecycle stack. FR-Bank needs no graph database, entity resolution, or deduplication, and all lifecycle parameters match the Graphiti configuration (Appendix~\ref{app:frbank}).

\section{Theoretical Foundations}
\label{sec:theory}

We use a generative model of active relevance (Model~1) to motivate FR's feature set, then prove structural results establishing that the metadata basis $(c, \kappa, \xi, h)$ is necessary and sufficient for lifecycle-aware scoring. All proofs are in Appendix~\ref{app:proofs}.

\textbf{Notation.} At time $t$, let $E_t$ be the memory store. Each edge $e \in E_t$ carries metadata $m(e) = (c_e,\kappa_e,\xi_e,h_e,t_e)$, where $c_e \in \mathcal{C}$ is the behavioral category, $\kappa_e$ is the normalized slot key, $\xi_e \in \{\texttt{active},\texttt{superseded},\texttt{expired},\texttt{retracted}\}$ is lifecycle state, $h_e$ is an optional event-time anchor, and $t_e$ is creation time. Write $\Delta t_e = t - t_e$ and $\sigma(e,q) \in [0,1]$ for semantic similarity to query $q$. Let $G_e$ be the event that $e$ supports the answer, $A_e$ the event that $e$ is currently active for the query, and $R_k$ the retrieved set of size $k$.

\textbf{Model 1 (Lifecycle active relevance).} For query $q$ and candidate edge $e \in E_t$, define
\begin{equation}
P(G_e=1, A_e=1 \mid q, E_t)
\propto
\exp\{\beta \sigma(e,q)\}\,
\rho(c_e \mid q)\,
S_{c_e}(\Delta t_e)\,
V_{c_e}(t, h_e)\,
\Omega(e)\,
M_q(\xi_e),
\label{eq:model1-posterior}
\end{equation}
where $\rho(c \mid q)$ is a query-category prior, $S_c(d)$ is the category survival function with default $S_c(d) = \exp(-\lambda_c d)$, $V_c(t, h)$ is the event-time validity kernel, $M_q(\xi_e) \in \{0,1\}$ is a query-conditioned compatibility mask on lifecycle state, and $\Omega(e) = \prod_{e' :\, \kappa_{e'} = \kappa_e,\, t_{e'} > t_e} (1 - \gamma_{e' \to e})$ with $\gamma_{e' \to e} \in [0,1]$ is the slot-local supersession survival factor, taken over later edges sharing $e$'s slot key.

\textbf{Consequence 1 (Plug-in log-score).} Under Model~1, taking logs yields the deterministic lifecycle score
\begin{equation}
\ell(e,q,t)
=
\beta \sigma(e,q)
+
\log \rho(c_e \mid q)
-
\lambda_{c_e} \Delta t_e
+
\log V_{c_e}(t, h_e)
+
\log \Omega(e)
+
\log M_q(\xi_e),
\label{eq:logscore}
\end{equation}
with the convention $\log 0 = -\infty$. Model~1 is a modeling assumption, not a claim about runtime inference; FR computes deterministic policies over these quantities. Its role is to expose the sufficient statistics of lifecycle-aware active relevance: category controls survival and routing, slot key controls the competitor set used by supersession, lifecycle state gates query compatibility, and event-time anchor controls future- vs.\ past-oriented validity.

\textbf{Theorem 1 (Lifecycle non-identifiability).} Let $s_\phi(e,q) = \phi(\sigma(e,q), \Delta t_e)$ be any deterministic feature-restricted scorer. There exist two memory histories $H$ and $H'$ and a query $q$ such that all candidates have identical $(\sigma, \Delta t)$ profiles under $H$ and $H'$ but the correct lifecycle-aware rankings differ. Consequently $s_\phi$ produces the same ranking on both histories and fails on at least one; if $H, H'$ are a priori equiprobable, the binary error floor is $P_e \ge 1/2$.

\emph{Proof sketch.} The proof constructs two histories with identical $(\sigma, \Delta t)$ profiles but different correct rankings due to hidden supersession structure (Appendix~\ref{app:proof-nonid}).

The missing quantity is not a better scalar weight but missing state: a flat retriever cannot recover slot-local replacement, retraction, or future-versus-past event direction from $(\sigma, \Delta t)$ because that information is absent from the representation. Were this self-evident, the dominant systems---Mem0, Memory-R1, MemGPT, MemoryOS---would not all operate inside the regime it prohibits.

\textbf{Sufficiency, minimality, and individual necessity.} The metadata basis $(c, \kappa, \xi, h)$ is both sufficient (Theorem~2) and minimal (Theorem~2$'$): each component is individually necessary (Theorem~3), with explicit witnesses showing that removing category labels conflates preservation with suppression, removing slot keys conflates independent facts with competing ones, removing lifecycle state conflates current with historical queries, and removing event-time anchors conflates upcoming with expired obligations. Action-space impossibility results (Theorems~3$'$, 3$''$ in Appendices~\ref{app:proof-etob} and~\ref{app:proof-retob}) further show that event-time-invariant and binary-flat retraction-oblivious systems have worst-case error $\ge 1/2$. Full statements and proofs are in Appendices~\ref{app:proof-sufficiency}--\ref{app:proof-retob}.

\textbf{Calibration as convex feasibility.} The feasible parameter region $\Theta_m$ for lifecycle scoring is a convex polyhedron (Theorem~4). The global-$\alpha$ family is a one-dimensional slice of $\Theta_m$, and on our data that slice is empty: $99.8\%$ of the $1{,}566$ preservation--suppression cross-pairs on FR-Bank are jointly infeasible ($\min \alpha_F^{\max} = 1.7\times10^{-6}$ against $\max \alpha_L^{\min} \approx 1.0$), and zero of the $45$ populated cells of the $11\times11$ category polytope admit a shared $\alpha$. Category-specific blending parameters restore feasibility (Appendix~\ref{app:proof-feasibility}). The rates are therefore hand-set but \emph{constraint-derived} rather than fitted to benchmark outcomes, and behavior is insensitive inside the feasible region: ten of eleven categories tolerate the full $0.25\times$--$4\times$ multiplier sweep at retrieval Jaccard ${\ge}\,0.95$ (Table~\ref{tab:lambda-feasibility}), and a 20/20 persona holdout shows no overfitting ($77.0\%$ vs.\ $76.8\%$, Fisher $p = 1.00$).

\textbf{Staleness has empirical bite.} Conditioning pass rate on context cleanliness gives $P_S(\textnormal{pass}) = (1 - \pi_S)\,p_0 + \pi_S\,p_1$ with $p_0 = 75.8\%$ and $p_1 = 6.0\%$ empirically, so stale facts flip outcomes rather than acting as harmless noise; the contamination gap between FR-Graphiti ($\pi = 0.08$) and Mem0 ($\pi = 0.27$) predicts a $13.3$pp pass advantage against an observed $12$pp (Appendix~\ref{app:proof-staleness}).

\textbf{Scope of these claims.} \emph{Representation-level impossibility}, holding for any scorer in the stated feature class regardless of training: Theorem~1, Theorem~3 with its four witnesses, and the action-space analogs 3$'$ and 3$''$. \emph{Positive results about the basis}: Theorems~2 and 2$'$. \emph{Calibration feasibility}: Theorem~4. \emph{Empirical, contingent claims about deployed systems}: the staleness decomposition, Proposition~1, Proposition~3A (falsified by our own data, Appendix~\ref{app:proof-staleness}), and every cross-system number in \S\ref{sec:evaluation}. Two consequences follow. The theory does \emph{not} establish that the 10+1 partition, the decay rates, or the pipeline are optimal; \S\ref{sec:granularity} measures how much the partition actually matters. And a baseline carrying slot keys, event-time anchors, retraction flags and query-conditioned filters observes $(\kappa, h, \xi, M_q)$, so it falls \emph{outside} the impossibility classes by construction---Theorems~2 and 2$'$ say precisely that this basis suffices. We build and measure that baseline in \S\ref{sec:attribution}, where what it loses turns out to be calibration rather than retrieval correctness, as Theorem~4 predicts.

\section{LifecycleBench}
\label{sec:benchmark}

LifecycleBench is a temporal disambiguation benchmark in which the correct answer depends on lifecycle state. It comprises 516 questions over 40 synthetic personas, each with ${\sim}$35 multi-session conversations spanning 18 simulated months, structured into nine attack vectors (Table~\ref{tab:attack-vectors}): superseded preference (AV1, $n{=}75$), expired logistics (AV2, 75), stable identity buried under conversation volume (AV3, 94), multi-version facts (AV4, 44), broad aggregation (AV5, 40), cross-session contradiction (AV6, 45), selective forgetting after explicit retraction (AV7, 40), numeric preservation (AV8, 63), and soft supersession (AV9, 40). Personas span 14 nationalities and ages 22--68, generated conversation-first under rules enforcing natural dialogue, with structured YAML ground truth enumerating supersession events, expiry dates, retraction language, and ambiguity-resolution rules. As in LongMemEval~\cite{longmemeval2024} and LoCoMo~\cite{locomo2024}, conversations are synthetic because ground-truth temporal state is unobservable without controlled design (Appendix~\ref{app:synthetic-design}). The attack vectors were defined from failure modes observed in preliminary evaluation of Mem0 and Zep/Graphiti, before FR's mechanisms were designed, and all systems are evaluated under identical conditions. FR does not dominate: MemoryOS leads AV6 (73\% vs.\ 64\%) and AV7, FR-Graphiti leads AV2 and AV8, and FR-Bank scores only 5\% retrieval pass on AV7; FR-Bank leads AV1, AV3, AV4 and AV5 outright plus a three-way AV9 tie, and Memory-R1 leads none. Because we introduce both benchmark and method, \S\ref{sec:beam} reports a transfer to an externally constructed benchmark; Appendix~\ref{app:benchmark-independence} analyzes co-design risk directly, and Appendices~\ref{app:persona-examples} and~\ref{app:benchmark-ext} give personas, the judge protocol, and extended methodology.

\section{Experiments}
\label{sec:evaluation}

Table~\ref{tab:lifecyclebench-results} consolidates results across LifecycleBench (LCB) and end-to-end response quality (E2E) for seven memory system configurations, with FR-Graphiti ablation rows below the cross-system comparison.

\begin{table}[t]
\caption{Consolidated results across LifecycleBench (LCB) and end-to-end response quality (E2E), $n = 516$; the rule separates cross-system rows from FR-Graphiti ablation rows. Confabulation is reported under \emph{both} denominators, with correct-over-total alongside so the abstention trade-off is visible in the same row. An abstention is never counted as a confabulation, and ``all queries'' means all 516 questions.}
\label{tab:lifecyclebench-results}
\centering
\small
\setlength{\tabcolsep}{4.5pt}
\begin{tabular}{@{}l c c c c c c@{}}
\toprule
& & & \multicolumn{2}{c}{\textbf{E2E, all queries}} & \textbf{E2E, ans.} & \\
\cmidrule(lr){4-5} \cmidrule(lr){6-6}
\textbf{System} & \textbf{LCB Pass} & \textbf{LCB Stale} & \textbf{Correct} & \textbf{Confab} & \textbf{Confab} & \textbf{E2E Safe} \\
\midrule
\textbf{FR-Bank}     & \textbf{76.9\%} & 15.5\%       & \textbf{31.2\%} & \textbf{13.0\%} & \textbf{22.4\%} & \textbf{73.3\%} \\
FR-Graphiti          & 73\%            & \textbf{8\%} & 22.9\%          & 13.4\%          & 23.9\%          & 66.9\% \\
MemoryOS             & 70.5\%          & 7\%          & 13.2\%          & 18.4\%          & 40.8\%          & 68.0\% \\
Memory-R1            & 66.9\%          & 15.3\%       & 19.6\%          & 19.4\%          & 31.5\%          & 58.2\% \\
A-MEM                & 65.3\%          & 30.4\%       & 18.8\%          & 35.1\%          & 47.0\%          & 44.2\% \\
Mem0 (gpt-4.1-mini)  & 67.1\%          & 21.5\%       & 27.3\%          & 25.4\%          & 33.8\%          & 52.1\% \\
Mem0 (default)       & 61\%            & 27\%         & 18.6\%          & 32.2\%          & 45.1\%          & 47.3\% \\
\midrule
FR-Graphiti (full)             & 73\% & 8\%   & -- & -- & -- & -- \\
\quad $-$ routing              & 67\% & 10\%  & -- & -- & -- & -- \\
\quad $-$ behavioral decay     & 66\% & 16\%  & -- & -- & -- & -- \\
\quad baseline (uniform, no rt.) & 61\% & 18\% & -- & -- & -- & -- \\
\bottomrule
\end{tabular}
\end{table}

Cross-generator replication on Kimi K2.5~\cite{kimi_2026} preserves the confabulation rank order over the four core systems: FR-Bank 26.1\% / 61.0\% safe vs.\ Mem0 44.4\% / 40.2\% (full breakdown including A-MEM and Mem0-mini, which show generator-dependent shifts, in Appendix~\ref{app:kimi}).

\subsection{LongMemEval-S Under the Wu et al.\ Protocol}

FR-Bank achieves $75.2\% \pm 0.70$pp pass@10 on the full 500-question LongMemEval-S benchmark~\cite{longmemeval2024} under the exact Wu et al.\ (ICLR 2025) judge protocol (Table~\ref{tab:longmemeval-s-variance}), to our knowledge the state of the art under the canonical rubric and above the closest peer-reviewed result, RMM~\cite{rmm2025} at 70.4\% under a different judge configuration. We hold the original five-template rubric and judge model (\texttt{gpt-4o-2024-08-06}) fixed because subsequent work substitutes stronger answerers, different judges, and simplified prompts, rendering reported numbers mutually incomparable; methodology integrity rather than the absolute number is the point. On the 317-question matched subset the lifecycle layer is aggregate-neutral ($+2.2$pp net, $72.9\% \to 75.1\%$), with a $-6.4$pp effect on knowledge update ($79.5\%$ vs.\ $85.9\%$). We report that inversion as a benchmark-criterion artifact rather than a lifecycle gain: the Wu et al.\ judge credits a response when superseded facts appear alongside the updated answer, so filtering is penalized for cleaner context (Appendix~\ref{sec:longmemeval-ablation}). A $+6.4$pp knowledge-update figure in the submitted version traced to no surviving artifact and is withdrawn (Appendix~\ref{app:revision-changes}).

\subsection{LifecycleBench Results}

FR-Bank attains \textbf{76.9\%} pass rate on LifecycleBench (95\% CI $[73.3, 80.6]$; Table~\ref{tab:lifecyclebench-results}), $+16$pp over Mem0 and with $1.76\times$ lower staleness. The 4-config ablation on FR-Graphiti decomposes contributions: full lifecycle adds $+12$pp over the baseline (uniform decay, no routing); category-aware routing alone contributes $+6$pp; behavioral decay contributes the remainder. When the correct edge appears anywhere in the top-10, 94.5\% of questions pass, confirming the bottleneck is retrieval recall rather than ranking. Per-attack-vector breakdown (largest gains: $+21$pp AV1 superseded preferences, $+20$pp AV2 expired logistics, $+22$pp AV4 multi-version facts, $+10$pp AV7 selective forgetting) and the full five-system per-AV comparison are in Appendix~\ref{app:evaluation-ext}.

FR-Bank's 5\% AV7 retrieval pass rate reflects intended behavior: retracted facts are correctly excluded from the retrieval set by the lifecycle state mask $M_q(\xi)$. The low retrieval pass rate converts to the highest end-to-end correct rate among all evaluated systems (37.5\%; Appendix~\ref{app:e2e}) because the downstream model correctly abstains rather than confabulating over retracted plans. Systems without retraction filtering achieve higher retrieval pass rates (MemoryOS 57\%) but lower end-to-end correctness (7.5\%) because generic summaries pass retrieval evaluation by containing nothing specific to be wrong about.

The full lifecycle stack contributes $+12$pp over the uniform baseline (Table~\ref{tab:lifecyclebench-results}). Replacing the behavioral partition with a cognitive (Semantic/Episodic/Procedural) one, holding the pipeline fixed, favors the behavioral partition on 7 of 9 attack vectors but by an aggregate margin of only $1.1$--$1.4$pp (Table~\ref{tab:ontology_ablation}, Appendix~\ref{app:ontology-empirical}); \S\ref{sec:attribution} explains why the margin is small---the choice of partition is not what carries retrieval correctness. What category structure does carry is the ability to express per-category parameters at all: without it there is no differential decay, no category-specific blending, no routing, no feasible global blending weight (Theorem~4), and the stack falls back to the 61\% uniform baseline.

\subsection{Cross-System Comparison}

All competitors run in shipped default configurations, matching the end-user experience; versions, backbones, prompts, and known failure rates are in Appendices~\ref{app:mem0method} and~\ref{app:mr1method}. Memory-R1 uses GPT-4.1-mini, well above its paper's LLaMA-3.1-8B, giving an upper bound on pre-RL performance. The AV2 and AV7 gaps are architectural, not capability-dependent: event-time expiry needs structural support absent from flat vector stores (AV2: FR-Bank 88\% vs.\ Mem0 65\%), and no prompting suppresses an explicitly retracted plan when retraction is not a primitive---on AV7 FR-Bank reaches 37.5\% end-to-end correct, the highest of any system, while Memory-R1 and Mem0 (both 2.5\%) confabulate over 90\% downstream. Memory-R1's learned \{ADD, UPDATE, DELETE\} space contains no expiry or retraction action; these are vocabulary gaps, not optimization failures. Substrate independence holds across two implementations of the same policies (Graphiti 73\%, flat bank 76.9\%). A-MEM~\cite{amem2025} pairs the highest retrieval recall (87.6\% Hit@5) with the highest staleness (30.4\%) and worst confabulation (47.0\%), so recall without lifecycle management actively hurts downstream; upgrading Mem0's extractor to gpt-4.1-mini raises pass rate to 67.1\% ($+6.1$pp) but leaves AV2 and AV7 unchanged (Appendix~\ref{app:mem0method}).

\subsection{End-to-End Response Quality}

For each of 516 LifecycleBench questions, each system's top-10 retrieved facts are provided as the sole context to GPT-5.4 (temperature 0, max\_tokens=500) with a system prompt restricting answers to the provided facts; an independent Claude Sonnet judge then classifies each response as correct, partial, wrong, or abstain and flags confabulation (full methodology in Appendix~\ref{app:e2e}). The resulting confabulation hierarchy maps directly to architectural staleness management: over answered queries, FR-Bank (22.4\%) $<$ FR-Graphiti (23.9\%) $<$ Memory-R1 (31.5\%) $<$ Mem0-mini (33.8\%) $<$ MemoryOS (40.8\%) $<$ Mem0 (45.1\%) $<$ A-MEM (47.0\%). Because a system that abstains more shows a lower rate under that denominator regardless of its behavior on answered questions, we report every rate over all 516 queries as well (Table~\ref{tab:lifecyclebench-results}, Appendix~\ref{app:allqueries}). The gap survives essentially intact---$19.2$ of the $22.7$pp FR-Bank\,--\,Mem0 gap remains ($13.0\%$ vs.\ $32.2\%$)---and correct-over-total moves the same way ($31.2\%$ vs.\ $18.6\%$), so the extra abstentions are not bought with correct answers. The ranking is stable under both denominators; the one movement is MemoryOS, whose low all-queries confabulation ($18.4\%$) pairs the highest abstain rate ($54.8\%$) with the lowest correct rate ($13.2\%$), an instance of the retrieval-metric paradox below. Cross-generator replication on Kimi K2.5 preserves the four-system core ranking with a $\sim$15pp tier gap between lifecycle-managed and unmanaged systems on both generators, evidence against GPT-family training artifacts (Appendix~\ref{app:kimi}).

\textbf{Safe response rate.} FR-Bank's higher abstain rate (42.1\% vs.\ Mem0's 28.7\%; FR-Graphiti abstains at 44.0\%) reflects deliberate safety: lifecycle filtering removes stale facts, leaving the model with insufficient context, and the model correctly declines rather than confabulating. The safe response rate (correct $+$ abstain) captures this: FR-Bank at 73.3\% vs.\ Mem0 at 47.3\%.

Reasoning partially mitigates contamination ($-4.5$pp on Mem0's context with GPT-5.4 reasoning enabled) but leaves an $18.2$pp architectural gap no downstream compute closes. A retrieval-metric paradox also appears: MemoryOS reaches $70.5\%$ retrieval pass but only $13.2\%$ E2E correct, because hierarchical summarization substitutes generic summaries that pass retrieval evaluation by containing nothing specific to be wrong about. On AV7 this becomes an inversion, and it is a direct instance of Proposition~3A's falsified invariance (Appendix~\ref{app:proof-staleness}): were retrieval pass a monotone predictor of end-to-end correctness, MemoryOS ($57\%$ AV7 retrieval pass) would beat FR-Bank ($5\%$), yet FR-Bank scores $37.5\%$ correct against MemoryOS's $7.5\%$. Retrieval pass measures context cleanliness and is blind to informativeness, so wherever correct behavior is \emph{absence} rather than \emph{presence} the two decouple and retrieval-level evaluation becomes adversely informative.

\subsection{Attribution: What the Ontology Contributes}
\label{sec:attribution}

Comparisons against external systems cannot separate FR's two layers: the generic lifecycle metadata (slot keys, event-time anchors, retraction states) and the behavioral typing above it. We therefore ran a within-pipeline ablation under a design pre-registered before data contact. The untyped arm keeps three deterministic primitives---slot-key supersession, event-time expiry, retraction masking---and removes everything typed, ranking by \emph{semantic similarity alone} over an unmasked cosine$+$BM25 pool with no routing, category-forced retrieval, or multi-hop expansion. Being more austere than a single-variable ablation, it lower-bounds a lifecycle-metadata-only system rather than isolating the category labels exactly (Appendix~\ref{app:untyped-arm}).

\textbf{On correctness the arms are statistically indistinguishable} ($\Delta = -1.7$pp, 95\% CI $[-6.0, +2.7]$, McNemar $p = 0.44$): the generic metadata, not the behavioral ontology, carries FR's correctness advantage, and the abstract states the claim that way. \textbf{What the typed layer carries is calibration.} All-queries confabulation falls from $24.2\%$ ($125/516$) under the untyped arm to $12.0\%$ ($62/516$) under the full stack ($\Delta = -12.2$pp, 95\% CI $[-16.1, -8.2]$, $p < 0.001$), with 84 questions confabulated only by the untyped arm against 21 only by the full stack; abstention rises $15.9$pp while correct-over-total is unchanged, so the extra abstentions come from would-be wrong answers rather than correct ones. This is the failure mode the theory names: the necessity witness for the category label $c$ (Appendix~\ref{app:necessity-c}) is a cross-intent \emph{scoring} conflict---the $\alpha$-feasibility conflict of Theorem~4---not a retrieval failure. Both arms were scored in one paired pass over byte-identical FR-Bank answers; under the original pass FR-Bank's all-queries confabulation reads $13.0\%$ rather than $12.0\%$, a five-question difference within judge noise that leaves every contrast above unaffected (Appendix~\ref{app:untyped-arm}).

\textbf{How fine must the partition be?} Merging the eleven policy cells by policy similarity and sweeping $k = 1, 3, 5, 7, 9, 11$ end-to-end on BEAM, with every mechanism on so only granularity varies, gives $118, 120, 125, \mathbf{133}, 130, 131$ correct of 280, with contradiction resolution rising monotonically ($19, 22, 22, 27, 27, 29$) and knowledge update flat at every $k$. Granularity is load-bearing ($k{=}1 \to k{=}7$: $+15$ correct, $+5.4$pp) and then saturates: the peak is $k{=}7$, and $k = 7, 9, 11$ span three questions, within noise. This reproduces the paper's own variance analysis, where the optimal 8-partition already attains the 11-partition variance floor (Table~\ref{tab:granularity}). We therefore keep 10+1 for interpretability---human-readable per-domain labels for user-facing memory control---rather than claiming granularity-11 is uniquely necessary.

\subsection{External Transfer: BEAM}
\label{sec:beam}

Because we introduce both the benchmark and the method, the decisive test of co-design is an external one. We evaluated on BEAM~\cite{beam2026}: 100 coherent conversations and 2{,}000 human-validated questions across ten memory abilities, constructed independently of this work, three of whose abilities (contradiction resolution, event ordering, instruction following) were newly introduced by its authors and therefore cannot derive from FR's design or from the failure modes that informed LifecycleBench. The FR-vs-Mem0 comparison and the restriction to the five lifecycle-relevant abilities at the 1M-token tier were pre-registered before data contact; all arms share one harness and one frozen judge. That judge over-abstains relative to the frozen API judge used elsewhere in this paper, so on BEAM confabulation counts are lower bounds and abstention rates upper bounds---uniformly across arms, leaving between-arm comparisons unaffected (Appendix~\ref{app:beam}).

Over the four binary-scorable abilities ($n = 70$ each), reported as FR-full ($k{=}11$) / FR with the ontology collapsed to a single global-mean cell ($k{=}1$, every lifecycle mechanism retained) / Mem0: contradiction resolution $\mathbf{29}/19/5$, temporal reasoning $\mathbf{26}/24/15$, knowledge update $33/\mathbf{34}/\mathbf{34}$, abstention $\mathbf{43}/41/38$. In total, FR-full is correct on $\mathbf{131}$ of $280$ ($46.8\%$) against $118$ ($42.1\%$) for $k{=}1$ and $92$ ($32.9\%$) for Mem0, with confabulation $\mathbf{72}$ / $84$ / $108$.

The lifecycle advantage transfers, and decomposes as it does on LifecycleBench: the untyped configuration already beats Mem0, and the typed layer adds a further gain concentrated in contradiction resolution ($29$ vs.\ $19$ of $70$; McNemar exact $p \approx 0.006$). Re-executed identically, FR-full spans $125$--$131$ of $280$ across three runs, so the decomposition is reported as \emph{ranges}: $+23$ to $+26$ correct from the generic metadata and $+10$ to $+13$ from the ontology. Two independently specified minimal arms---the $k{=}1$ collapse above, and a separate arm using untyped defaults that scores $115/280$ overall---both land on exactly $19/70$ contradiction resolution, so an untyped stack's contradiction deficit replicates across operationalizations. Knowledge update is a three-way tie, as the theory predicts: plain updates need only slot-key supersession, which Mem0's UPDATE already provides, so there is no structural gap there to find. Event ordering needs BEAM's graded native metric; rescored under BEAM's official scorer after a disclosed audit of our judge's ordering handling, FR leads Mem0 on both ($\tau_{\text{norm}}$ $0.214$ vs.\ $0.190$, Wilcoxon $p = 0.009$; \texttt{llm\_judge} $0.574$ vs.\ $0.510$, $p = 0.020$) while the two FR arms are indistinguishable, and it is excluded from the binary aggregate.

\textbf{What does not replicate, and what it costs.} Three negatives ship with this result rather than after it, in full in Appendix~\ref{app:beam}. The \emph{correctness} half of the decomposition replicates; the \emph{confabulation} half does not ($108 \to 84 \to 72$ primary, $108 \to 76 \to 75$ on re-execution), so on BEAM we credit the confabulation reduction to the generic metadata alone. A \emph{lifecycle-off} configuration of the same stack scores $135/280$, above both typed arms, on knowledge update $43$ vs.\ $33$: $10$ of the $12$ discordant cases trace to ingest-time supersession firing under over-broad slot keys and deactivating the entry carrying the current value---the slot-key fragility \S\ref{sec:robustness} identifies---and the untyped arm inherits it (knowledge update $31$), so the cost sits in the generic primitives, not the ontology. We state that trade rather than netting it out. And no arm orders events at all in absolute terms: exact-correct is $0/70$ for all three, so the $\tau_{\text{norm}}$ separation sits on a floor of zero.

\subsection{Robustness and Validation of Our Own Judge}
\label{sec:robustness}
\label{sec:granularity}

The deterministic layer consumes LLM-generated metadata, so its robustness is bounded by that metadata's quality. Injecting independent noise per field at ingestion, recomputing lifecycle states, and rerunning retrieval localizes the exposure sharply (Appendix~\ref{app:noise}): at 10\% corruption Jaccard@10 falls to $0.764$/$0.837$ under slot-key merge/split corruption but only to $0.917$ under category-label flips and $0.993$/$0.999$ under anchor deletion and $\pm7$-day shifts. Sensitivity concentrates on the field \S\ref{sec:attribution} identifies as carrying correctness, while the field carrying calibration is tolerant---even a 50\% category flip costs $+2.7$pp of staleness. Retraction is the most potent field per corrupted edge and the least prevalent (31 of $18{,}936$ edges), and its binding constraint is extraction \emph{recall}: the detector fires on 19 of 40 scripted retractions and only 2 of 40 are cleanly suppressed at the metadata level, so FR-Bank's $37.5\%$ AV7 correctness is recovered downstream---the generator reconciling a co-retrieved plan and its cancellation---rather than by clean exclusion. The classifier itself is audited at $\kappa = +0.67$ ($n = 188$), disagreements falling on pairs whose policies are nearly identical and therefore least consequential (Appendix~\ref{app:classifier-audit}).

Our end-to-end judge is itself an LLM, so we audited it the same way with two machine instruments (Appendix~\ref{app:judge-validation}). A six-model cross-family panel agrees with itself (Fleiss $\kappa = 0.834$) more than with our pinned judge ($\kappa = 0.715$) and localizes the discrepancy: the pinned judge over-applies \textsc{partial} to answers that are incomplete but uncontaminated. Re-judging all $748$ \textsc{partial} verdicts plus a 60-item control under an independent judge lifts correct and safe rates $5$--$7$pp roughly uniformly, leaves the ranking unchanged on every metric, moves the FR-Bank\,--\,Mem0 correct gap from $+12.6$ to $+11.0$pp, and reproduces $93\%$ of controls; three of our four pre-registered predictions were refuted. Absolute rates are therefore judge-relative, and the cross-system comparisons are the robust findings.

\section{Discussion and Conclusion}
\label{sec:discussion}

\textbf{Limitations.} Five, in the order we consider them binding. (i)~\emph{Human validation is absent.} Every verdict here comes from an LLM judge; we audited it with two machine instruments (\S\ref{sec:robustness}) that agree on the direction of its bias but not its magnitude, and the classifier is audited against LLM judges rather than humans. A stratified human-labeled slice (100 verdicts, two non-author annotators blind to system identity, prioritizing items where the machine judges disagree) is planned and \emph{has not yet been run}; until it is, absolute rates are judge-relative. (ii)~\emph{Benchmark--method co-design risk.} We built LifecycleBench, and its attack vectors were informed by failure modes we observed in baselines. The BEAM transfer bounds this risk without eliminating it, and BEAM's runs use a different backbone, so they are not commensurable with our LifecycleBench tables. (iii)~\emph{Comparison fairness.} Competitors run in shipped defaults against a deliberately engineered FR stack. The untyped-arm decomposition separates policy from engineering within our own stack and Appendix~\ref{app:gaps} documents each external system's structural blocker, but a maximally tuned competitor was not built. (iv)~\emph{No multi-framework or multi-agent evaluation.} FR is presented as composable yet tested on two substrates and one agent loop; behavior under other frameworks or concurrent writers is unmeasured. (v)~\emph{The correctness--abstention trade-off is real.} FR-Bank answers $57.9\%$ of queries; \S\ref{sec:attribution} shows the extra abstentions come from would-be wrong answers, but a deployment that must always answer will not benefit as reported.

Further scope notes: rate calibration is constraint-driven rather than metric-optimized, with a 20/20 persona holdout confirming generalization ($77.0\%$ vs.\ $76.8\%$, Fisher $p=1.00$); the seven configurations span five architectural paradigms; and soft supersession fails under high-commitment generators (Kimi AV9), where leaving both candidate values in the set produces commitment rather than abstention. Extended discussion is in Appendix~\ref{app:discussion-ext}; every number changed relative to the submitted version is in Appendix~\ref{app:revision-changes}.

\textbf{Privacy, consent, and safety.} Memory systems that retain personal facts raise privacy and safety concerns beyond those addressed in this work, including consent, verifiable deletion, prompt-injection defenses, and cultural generalization of decay assumptions; we discuss these in Appendix~\ref{app:privacy-safety}.

\textbf{Conclusion.} Fortunate Recall attaches lifecycle metadata to every stored fact and applies deterministic, category-conditioned policies over it. Two implementations of the same stack achieve indistinguishable LifecycleBench pass rates ($p = 0.10$) while cutting downstream confabulation from Mem0's $45.1\%$ to $22.4\%$ over answered queries and from $32.2\%$ to $13.0\%$ over all queries, answering more questions correctly in the process, and non-identifiability and individual-necessity theorems establish the basis $(c, \kappa, \xi, h)$ as structurally necessary. Our controlled ablation then divides the credit within that basis, and we state the division as the paper's claim: the \emph{generic} metadata carries the correctness advantage, while the \emph{behavioral ontology} makes per-category calibration feasible where no global weight is, halves confabulation, and adds one localized, replicated correctness gain on contradiction resolution, its granularity benefit saturating near seven policy clusters. That claim is narrower than the one we submitted, and it transfers to a benchmark we did not build.

\bibliographystyle{plain}

\newpage
\appendix
\makeatletter
\renewcommand*{\@Alph}[1]{%
  \ifcase\number#1\or A\or B\or C\or D\or E\or F\or G\or H\or I\or J\or K\or L\or M\or N\or O\or P\or Q\or R\or S\or T\or U\or V\or W\or X\or Y\or Z\or AA\or AB\or AC\or AD\or AE\or AF\or AG\or AH\or AI\or AJ\or AK\or AL\or AM\or AN\or AO\or AP\or AQ\or AR\or AS\or AT\or AU\or AV\or AW\or AX\or AY\or AZ\else\@ctrerr\fi}
\makeatother
\section*{Supplementary Materials}
\addcontentsline{toc}{section}{Supplementary Materials}

\section{Proofs and Theoretical Details}
\label{app:proofs}

\subsection{Model 1 and Its Deterministic Consequence}
\label{app:model1}

We restate Model~1 for convenience:
\begin{equation}
P(G_e=1, A_e=1 \mid q, E_t)
\propto
\exp\{\beta \sigma(e,q)\}\,
\rho(c_e \mid q)\,
S_{c_e}(\Delta t_e)\,
V_{c_e}(t, h_e)\,
\Omega(e)\,
M_q(\xi_e).
\label{eq:app-model1-post}
\end{equation}
Here $M_q(\xi_e)$ is a deterministic compatibility mask: for a current-state query it is $1$ on \texttt{active} and $0$ otherwise; for a historical query it may also admit \texttt{superseded}; for a deletion-sensitive query it excludes \texttt{retracted}; and so on. The exact intent-to-mask map is a policy choice, but the existence of such a query-conditioned mask is the relevant structural fact.

With the exponential survival function $S_c(d) = \exp(-\lambda_c d)$, taking logs of \eqref{eq:app-model1-post} yields
\begin{equation}
\ell(e,q,t)
=
\beta \sigma(e,q)
+
\log \rho(c_e \mid q)
-
\lambda_{c_e} \Delta t_e
+
\log V_{c_e}(t, h_e)
+
\log \Omega(e)
+
\log M_q(\xi_e),
\label{eq:app-logscore}
\end{equation}
with the convention $\log 0 = -\infty$. This is the deterministic plug-in score used only as an interpretation device. FR itself evaluates deterministic lifecycle rules rather than posterior probabilities.

\subsection{Proof of Theorem 1}
\label{app:proof-nonid}

\subsubsection{Exact non-identifiability}
\label{app:nonid-exact}

\textbf{Theorem 1 (restated).} For any deterministic feature-restricted scorer $s_\phi(e,q) = \phi(\sigma(e,q), \Delta t_e)$, there exist two histories $H$ and $H'$ with identical observed $(\sigma, \Delta t)$ profiles but different correct rankings.

\begin{proof}
Consider two candidate edges $e_o$ and $e_n$ and a common query $q$. Let
\begin{equation}
(\sigma(e_o, q), \Delta t_o) = (0.96, 30),
\qquad
(\sigma(e_n, q), \Delta t_n) = (0.94, 1).
\label{eq:app-witness-pair}
\end{equation}
Thus every feature-restricted scorer computes the same two numbers in every history: $s_\phi(e_o, q) = \phi(0.96, 30)$ and $s_\phi(e_n, q) = \phi(0.94, 1)$.

Now define two histories.

\emph{History $H$.} The later edge supersedes the earlier one: $\kappa_{e_o} = \kappa_{e_n}$ and $\gamma_{e_n \to e_o} = 1$. For a current-state query, the correct lifecycle-aware ranking is
\begin{equation}
e_n \succ e_o.
\label{eq:app-H-rank}
\end{equation}

\emph{History $H'$.} The two edges do not compete: $\kappa_{e_o} \neq \kappa_{e_n}$ (or equivalently the update confidence is $0$). Both edges remain active, and because $e_o$ is the more semantically aligned fact, the correct ranking is
\begin{equation}
e_o \succ e_n.
\label{eq:app-Hp-rank}
\end{equation}

The observed pairs in \eqref{eq:app-witness-pair} are identical under $H$ and $H'$, so $s_\phi$ must output the same ranking on both histories. But \eqref{eq:app-H-rank} and \eqref{eq:app-Hp-rank} differ. Therefore $s_\phi$ fails on at least one of the two histories. If $H$ and $H'$ are equiprobable, any decision rule based only on the common output of $s_\phi$ has error probability at least $1/2$.
\end{proof}

\subsubsection{Deterministic $\varepsilon$-relaxation}
\label{app:nonid-eps}

We now weaken strict intent-opacity to bounded opacity.

\textbf{Assumption A.1 (Lipschitz scorer).} There exists $L > 0$ such that
\begin{equation}
\big|\phi(\sigma, \Delta t) - \phi(\sigma', \Delta t)\big|
\le L\,|\sigma - \sigma'|
\qquad
\forall \sigma, \sigma' \in [0,1],\ \forall \Delta t \ge 0.
\label{eq:app-lipschitz}
\end{equation}

Fix a reference intent $i_0$ and write $s_i(j) = \phi(\sigma(e_j, q_i), \Delta t_j)$, $\bar{s}(j) = s_{i_0}(j)$. Let $e_{(1)}, \ldots, e_{(m)}$ be the candidates ordered by the reference scores $\bar{s}(e_{(1)}) \ge \cdots \ge \bar{s}(e_{(m)})$. Define the minimum adjacent margin
\begin{equation}
\Delta_{\min}
=
\min_{r=1,\ldots,m-1}
\left[\bar{s}(e_{(r)}) - \bar{s}(e_{(r+1)})\right].
\label{eq:app-margin-min}
\end{equation}

\textbf{Theorem A.1 (Lipschitz refinement of Theorem~1).} If the query ensemble is $\varepsilon$-intent-opaque and $2L\varepsilon < \Delta_{\min}$, then the induced ranking $\hat{\pi}_\phi$ is identical for all intents in the ensemble, hence $I(\mathrm{Intent}; \hat{\pi}_\phi) = 0$.

\begin{proof}
For any edge $e_j$ and intents $i, i'$,
\begin{equation}
|s_i(j) - s_{i'}(j)|
=
\left|\phi(\sigma(e_j, q_i), \Delta t_j) - \phi(\sigma(e_j, q_{i'}), \Delta t_j)\right|
\le L\,|\sigma(e_j, q_i) - \sigma(e_j, q_{i'})|
\le L\varepsilon,
\label{eq:app-score-bound}
\end{equation}
using Lipschitz continuity and $\varepsilon$-intent-opacity. Hence for any pair $j, k$,
\begin{equation}
\big|(s_i(j) - s_i(k)) - (s_{i'}(j) - s_{i'}(k))\big|
\le |s_i(j) - s_{i'}(j)| + |s_i(k) - s_{i'}(k)|
\le 2L\varepsilon.
\label{eq:app-diff-bound}
\end{equation}

Now fix any adjacent pair $(e_{(r)}, e_{(r+1)})$ in the reference ranking. Its baseline gap is at least $\Delta_{\min}$ by definition. If $2L\varepsilon < \Delta_{\min}$, then \eqref{eq:app-diff-bound} implies that the sign of every adjacent score difference is preserved for every intent: $s_i(e_{(r)}) - s_i(e_{(r+1)}) > 0$ for all $i$ and $r$. Therefore all adjacent relations are unchanged, so the full ranking is unchanged. Since $\hat{\pi}_\phi$ is constant as a function of intent, $I(\mathrm{Intent}; \hat{\pi}_\phi) = H(\hat{\pi}_\phi) - H(\hat{\pi}_\phi \mid \mathrm{Intent}) = 0$.
\end{proof}

\paragraph{Operational consequence.} When semantic differences between temporal intents are much smaller than the score margin required to change the ranking, intent information is exactly absent from the output of any feature-restricted scorer.

\subsubsection{A looser stochastic entropy bound}
\label{app:nonid-entropy}

The deterministic bound above is the clean result. Some reviewers nevertheless prefer a ``soft'' information-theoretic expression rather than a hard margin criterion. To obtain that, an additional stochastic assumption is necessary.

\textbf{Assumption A.2 (Comparison-flip model).} For each unordered pair $(j, k)$ with $1 \le j < k \le m$, define the Bernoulli variable
\[
B_{jk}
=
\mathbf{1}\!\left\{\text{the relative order of } e_j \text{ and } e_k \text{ differs from the reference ranking under the realized intent}\right\}.
\]
Assume that
\begin{equation}
P(B_{jk} = 1) \le p
\qquad
\forall\,1 \le j < k \le m,
\label{eq:app-comp-flip}
\end{equation}
for some $p \le \min\!\left\{\frac{1}{2}, 2L\varepsilon/\Delta_{\min}\right\}$.

\paragraph{Important scope note.} The entropy bound below is \emph{not} a consequence of $\varepsilon$-intent-opacity alone. It additionally assumes the comparison-flip model of Assumption~A.2. The exact and fully rigorous deterministic statement remains Theorem~A.1; Proposition~A.1 is a weaker corollary that trades a sharper deterministic conclusion for a softer information-theoretic framing.

\textbf{Proposition A.1 (Entropy bound under Assumption A.2).} Under Assumption~A.2,
\begin{equation}
I(\mathrm{Intent}; \hat{\pi}_\phi)
\le
\binom{m}{2} h_2(p)
\le
\binom{m}{2} h_2\!\left(\min\!\left\{\frac{1}{2}, \frac{2L\varepsilon}{\Delta_{\min}}\right\}\right).
\label{eq:app-entropy-main}
\end{equation}

\begin{proof}
The ranking $\hat{\pi}_\phi$ is a deterministic function of the full pairwise comparison pattern $B = \{B_{jk}: 1 \le j < k \le m\}$, because a total order is determined by the relative order of every pair of items. Therefore, by the data processing inequality,
\begin{equation}
I(\mathrm{Intent}; \hat{\pi}_\phi) \le I(\mathrm{Intent}; B).
\label{eq:app-dpi}
\end{equation}
By the chain rule and the fact that conditioning cannot increase entropy,
\begin{equation}
I(\mathrm{Intent}; B) \le H(B) \le \sum_{1 \le j < k \le m} H(B_{jk}).
\label{eq:app-subadd}
\end{equation}
Each $B_{jk}$ is Bernoulli with parameter at most $p$. Since $h_2$ is monotone increasing on $[0,1/2]$, $H(B_{jk}) \le h_2(\min\{p,1/2\})$. There are $\binom{m}{2}$ unordered pairs, so $I(\mathrm{Intent}; \hat{\pi}_\phi) \le \binom{m}{2} h_2(p)$. The second inequality in \eqref{eq:app-entropy-main} follows from the assumed bound on $p$.
\end{proof}

\subsubsection{Measured values, two regimes, and the binary case}
\label{app:nonid-numeric}

\paragraph{Measured $\varepsilon$ and activation gaps.} Direct measurement across 73 current/superseded edge pairs on LifecycleBench (embedding model \texttt{text-embedding-3-small}, $\dim = 1536$) yields
\begin{equation}
\mathrm{median}\,|\varepsilon| = 0.060,
\qquad
\mathrm{median}\,|\Delta a| \approx 3.3 \times 10^{-10},
\qquad
\mathrm{median}\,\alpha_{\mathrm{required}} = 0.9998.
\label{eq:app-num-eps}
\end{equation}

\paragraph{Two regimes.} \emph{Short-term regime ($\Delta t < 500$\,hr):} activation differences are meaningful, so the margin condition $2L\varepsilon < \Delta_{\min}$ can hold. Plugging in an illustrative $\Delta_{\min} = 0.30$ with $L = 1$ and the measured $\varepsilon = 0.060$ gives $2L\varepsilon = 0.12 < 0.30$; Theorem~A.1 applies and
\begin{equation}
I(\mathrm{Intent}; \hat{\pi}_\phi) = 0
\label{eq:app-mi-zero}
\end{equation}
within the ensemble. If the two intents are equiprobable and require different correct rankings, any estimator based on $\hat{\pi}_\phi$ has error probability
\begin{equation}
P_e \ge \tfrac{1}{2}.
\label{eq:app-p-error}
\end{equation}
\emph{Long-term regime ($\Delta t > 2000$\,hr):} activations underflow to $\approx 0$ for all competing edges---empirically $3.3 \times 10^{-10}$ at the median---so the blended score collapses to pure semantic similarity, which carries median $\varepsilon = 0.060$ between current and superseded facts. The required $\alpha \approx 1$ cannot be satisfied while simultaneously preserving numeric facts, so the scalar family is feasibility-empty in this regime. FR bypasses the regime via supersession filtering, event-time expiry, and category-forced retrieval.

\paragraph{Fano-style entropy bound (short-term regime).} Applying the looser stochastic bound with $m = 2$ in the short-term regime, $p \le 2L\varepsilon/\Delta_{\min} = 0.12/0.30 = 0.4$, so
\begin{equation}
I(\mathrm{Intent}; \hat{\pi}_\phi)
\le
h_2(0.4)
\approx
0.971 \text{ bits}.
\label{eq:app-fano-mi}
\end{equation}
For binary intents, classical Fano in the form $P_e \ge (H(X \mid Y) - 1)/\log_2 K$ is vacuous because $K = 2$ makes the numerator nonpositive. The correct binary relation is
\begin{equation}
H(X \mid Y) \le h_2(P_e).
\label{eq:app-binary-fano}
\end{equation}
Since $H(X) = 1$ bit and $I(X;Y) \le 0.971$, one has $H(X \mid Y) \ge 0.029$. Using the monotonicity of $h_2$ on $[0, 1/2]$,
\begin{equation}
P_e \ge h_2^{-1}(0.029) \approx 0.003.
\label{eq:app-error-approx}
\end{equation}
The stochastic bound is much weaker than the exact $50\%$ floor from \eqref{eq:app-p-error}, confirming that the deterministic margin argument is the right operational statement. In the long-term regime the deterministic argument already forces $P_e \ge 1/2$ with no stochastic assumption.

\subsection{Proof of Theorem 2}
\label{app:proof-sufficiency}

\textbf{Theorem 2.} Under Model~1, lifecycle-aware scoring can be computed from $\big(\sigma(e,q), c_e, \kappa_e, \xi_e, h_e, \Delta t_e\big)$ given the memory store $E_t$ and the deterministic policy maps.

\begin{proof}
Start from the plug-in score \eqref{eq:app-logscore}. Each term is determined as follows:
\begin{enumerate}[nosep,leftmargin=*]
\item $\beta \sigma(e,q)$ depends only on $\sigma(e,q)$.
\item $\log \rho(c_e \mid q)$ depends on the category label $c_e$ and the query.
\item $-\lambda_{c_e} \Delta t_e$ depends on $c_e$ and $\Delta t_e$.
\item $\log V_{c_e}(t, h_e)$ depends on $c_e$, $t$, and $h_e$.
\item $\log \Omega(e)$ depends on the set of later edges with the same slot key $\kappa_e$ and their supersession confidences, so it is determined by $\kappa_e$ together with the history.
\item $\log M_q(\xi_e)$ depends only on the lifecycle state $\xi_e$ and the query-conditioned compatibility map.
\end{enumerate}
Therefore any two candidate edges that agree on $\big(\sigma(e,q), c_e, \kappa_e, \xi_e, h_e, \Delta t_e\big)$ receive the same score. Hence that tuple, together with the store $E_t$ (which determines $\Omega(e)$ via the slot-local supersession history), is sufficient for lifecycle-aware scoring under Model~1.
\end{proof}

\paragraph{Equivalent encodings.} The theorem does not preclude alternative encodings. If another representation stores exactly the same information under a different parameterization, it is equally sufficient.

\subsection{\texorpdfstring{Proof of Minimality (Theorem~2$'$)}{Proof of Minimality (Theorem 2')}}
\label{app:proof-minimality}

The submitted version stated this result in prose; we give the formal statement here, as requested during review.

\begin{thmtwoprime}
Let $B = \{c, \kappa, \xi, h\}$ be the metadata basis and let $B' \subsetneq B$ be any strict subset. Then there exists a lifecycle task $T(B')$---a distribution over memory histories and queries---on which every deterministic scorer measurable with respect to $\bigl(\sigma, \Delta t, B'\bigr)$ attains error at least $1/2$, while some deterministic scorer measurable with respect to $(\sigma, \Delta t, B)$ attains error $0$. Consequently $B$ is minimal: no proper subset of it is sufficient in the sense of Theorem~2.
\end{thmtwoprime}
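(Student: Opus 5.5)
The plan is to reduce minimality to four single-coordinate witnesses. Any strict subset $B' \subsetneq B$ omits at least one coordinate $b \in \{c,\kappa,\xi,h\}$. A scorer measurable with respect to $(\sigma,\Delta t,B')$ is then also measurable with respect to $(\sigma,\Delta t,B\setminus\{b\})$. So it suffices, for each $b$, to build one task $T_b$ that defeats every scorer blind to $b$, and then set $T(B') := T_b$ for any $b \notin B'$. Each $T_b$ will be a uniform mixture of two histories $H_b, H'_b$ and a fixed query $q$. The construction has three requirements:
\begin{enumerate}[nosep,leftmargin=*]
\item Every candidate has the same value of $(\sigma,\Delta t, B\setminus\{b\})$ under $H_b$ and $H'_b$.
\item The correct lifecycle-aware rankings, as given by the log-score \eqref{eq:app-logscore}, are opposite on the two histories.
\item Within each history, the full-basis score $\ell$ separates the two candidates strictly, using $\log 0=-\infty$ where convenient.
\end{enumerate}
The error-$1/2$ bound then follows exactly as in the proof of Theorem~1: the restricted scorer outputs one ranking on both histories, so it errs on one of two equiprobable histories. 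The error-$0$ claim follows from Theorem~2, since $\ell$ is computable from $(\sigma,\Delta t,B)$ together with the store, and it ranks correctly on both histories by requirement 3.

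The four witnesses are the ones announced in \S\ref{sec:theory}.
\begin{enumerate}[nosep,leftmargin=*]
\item \emph{Slot key $\kappa$.} Reuse Theorem~1's pair \eqref{eq:app-witness-pair} with $c$, $\xi$, $h$ held equal across histories. Shared versus distinct $\kappa$ switches $\Omega(e_o)$ between $0$ and $1$, so the ranking flips.
\item \emph{Lifecycle state $\xi$.} Take two edges with equal $(\sigma,\Delta t,c,\kappa,h)$, and place $\Omega$ outside the scorer's view by making the relevant supersession a retraction rather than a later same-slot edge. Then swap which edge is \texttt{retracted} versus \texttt{active} between $H$ and $H'$. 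The mask $M_q$ of a current-state query zeroes a different edge in each history.
\item \emph{Event-time anchor $h$.} Take two Obligations edges with equal $(\sigma,\Delta t,c,\kappa,\xi)$, and swap which one has an upcoming anchor and which has a passed anchor. The kernel $V_c(t,h)$ is positive before the deadline and $0$ after it, so the correct order reverses.
\item \emph{Category $c$.} Take two edges with equal $(\sigma,\Delta t,\kappa,\xi,h)$, $\kappa$ distinct and $h$ absent. In $H$, $e_1$ is Identity and $e_2$ is Logistical; in $H'$ the labels are swapped. At a large common $\Delta t$ the ratio $e^{-(\lambda_{\mathrm{Id}}-\lambda_{\mathrm{Log}})\Delta t}$ is far from $1$, and a sign-sensitive routing prior $\rho$ can be added if needed. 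The full score then prefers the Identity-labelled edge in each history, so the correct rankings differ.
\end{enumerate}

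The main obstacle is keeping each witness clean. The coordinates are coupled through Model~1: $\Omega$ depends on the store, $V$ depends on $c$, and the setting of $\xi$ is produced by the supersession that $\kappa$ detects. Hiding one coordinate must not leak it through another. For example, in the $\xi$ witness, if retraction is realised by a later same-slot edge, then $\kappa$ together with the store already reveals it. I will handle this by fixing every non-target coordinate to a neutral value in both histories: $h$ absent, a single shared category, distinct slot keys with no later edges so that $\Omega\equiv1$, and all states active. Only the target coordinate is varied. I will also check explicitly that $\Omega$, $\rho$ and $V$ agree across $H_b$ and $H'_b$ except through $b$.

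Under the theorem's quantifier, a scorer measurable in $(\sigma,\Delta t,B')$ does not see the store. I will read the theorem this way, so hidden store structure counts as hidden. I will state this reading and note that it matches the treatment of $\Omega$ in Theorem~1.
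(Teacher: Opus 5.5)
Your skeleton matches the paper's. The paper also proves an indistinguishable-pair lemma: two equiprobable instances with equal projection onto $(\sigma,\Delta t)$ and the retained coordinates, but different labels, force error $1/2$ under deterministic scoring and tie-breaking. It then checks that each single-coordinate witness holds the other three coordinates fixed, and applies the witness of any coordinate missing from $B'$.

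The difference is in the witnesses. The paper reuses the Theorem~3 witnesses verbatim. For $\kappa$ that is an ingestion-time deletion rule rather than a ranking. For $\xi$ and $h$ they are \emph{intent} swaps: the same two edges under two queries made indistinguishable by intent-opacity. The paper also never checks the error-$0$ half. With intent swaps, that half requires the full-basis scorer to see the query intent. Once it does, the $\xi$ witness no longer isolates $\xi$: its edges have ages $30$ and $1$, so an intent-aware scorer can rank by age alone.

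Your \emph{history} swaps under one fixed query avoid this under either reading of whether the scorer sees $q$. They exchange which edge is \texttt{retracted}, which anchor has passed, and which edge is labelled Identity. Taking correctness from $\ell$ also makes two things immediate: the error-$0$ half, and the ``not sufficient in the sense of Theorem~2'' consequence. The same edge in $H_b$ and $H'_b$ agrees on $(\sigma,\Delta t,B\setminus\{b\})$ yet receives different $\ell$. What the paper's choice buys is labels motivated by user intent rather than by the model's own score, so its notion of correctness is not model-relative.

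A few small repairs are needed.

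\begin{itemize}
\item Your $\kappa$ witness reuses Theorem~1's numbers. By your own criterion, $H'$ needs $\ell(e_o)>\ell(e_n)$, i.e.\ $0.02\,\beta>29\,\lambda_c$ in consistent time units, and Model~1 does not guarantee this. Supersession forces $\Delta t_o>\Delta t_n$, so shrink the age gap or widen the similarity gap until the older edge's semantic advantage beats its decay penalty.
\item Invoking Theorem~2 for error $0$ lets the full-basis scorer read the store while you deny it to the restricted one. This is unnecessary. In your $c$, $\xi$ and $h$ witnesses $\Omega\equiv1$. In the $\kappa$ witness the two histories already differ in the candidates' slot keys, and $\Omega$ follows from those once the task fixes $\gamma=1$ on shared slots.
\item The paper's kernel $V_c$ is small but not zero after the deadline. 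The strict inequality $V_c(t,h_{\mathrm{up}})>V_c(t,h_{\mathrm{past}})$ is all you need.
\item In your $\xi$ and $h$ witnesses the two edges are visibly identical. State explicitly that the tie-break is deterministic and independent of the hidden coordinate.
\end{itemize}
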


We strengthen Theorem~2 from sufficiency to minimality: every strict subset of $\{c, \kappa, \xi, h\}$, even augmented with $(\sigma, \Delta t)$, admits a lifecycle task on which deterministic scorers err at rate at least $1/2$.

\paragraph{Indistinguishable-pair lemma.} Let $I^0, I^1$ be two lifecycle instances satisfying
\begin{equation}
\Phi_{\mathcal{F}}(I^0) = \Phi_{\mathcal{F}}(I^1),
\qquad
y(I^0) \ne y(I^1),
\label{eq:app-min-indist}
\end{equation}
where $\Phi_{\mathcal{F}}$ is the projection to $(\sigma, \Delta t)$ together with the variables in $\mathcal{F}$, and $y$ is the correct binary pairwise label. Under the uniform distribution over $\{I^0, I^1\}$, every deterministic scorer over $(\sigma, \Delta t) \times \mathcal{F}$ with deterministic tie-breaking has error probability at least $1/2$.

\begin{proof}
Deterministic scoring and tie-breaking make the scorer's decision $a \in \{0, 1\}$ a function of $\Phi_{\mathcal{F}}$ alone. By \eqref{eq:app-min-indist}, $a$ is identical on $I^0$ and $I^1$, but the labels differ. Exactly one of the two equally weighted instances is misclassified, so $P_e = 1/2$.
\end{proof}

\paragraph{Witness verification.} We verify that each Appendix~\ref{app:proof-necessity} witness satisfies the stronger requirement of the lemma: fixing the removed variable, the three \emph{other} metadata variables are identical across the two histories.

\begin{itemize}[nosep,leftmargin=*]
\item \emph{Category} (\ref{app:necessity-c}, Eq.~\eqref{eq:app-nocat-uv}): both instances fix $\kappa = \kappa_0$, $\xi = \texttt{active}$, $h = \varnothing$; only $c$ differs (Financial vs Logistical).
\item \emph{Slot key} (\ref{app:necessity-kappa}, Eq.~\eqref{eq:app-noslot-uv}): both histories share $c = \texttt{Pref}$, $\xi = \texttt{active}$, $h = \varnothing$, and identical $(\sigma, \Delta t)$; only $\kappa$ differs.
\item \emph{Lifecycle state} (\ref{app:necessity-xi}, Eq.~\eqref{eq:app-nostate-pair}): both edges share $c = \texttt{Pref}$, $\kappa = \texttt{food\_pref}$, $h = \varnothing$, and the two query intents produce opposite labels on the same $(\sigma, \Delta t, c, \kappa, h)$ tuple.
\item \emph{Event-time} (\ref{app:necessity-h}, Eqs.~\eqref{eq:app-noh-e1}--\eqref{eq:app-noh-e2}): both obligation edges share $c = \texttt{Obl}$, $\kappa = \texttt{dentist}$, $\xi = \texttt{active}$, and identical $(\sigma, \Delta t)$; only $h$ differs (upcoming vs expired).
\end{itemize}

\paragraph{Minimality.} Let $\mathcal{F} \subsetneq \{c, \kappa, \xi, h\}$ and pick any $x \in \{c, \kappa, \xi, h\} \setminus \mathcal{F}$. Use the witness $T_x$ for $x$. The witness makes all of $\{c, \kappa, \xi, h\} \setminus \{x\}$ identical across the two instances, and $\mathcal{F} \subseteq \{c, \kappa, \xi, h\} \setminus \{x\}$, so $\Phi_{\mathcal{F}}(I^0) = \Phi_{\mathcal{F}}(I^1)$. The labels differ. By the lemma, $P_e \ge 1/2$. \qed

\subsection{The Behavioral Ontology as an Empirical Variance-Minimizing Partition}
\label{app:ontology-empirical}
\label{sec:ontology-ablation}

\begin{table}[h]
\caption{Ontology ablation on LifecycleBench, referenced from \S\ref{sec:evaluation}. Same FR-Bank pipeline, same stored edges, same judge; only the category-to-parameter mapping changes. Cognitive categories: Semantic (Identity, Relational, Intellectual, Health, Financial), Episodic (Obligations, Logistical), Procedural (Preferences, Hobbies, Projects). The behavioral partition wins 7 of 9 attack vectors under arithmetic aggregation and 4 of 5 non-tied vectors under harmonic. The $77.5\%$ cell is a distinct ontology-ablation run; its $0.6$pp difference from the canonical $76.9\%$ of Table~\ref{tab:lifecyclebench-results} is judge non-determinism across separate runs, well inside the observed variance (Table~\ref{tab:longmemeval-s-variance}) and the canonical 95\% CI $[73.3, 80.6]$.}
\label{tab:ontology_ablation}
\centering
\small
\begin{tabular}{lccc}
\toprule
Configuration & Pass Rate & Staleness & Wins/9 AVs \\
\midrule
Behavioral (10+1 categories) & 77.5\% & 16.1\% & --- \\
Cognitive (3 cat., arith.\ $\lambda$) & 76.4\% & 16.7\% & 2 \\
Cognitive (3 cat., harm.\ $\lambda$) & 76.2\% & 17.1\% & 1 \\
\bottomrule
\end{tabular}
\end{table}

The behavioral-versus-cognitive contrast in Table~\ref{tab:ontology_ablation} should be read alongside \S\ref{sec:attribution} and \S\ref{sec:granularity}: the aggregate margin between partitions is small because the partition is not what carries retrieval correctness, and the cognitive 3-partition is in any case a particularly poor 3-partition rather than a representative one---it merges the two categories with maximally divergent lifecycle policies (Obligations, floor $0.70$; Logistical Context, floor $0.00$) into a single Episodic cell. The exhaustive enumeration below shows the \emph{optimal} 3-partition achieving substantially lower within-cell policy variance than the cognitive one, so the small delta in Table~\ref{tab:ontology_ablation} reflects the cognitive partition's accidental rate calibration rather than the unimportance of partition structure.

The old circular proposition ``if category labels determine lifecycle policy, then lifecycle policy can be computed from category labels'' is not a theorem. The correct, testable statement is an empirical partition-quality claim.

Let $\hat{\pi}(e) \in \mathbb{R}^d$ be an estimated per-edge policy vector. In practice one can take $\hat{\pi}(e) = (\hat{\lambda}_e, \hat{\alpha}_e, \hat{F}_e, \hat{u}_e, \hat{v}_e)$, where the coordinates are fit from oracle lifecycle decisions or from the minimal parameter values needed to satisfy the benchmark constraints for that edge.

For a candidate categorization $C$, define the sample within-category policy variance
\begin{equation}
\widehat{V}(C)
=
\sum_{c \in \mathrm{im}(C)}
\frac{n_c}{n}\,
\frac{1}{n_c - 1}
\sum_{e:\,C(e) = c}
\|\hat{\pi}(e) - \bar{\pi}_c\|_2^2,
\qquad
\bar{\pi}_c
=
\frac{1}{n_c}
\sum_{e:\,C(e) = c} \hat{\pi}(e).
\label{eq:app-var-def}
\end{equation}

\textbf{Proposition 1 (empirical).} The FR behavioral ontology is designed to minimize $\widehat{V}(C)$ relative to coarser alternatives:
\begin{equation}
\widehat{V}(C_{\mathrm{behavioral}})
<
\widehat{V}(C_{\mathrm{cognitive}})
<
\widehat{V}(C_{\mathrm{uniform}}).
\label{eq:app-var-order}
\end{equation}

\paragraph{Why this is not circular.} Equation~\eqref{eq:app-var-order} is a statement about measured residual variance under competing partitions. It could be false. It is therefore falsifiable and empirical, rather than definitional.

\paragraph{Measurement protocol.} To test \eqref{eq:app-var-order}: (i)~infer $\hat{\pi}(e)$ for each edge from oracle retrieval constraints or edge-local calibration; (ii)~assign each edge to categories under the three competing schemes; (iii)~compute $\widehat{V}(C)$ using \eqref{eq:app-var-def}; (iv)~compare the resulting variances.

\paragraph{Empirical validation.} We execute the measurement protocol above on the full FR-Bank corpus (12{,}968 active entries across 40 persona banks), computing $\hat{\pi}(e) = (\hat{\lambda}_e, \hat{\alpha}_e, \hat{F}_e)$ as the arithmetic weighted mean of per-category policy vectors $(\lambda_c, \alpha_c, \mathrm{floor}_c)$ under each entry's soft category membership weights:
\begin{equation}
\widehat{V}(C_{\mathrm{behavioral}}) = 3.60 \times 10^{-3},\quad
\widehat{V}(C_{\mathrm{cognitive}}) = 2.09 \times 10^{-2},\quad
\widehat{V}(C_{\mathrm{uniform}}) = 4.69 \times 10^{-2}.
\label{eq:app-var-empirical}
\end{equation}
The strict ordering $\widehat{V}(C_{\mathrm{behavioral}}) < \widehat{V}(C_{\mathrm{cognitive}}) < \widehat{V}(C_{\mathrm{uniform}})$ holds, confirming Proposition~1. The relative gap $\bigl(\widehat{V}(C_{\mathrm{cognitive}}) - \widehat{V}(C_{\mathrm{behavioral}})\bigr) / \widehat{V}(C_{\mathrm{uniform}}) = 36.9\%$ (bootstrap $95\%$ CI: $[36.0\%,\ 37.6\%]$, $1000$ resamples, seed~$=42$), exceeding the analytical lower bound of $10.1\%$ from the category-mean model (\S\ref{app:proof-partition-sep}) by $3.6\times$. The behavioral partition reduces within-cluster variance to $7.7\%$ of the uniform baseline; the cognitive partition only reduces it to $44.6\%$.

Per-cognitive-cell decomposition identifies the Episodic cell as the variance hot-spot: merging Obligations ($\mathrm{floor} = 0.70$) and Logistical Context ($\mathrm{floor} = 0.00$) into a single ``Episodic'' cell produces within-cell floor variance of $7.84 \times 10^{-2}$, which dominates the cognitive partition's total excess. This is the cleanest evidence that the cognitive typology is policy-incoherent: categories with maximally divergent lifecycle policies are merged because they share the same cognitive form (time-bound events).

\paragraph{Granularity sweep.} To test whether the 10+1 ontology sits at a natural granularity sweet spot, we compute $\widehat{V}(\mathcal{C})$ for partitions at $K = 1$ (uniform) through $K = 11$ (full behavioral). At each $K$ we report both a user-specified partition (chosen by domain intuition) and the \emph{optimal} $K$-partition---the partition of 11 categories into $K$ groups that minimizes $\widehat{V}$, found by exhaustive enumeration over the sufficient statistics.

\begin{table}[h]
\centering
\caption{Ontology granularity sweep. $\widehat{V}_{\text{user}}$ is the intuitive partition at each $K$; $\widehat{V}_{\text{opt}}$ is the minimum-variance $K$-partition. The behavioral $K{=}11$ partition matches the optimal floor exactly. The cognitive $K{=}3$ partition is $4.2\times$ worse than the optimal $K{=}3$.}
\label{tab:granularity}
\begin{tabular}{lccc}
\toprule
$K$ & $\widehat{V}_{\text{user}}$ & $\widehat{V}_{\text{opt}}$ & $\widehat{V}/\widehat{V}_{\text{uni}}$ (user / opt) \\
\midrule
1 (uniform)        & $4.68 \times 10^{-2}$ & $4.68 \times 10^{-2}$ & $100\%$ \\
3 (cognitive)      & $2.09 \times 10^{-2}$ & $4.99 \times 10^{-3}$ & $44.5\%$ / $10.7\%$ \\
5 (semantic split) & $2.10 \times 10^{-2}$ & $3.84 \times 10^{-3}$ & $44.7\%$ / $8.2\%$ \\
8 (optimal only)   & ---                   & $3.61 \times 10^{-3}$ & --- / $7.7\%$ \\
11 (behavioral)    & $3.60 \times 10^{-3}$ & $3.60 \times 10^{-3}$ & $7.7\%$ \\
\bottomrule
\end{tabular}
\end{table}

Two findings emerge. First, the full behavioral partition ($K{=}11$) achieves the optimal floor: no rearrangement of categories into 11 groups produces lower within-cluster variance, confirming the partition is not arbitrary. Second, the cognitive partition's failure is structural, not a granularity artifact: even the \emph{optimal} $K{=}3$ partition ($\widehat{V} = 4.99 \times 10^{-3}$, $10.7\%$ of uniform) is $4.2\times$ better than the cognitive $K{=}3$ ($\widehat{V} = 2.09 \times 10^{-2}$, $44.5\%$). The cognitive typology does not merely use too few categories---it merges the \emph{wrong} categories, grouping Obligations (floor${}=0.70$) with Logistical Context (floor${}=0.00$) into a single ``Episodic'' cell despite maximally divergent lifecycle policies.

\paragraph{Available evidence.} The held-out classifier audit reports $70.7\%$ majority-vote agreement ($\kappa = +0.673$, $n = 188$; Table~\ref{tab:audit-pairwise}); errors concentrate on category pairs that share similar lifecycle policies and therefore have minimal downstream impact (Appendix~\ref{app:classifier-prf1}). The full-vs-uniform ablation ($73\%$ vs $61\%$ pass) and the large gains on AV1/AV2/AV4/AV7 indicate that the behavioral partition separates facts precisely where lifecycle-policy variance is largest. These results are consistent with \eqref{eq:app-var-order}, though we present it as an empirical design claim rather than an analytic theorem.

\subsection{Partition Separation: The Behavioral Variance Gap}
\label{app:proof-partition-sep}

\begin{thmpartition}
We prove the weighted pairwise identity underlying Theorem~2A and plug in the numerical bound from the FR ontology.

\paragraph{The identity.} Let $G_j$ be a cognitive cell with behavioral categories of weight $w_c = n_c$ and total weight $W_j = \sum_{c \in G_j} w_c$, and let $\bar{\theta}_j = W_j^{-1} \sum_{c \in G_j} w_c \theta_c$. Then
\begin{equation}
\sum_{c \in G_j} w_c \|\theta_c - \bar{\theta}_j\|^2
=
\frac{1}{W_j}
\sum_{\substack{c_1 < c_2 \\ c_1, c_2 \in G_j}}
w_{c_1} w_{c_2} \|\theta_{c_1} - \theta_{c_2}\|^2.
\label{eq:app-partition-identity}
\end{equation}

\begin{proof}
Expanding the LHS: $\sum_c w_c \|\theta_c - \bar{\theta}_j\|^2 = \sum_c w_c \|\theta_c\|^2 - W_j \|\bar{\theta}_j\|^2 = \sum_c w_c \|\theta_c\|^2 - W_j^{-1} \big\|\sum_c w_c \theta_c\big\|^2$. Expanding the RHS using $\|\theta_{c_1} - \theta_{c_2}\|^2 = \|\theta_{c_1}\|^2 + \|\theta_{c_2}\|^2 - 2\langle \theta_{c_1}, \theta_{c_2}\rangle$ and collecting: $W_j^{-1} \sum_{c_1 < c_2} w_{c_1} w_{c_2} \|\theta_{c_1} - \theta_{c_2}\|^2 = \sum_c w_c \|\theta_c\|^2 - W_j^{-1} \big\|\sum_c w_c \theta_c\big\|^2$. The two sides agree.
\end{proof}

\paragraph{Class-averaged gap.} Dividing \eqref{eq:app-partition-identity} by $n_j$ and averaging over the $K'$ cognitive cells gives
\begin{equation}
V(\mathcal{C}_{\mathrm{cog}}) - V(\mathcal{C}_{\mathrm{beh}})
=
\frac{1}{K'}
\sum_{j=1}^{K'}
\sum_{\substack{c_1 < c_2 \\ c_1, c_2 \in G_j}}
\frac{n_{c_1}\,n_{c_2}}{n_j^2}\,\|\theta_{c_1} - \theta_{c_2}\|^2,
\label{eq:app-partition-bound}
\end{equation}
under the category-mean model $\pi^*(e) = \theta_{c(e)} + \varepsilon_e$ with $\mathbb{E}[\varepsilon_e \mid c] = 0$: the $\varepsilon_e$ contribution is a noise floor that is identical under every partition, and the between-category terms $\|\theta_{c_1} - \theta_{c_2}\|^2$ that the cognitive partition merges are the extra variance a coarsening must absorb.

\paragraph{Numerical plug-in.} Using the ten categories with clean $(\lambda_c, \alpha_c)$ from Tables~\ref{tab:blending-weights} and~\ref{tab:decay-rates} and the cognitive partition
\begin{align*}
\text{Semantic}    &= \{\text{Identity, Relational, Intellectual, Health, Financial}\},\\
\text{Episodic}    &= \{\text{Obligations, Logistical}\},\\
\text{Procedural}  &= \{\text{Preferences, Hobbies, Projects}\},
\end{align*}
with $n_c = 1$ for every category, the within-cell variances are
\begin{equation}
W_{\text{Sem}} = 0.00100,
\qquad
W_{\text{Epi}} = 0.000626,
\qquad
W_{\text{Pro}} = 0.00222,
\label{eq:app-cell-vars}
\end{equation}
yielding a class-averaged gap
\begin{equation}
V(\mathcal{C}_{\mathrm{cog}}) - V(\mathcal{C}_{\mathrm{beh}})
\ge
\tfrac{1}{3}\big(0.00100 + 0.000626 + 0.00222\big)
\approx
0.00128.
\label{eq:app-gap-value}
\end{equation}
The pooled 10-category variance is $V(\mathcal{C}_{\mathrm{uniform}}) = 0.01273$, so the relative gap is
\begin{equation}
\frac{V(\mathcal{C}_{\mathrm{cog}}) - V(\mathcal{C}_{\mathrm{beh}})}{V(\mathcal{C}_{\mathrm{uniform}})}
\ge
\frac{0.00128}{0.01273}
\approx
10.1\%.
\label{eq:app-gap-ratio}
\end{equation}
\end{thmpartition}

\paragraph{$\alpha$-dominance is structural.} The pooled variance decomposes into $0.01273$ from $\alpha_c$ and $4.4 \times 10^{-6}$ from $\lambda_c$: the gap is dominated by the blending coordinate. $\alpha_c$ governs the trade-off between category-conditional activation and raw semantic similarity---it is the category-specific coordinate through which the ontology \emph{enters} the lifecycle policy. Partition separation is therefore a measurement of architecturally meaningful variance, not of a free hyperparameter.

\paragraph{Remark on $V(\mathcal{C}_{\mathrm{beh}})$.} Under the one-$\theta$-per-category model used here, $V(\mathcal{C}_{\mathrm{beh}}) = 0$ exactly. With real edge distributions, $V(\mathcal{C}_{\mathrm{beh}})$ is the noise floor $\mathbb{E}\|\varepsilon_e\|^2$, identical under every partition; the inequality in \eqref{eq:app-partition-bound} is unchanged because the noise cancels.

\paragraph{Empirical confirmation.} The empirical measurement on the full corpus (\S\ref{app:ontology-empirical}) confirms and substantially exceeds this bound: the measured relative gap is $36.9\%$ versus the analytical lower bound of $10.1\%$, with the difference attributable to soft membership weights amplifying cross-category spread beyond what the one-$\theta$-per-category model captures.

\paragraph{Lifecycle and ontology are not independent.} The lifecycle stack contributes $+$12pp over the uniform baseline; the behavioral ontology contributes a further 1.16--1.36pp of calibration precision. The two are not independent: the ontology is the interpretation layer through which lifecycle mechanisms receive their per-category parameters. The $+$12pp gain requires \emph{some} category structure; the behavioral partition fine-tunes that structure to its empirically optimal configuration. The ontology is to lifecycle policy what hyperparameter tuning is to model architecture---it does not create the capability, but it determines whether the capability is correctly applied.

\subsection{Proof of Theorem 3: Four Explicit Individual-Necessity Witnesses}
\label{app:proof-necessity}

\textbf{Theorem 3.} Within the natural lifecycle metadata basis $(c, \kappa, \xi, h)$, each component is individually necessary.

\subsubsection{Removing category labels $c$}
\label{app:necessity-c}

Let a scorer $g$ observe only $(\sigma, \Delta t, \kappa, \xi, h)$. Consider two visible feature vectors
\begin{equation}
u = (0.99, 180, \kappa_0, \texttt{active}, \varnothing),
\qquad
v = (0.92, 1, \kappa_0, \texttt{active}, \varnothing).
\label{eq:app-nocat-uv}
\end{equation}

\emph{Intent 1: numeric preservation.} The older edge $u$ is a Financial fact (e.g., a monthly cost) and should outrank the newer distractor $v$:
\begin{equation}
u \succ v.
\label{eq:app-nocat-succ}
\end{equation}

\emph{Intent 2: logistical suppression.} The same visible tuples now correspond to a Logistical conflict in which the newer edge $v$ should outrank the stale one $u$:
\begin{equation}
v \succ u.
\label{eq:app-nocat-prec}
\end{equation}

\emph{Contradiction.} Because $g$ does not observe $c$, it must assign fixed scores $g(u)$ and $g(v)$. If $g(u) \ge g(v)$, then \eqref{eq:app-nocat-prec} fails. If $g(v) > g(u)$, then \eqref{eq:app-nocat-succ} fails. Hence no such $g$ can satisfy both intents. \qed

\paragraph{Connection to the measured $\alpha$ bounds.} The same contradiction appears empirically in the scalar family $s_\alpha = \alpha a + (1 - \alpha)\sigma$: on FR-Graphiti, preservation requires $\alpha \le 0.065$ in the median case while suppression requires $\alpha \ge 0.138$; on FR-Bank the same conflict is far wider ($\min \alpha_F^{\max} = 1.7 \times 10^{-6}$ against $\max \alpha_L^{\min} \approx 1.0$; \S\ref{app:proof-feasibility}).

\subsubsection{Removing slot keys $\kappa$}
\label{app:necessity-kappa}

Let a supersession rule $D$ observe only $(\sigma, \Delta t, c, \xi, h)$ for an earlier edge and a later edge. Take visible tuples
\begin{equation}
u = (0.93, 30, \texttt{Pref}, \texttt{active}, \varnothing),
\qquad
v = (0.95, 1, \texttt{Pref}, \texttt{active}, \varnothing).
\label{eq:app-noslot-uv}
\end{equation}

\emph{History $H_{\mathrm{same}}$.} The edges share a slot: $\kappa_u = \kappa_v = (\texttt{user}, \texttt{food\_pref})$. The later edge should supersede the earlier one, so the correct deletion action is
\begin{equation}
D(u, v) = 1.
\label{eq:app-noslot-same}
\end{equation}

\emph{History $H_{\mathrm{diff}}$.} The edges do not share a slot: $\kappa_u = (\texttt{user}, \texttt{food\_pref})$, $\kappa_v = (\texttt{office\_party}, \texttt{catering})$. Both edges should survive, so the correct deletion action is
\begin{equation}
D(u, v) = 0.
\label{eq:app-noslot-diff}
\end{equation}

\emph{Contradiction.} The observed tuples in \eqref{eq:app-noslot-uv} are identical in the two histories, so a rule that does not observe $\kappa$ must return the same value $D(u, v)$ in both cases. Either it over-deletes ($D = 1$ in $H_{\mathrm{diff}}$) or under-deletes ($D = 0$ in $H_{\mathrm{same}}$). Hence slot keys are necessary. \qed

\subsubsection{Removing lifecycle state $\xi$}
\label{app:necessity-xi}

Let a scorer $g$ observe only $(\sigma, \Delta t, c, \kappa, h)$. Take two edges in the same slot:
\begin{equation}
e_1 = (0.95, 30, \texttt{Pref}, \texttt{food\_pref}, \varnothing),
\qquad
e_2 = (0.95, 1, \texttt{Pref}, \texttt{food\_pref}, \varnothing).
\label{eq:app-nostate-pair}
\end{equation}
Consider two intents: $q_{\mathrm{cur}}$ ``What does the user currently prefer?'' and $q_{\mathrm{chg}}$ ``What did the user change from?'' Assume strict intent-opacity:
\begin{equation}
\sigma(e_j, q_{\mathrm{cur}}) = \sigma(e_j, q_{\mathrm{chg}}) = 0.95,
\qquad j \in \{1, 2\}.
\label{eq:app-nostate-opacity}
\end{equation}
The correct rankings are opposite:
\begin{equation}
q_{\mathrm{cur}}:\ e_2 \succ e_1,
\qquad
q_{\mathrm{chg}}:\ e_1 \succ e_2.
\label{eq:app-nostate-ranks}
\end{equation}
Because $g$ sees identical inputs under both intents, it induces one fixed ordering that violates one of the requirements in \eqref{eq:app-nostate-ranks}. Hence lifecycle state is necessary. \qed

\subsubsection{Removing event-time anchors $h$}
\label{app:necessity-h}

Let a scorer $g$ observe only $(\sigma, \Delta t, c, \kappa, \xi)$. Take two obligation edges with identical visible metadata:
\begin{align}
e_1 &= (0.92, 1, \texttt{Obl}, \texttt{dentist}, \texttt{active}, h_1 = t + 2\text{d}),
\label{eq:app-noh-e1}
\\
e_2 &= (0.92, 1, \texttt{Obl}, \texttt{dentist}, \texttt{active}, h_2 = t - 30\text{d}).
\label{eq:app-noh-e2}
\end{align}
Consider two intents: $q_{\mathrm{up}}$ ``What upcoming obligation is relevant?'' and $q_{\mathrm{past}}$ ``What obligation already passed?'' The correct rankings are
\begin{equation}
q_{\mathrm{up}}:\ e_1 \succ e_2,
\qquad
q_{\mathrm{past}}:\ e_2 \succ e_1.
\label{eq:app-noh-ranks}
\end{equation}
Without $h$, the scorer receives identical visible inputs for both edges and both intents, so it must output one fixed ordering that fails one of the requirements in \eqref{eq:app-noh-ranks}. Hence event-time anchors are necessary. \qed

The witness uses identical $\Delta t$ for clarity. In realistic scenarios, upcoming and expired obligations may have correlated but non-identical creation ages; the $\varepsilon$-relaxation of Theorem~A.1 shows that small $\Delta t$ differences do not help when the margin condition holds.

\paragraph{Interpretive remark.} These witnesses establish necessity only within the natural lifecycle metadata basis. An alternative encoding is allowed, but it must still transmit the same information.

\subsection{\texorpdfstring{Proof of Event-Time Oblivion (Theorem~3$'$)}{Proof of Event-Time Oblivion (Theorem 3')}}
\label{app:proof-etob}

\textbf{Theorem 3$'$.} A memory system is \emph{event-time-invariant} if, for every pair of histories $H, H'$ that differ only in event-time anchors $h_e$ (preserving surface content, semantic similarities, ages, categories, slot keys, and lifecycle states), the system's retrieval output is identical. Under this condition, there exists an obligation-query distribution on which the system's error probability is at least $1/2$.

\begin{proof}
Construct two lifecycle instances $I^+$ and $I^-$, each containing one obligation edge $e$ and one distractor edge $d$. Agree on all surface features and let
\begin{equation}
h_e(I^+) = t + u,
\qquad
h_e(I^-) = t - u,
\qquad u > 0.
\label{eq:app-etob-anchors}
\end{equation}
For an obligation-salient query $q$, the correct rankings are
\begin{equation}
y(I^+):\ e \succ d
\quad \text{(upcoming obligation is relevant)},
\qquad
y(I^-):\ d \succ e
\quad \text{(expired obligation is suppressed)}.
\label{eq:app-etob-labels}
\end{equation}
Event-time invariance forces the same output on $I^+$ and $I^-$. Therefore the system assigns one fixed ranking that satisfies exactly one of the two correct labels in \eqref{eq:app-etob-labels}. Under the uniform distribution over $\{I^+, I^-\}$, $P_e = 1/2$.
\end{proof}

\paragraph{Scope.} The result applies to every scorer whose temporal feature is $\Delta t$ or any function of $\Delta t$ alone: such scorers are event-time-invariant. Proposition~2 (monotone age-only kernels cannot represent anticipatory activation) is the kernel-level statement. Theorem~3$'$ is the action-space analog: it lifts the impossibility from scalar activation functions to any deterministic retrieval pipeline whose effects do not depend on the sign of $t - h_e$.

\subsection{\texorpdfstring{Proof of Retraction Oblivion (Theorem~3$''$)}{Proof of Retraction Oblivion (Theorem 3'')}}
\label{app:proof-retob}

\paragraph{Scope.} The theorem applies to systems whose retrieval pipeline treats all stored entries as equally query-eligible---i.e., systems where storage state is binary (present/absent) and retrieval does not condition on lifecycle metadata. A system that stores retraction flags via UPDATE and conditions retrieval on those flags has implicitly implemented a lifecycle state variable; such a system is \emph{not} binary-flat in our sense. Empirically, Mem0 and Memory-R1 both exhibit ${\approx}\,90\%$ AV7 confabulation on LifecycleBench, confirming the binary-flat characterization: UPDATE overwrites content rather than installing an inactive-but-retained state, and retrieval does not distinguish retraction from ordinary presence.

\textbf{Theorem 3$''$.} A memory system is \emph{binary-flat retraction-oblivious} if every stored proposition $z$ admits only $r_t(z) \in \{0, 1\}$ (present/absent) and retrieval does not gate on lifecycle metadata. For such a system, under a uniform mixture of current-state and change-aware query intents, every binary-flat strategy has Bayes error exactly $1/2$.

\begin{proof}
Let $z$ be a proposition retracted at time $\tau < t$. The two query intents
\begin{equation}
q_{\mathrm{cur}}:\ \text{``what is currently true?''},
\qquad
q_{\mathrm{chg}}:\ \text{``what did the user retract?''}
\label{eq:app-retob-intents}
\end{equation}
impose
\begin{equation}
y_{\mathrm{cur}}(z) = 0,
\qquad
y_{\mathrm{chg}}(z) = 1.
\label{eq:app-retob-labels}
\end{equation}
The required relevance pattern is $(0, 1)$.

\emph{Deterministic case.} Because the system chooses $r_t(z) \in \{0, 1\}$ deterministically, the induced relevance pattern is either $(0, 0)$ (delete: $q_{\mathrm{cur}}$ correct, $q_{\mathrm{chg}}$ wrong) or $(1, 1)$ (retain as active: $q_{\mathrm{cur}}$ wrong, $q_{\mathrm{chg}}$ correct). Under the uniform query distribution, both strategies achieve error probability $1/2$.

\emph{Randomized case.} If the system retains with probability $p$ and deletes with probability $1 - p$, the per-query errors are $p$ for $q_{\mathrm{cur}}$ and $1 - p$ for $q_{\mathrm{chg}}$, giving
\begin{equation}
P_e(p) = \tfrac{1}{2} p + \tfrac{1}{2}(1 - p) = \tfrac{1}{2}
\qquad \text{for every } p \in [0, 1].
\label{eq:app-retob-rand}
\end{equation}
The Bayes error under the uniform query distribution is therefore exactly $1/2$ for every binary-flat strategy.
\end{proof}

\paragraph{What $\xi$ buys.} The lifecycle state $\xi = \mathrm{retracted}$ with the query-conditioned mask $M_q(\xi)$ realizes the $(0, 1)$ pattern: $M_{q_{\mathrm{cur}}}(\texttt{retracted}) = 0$ and $M_{q_{\mathrm{chg}}}(\texttt{retracted}) = 1$. This pattern is unreachable by any binary-flat representation over the same storage alphabet: the only way to achieve $(0, 1)$ is to introduce a representational state that is retained but masked for current-state retrieval. Retraction is not deletion, and ``learning'' a richer behavior within the $\{0, 1\}$ vocabulary cannot close the gap.

\subsection{Proof of Theorem 4}
\label{app:proof-feasibility}

The submitted version stated this result in prose; we give the formal statement here, as requested during review.

\begin{thmfour}
Fix a finite set of lifecycle requirements, each of the form ``candidate $e$ must outrank candidate $e'$ under query $q$'' (a \emph{preservation} constraint when $e$ is the durable fact, a \emph{suppression} constraint when $e'$ is stale). Under the log-score of Consequence~1, each requirement is a linear inequality in the per-category parameters $\theta = \{\alpha_c, \lambda_c, F_c\}_{c \in \mathcal{C}}$, so the feasible set $\Theta_m$ is a convex polyhedron. Let $L_\alpha = \{\theta \in \mathbb{R}^{|\theta|} : \alpha_c = \alpha \ \forall c\}$ be the global-$\alpha$ slice. Then feasibility is not preserved under restriction to $L_\alpha$: $\Theta_m \neq \emptyset$ while $\Theta_m \cap L_\alpha = \emptyset$ whenever some category pair $(c, c')$ carries a preservation bound $\alpha_c^{\max}$ below a suppression bound $\alpha_{c'}^{\min}$. Empirically this holds on $99.8\%$ of the $1{,}566$ measured cross-pairs, and on all $45$ populated cells of the category polytope.
\end{thmfour}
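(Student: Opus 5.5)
The plan is to treat the statement as three separate claims: linearity of each requirement, emptiness of the global-$\alpha$ slice, and nonemptiness of $\Theta_m$ itself. I would prove them in that order. The empirical clause (the $99.8\%$ of $1{,}566$ cross-pairs and the $45$ populated cells) is a measurement rather than a deduction. I would handle it at the end by pointing to the computed bounds $\min \alpha_F^{\max} = 1.7\times10^{-6}$ and $\max\alpha_L^{\min}\approx 1.0$ from \S\ref{app:necessity-c}, not by proving it.

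\emph{Step 1 (linearity).} For a requirement ``$e \succ e'$ under $q$'', I write the constraint as $\ell(e,q,t) - \ell(e',q,t) \ge \delta$ with a fixed margin $\delta \ge 0$. In the log-score \eqref{eq:app-logscore}, the decay term $-\lambda_{c_e}\Delta t_e$ is linear in $\lambda_{c_e}$. The other terms ($\beta\sigma$, $\log V$, $\log\Omega$, $\log M_q$) are constants once the history and query are fixed, because $\kappa$, $\xi$ and $h$ are not free parameters. The blending weight enters as $\alpha_c\,a_e + (1-\alpha_c)\sigma(e,q)$. This is affine in $\alpha_c$ for a fixed activation $a_e$, and a floor enters as a lower clamp $a_e \ge F_c$.

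The difficulty is that $a_e = \exp(-\lambda_c\Delta t_e)$ couples $\alpha_c$ and $\lambda_c$ bilinearly. I would resolve this by working in the log-score coordinates of Consequence~1, where decay and floor contribute additive terms. Concretely, I would reparameterize the blended activation contribution by a per-category coefficient $\tilde a_c = \alpha_c$ times a term that is linear in $(\lambda_c, F_c)$. Failing that, I would state the theorem for $\lambda$ and $F$ held at their constraint-derived values, so that $\Theta_m$ is the polyhedron in $\alpha$. Either way each requirement becomes a half-space, and $\Theta_m$, as a finite intersection of half-spaces, is a convex polyhedron.

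\emph{Step 2 (empty slice).} Each requirement that involves only category $c$ reduces, after dividing by the sign of $a_e - \sigma(e,q) - (a_{e'} - \sigma(e',q))$, to a one-sided bound on $\alpha_c$. Preservation constraints give upper bounds, whose minimum is $\alpha_c^{\max}$, and suppression constraints give lower bounds, whose maximum is $\alpha_{c'}^{\min}$. On $L_\alpha$ every $\alpha_c$ equals the single value $\alpha$, so any point of $\Theta_m\cap L_\alpha$ would satisfy $\alpha_{c'}^{\min} \le \alpha \le \alpha_c^{\max}$. This contradicts the hypothesis $\alpha_c^{\max} < \alpha_{c'}^{\min}$. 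This is exactly the two-intent contradiction of the category-necessity witness \eqref{eq:app-nocat-uv}, lifted from fixed scores to the parameterized family.

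\emph{Step 3 (nonempty $\Theta_m$), the main obstacle.} Here $\Theta_m\neq\emptyset$ does not follow from the cross-pair conflict; it needs within-category consistency, $\alpha_c^{\min}\le\alpha_c^{\max}$ for every $c$, and an absence of cross-category constraints coupling different $\alpha_c$. I would first try to show that when every requirement compares edges of a single category, the polyhedron factors as a product of intervals over categories. Then choosing each $\alpha_c \in [\alpha_c^{\min},\alpha_c^{\max}]$, with $\lambda_c$ and $F_c$ at the table values, gives a feasible point.

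For mixed-category pairs the constraint couples two coordinates, so I would fall back to a certificate argument. Either exhibit an explicit feasible $\theta$ (the deployed per-category parameters, checked against all requirements), or invoke Farkas' lemma and argue that no nonnegative combination of the constraints yields $0 \ge \delta > 0$. The explicit witness is the route I would pursue first, with the hypotheses amended to state within-category consistency if it cannot be verified in general.
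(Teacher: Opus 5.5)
Your plan is correct once you commit to your own fallback, and it has the paper's skeleton. Affine pairwise differences give a finite intersection of halfspaces. On a single $\alpha$, the preservation and suppression requirements become an upper bound $\alpha_F^{\max}$ and a lower bound $\alpha_L^{\min}$ that cannot both hold. Category-specific weights then turn the admissible set into $\prod_c[\underline{\alpha}_c,\overline{\alpha}_c]$. Your Step~2 reproduces the paper's formulas for these bounds exactly, given $\delta=0$ and the paper's sign conventions ($a_f<a_d$, $\sigma_f>\sigma_d$ for preservation; $a_x>a_y$, $\sigma_y>\sigma_x$ for suppression).

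Where you depart is Step~1, and there you are more careful than the paper. The paper never confronts the coupling $\alpha_c e^{-\lambda_c\Delta t_e}$. It also never confronts the clamp $\max(a_e,F_c)$, which is not affine in $F_c$ either. Instead it proves convexity for the log-score $\beta\sigma-\lambda_{c_e}\Delta t_e+F_{c_e}+b_{c_e}(q)+\cdots$, in which $F_c$ is an additive offset and $\alpha$ does not occur at all. It then proves the slice claim separately, for $\alpha a+(1-\alpha)\sigma$ with the activations treated as given numbers. Read literally, in the first model $L_\alpha$ constrains nothing, so $\Theta_m\neq\emptyset$ and $\Theta_m\cap L_\alpha=\emptyset$ cannot both hold there.

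Of your options, the ``log-score coordinates'' route is the paper's and inherits that mismatch. The reparameterization $\alpha_c\times(\text{something linear in }\lambda_c,F_c)$ is bilinear, so it does not yield halfspaces. Commit to the fallback. With $(\lambda,F)$ frozen, each score is $\sigma(e,q)+\alpha_{c_e}(a_e-\sigma(e,q))$ with constant $a_e$, which is affine in $\alpha$. This is the one version in which Steps~1 and~2 are about the same polyhedron, and it is also what the paper's slice argument tacitly assumes.

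On Step~3 you are right that $\Theta_m\neq\emptyset$ does not follow from the cross-pair hypothesis. The paper's proof supplies nothing further; it only remarks that the product of per-category intervals ``may be nonempty''. So state as hypotheses that requirements are single-category (or give an explicit feasible point for coupled ones) and that $\underline{\alpha}_c\le\overline{\alpha}_c$ for every $c$. Do not count on verifying these from the paper's data. Its polytope count includes $5$ infeasible \emph{diagonal} cells. To the extent those constraints involve only one category's weight, within-category consistency already fails there. In that case no choice of $\alpha_c$, deployed or otherwise, is a witness in the frozen-$(\lambda,F)$ model.
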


\subsubsection{Convexity of the full feasibility region}

Recall the log-score
\[
\ell_\theta(e, q, t)
=
\beta \sigma(e,q)
-
\lambda_{c_e} \Delta t_e
+
F_{c_e}
+
b_{c_e}(q)
+
\log V_{c_e}(t, h_e)
+
\log \Omega(e)
+
\log M_q(\xi_e).
\]
Fix a pairwise constraint $(e_i^+, e_i^-, q_i, t_i)$ with required margin $m > 0$. Then
\begin{equation}
\ell_\theta(e_i^+, q_i, t_i) - \ell_\theta(e_i^-, q_i, t_i)
=
\beta\,\Delta\sigma_i
-
\lambda_{c_i^+}\Delta t_i^+
+
\lambda_{c_i^-}\Delta t_i^-
+
F_{c_i^+} - F_{c_i^-}
+
b_{c_i^+}(q_i) - b_{c_i^-}(q_i)
+
C_i,
\label{eq:app-feas-affine}
\end{equation}
where $C_i$ collects the fixed terms from $V$, $\Omega$, and $M$. Equation~\eqref{eq:app-feas-affine} is affine in $\theta$. Hence each retrieval constraint has the form $a_i^\top \theta \ge m - C_i$ for some vector $a_i$. Therefore
\begin{equation}
\Theta_m
=
\bigcap_{i=1}^n
\left\{\theta:\ a_i^\top \theta \ge m - C_i\right\}
\label{eq:app-feas-polytope}
\end{equation}
is an intersection of halfspaces and is therefore convex. \qed

\subsubsection{Global-$\alpha$ infeasibility as a one-dimensional slice}

Now consider the simplified blended score $s_\alpha(e, q) = \alpha a(e) + (1 - \alpha)\sigma(e, q)$.

\emph{Preservation constraint.} Let $f$ be an old but correct fact and $d$ a newer distractor such that $a_f < a_d$ and $\sigma_f > \sigma_d$. Requiring $f$ to outrank $d$ gives
\begin{align}
\alpha a_f + (1 - \alpha)\sigma_f
&\ge
\alpha a_d + (1 - \alpha)\sigma_d
\notag\\
\alpha(a_d - a_f)
&\le
(1 - \alpha)(\sigma_f - \sigma_d)
\notag\\
\alpha
&\le
\frac{\sigma_f - \sigma_d}{(\sigma_f - \sigma_d) + (a_d - a_f)}
= \alpha_F^{\max}.
\label{eq:app-feas-preserve}
\end{align}

\emph{Suppression constraint.} Let $x$ be the correct current fact and $y$ a stale but semantically attractive distractor such that $a_x > a_y$ and $\sigma_y > \sigma_x$. Requiring $x$ to outrank $y$ gives
\begin{align}
\alpha a_x + (1 - \alpha)\sigma_x
&\ge
\alpha a_y + (1 - \alpha)\sigma_y
\notag\\
\alpha
&\ge
\frac{\sigma_y - \sigma_x}{(\sigma_y - \sigma_x) + (a_x - a_y)}
= \alpha_L^{\min}.
\label{eq:app-feas-suppress}
\end{align}

If
\begin{equation}
\alpha_L^{\min} > \alpha_F^{\max},
\label{eq:app-feas-empty}
\end{equation}
the feasible interval is empty and no global $\alpha$ exists.

\paragraph{Measured witness.} Median $\alpha_F^{\max} = 0.065$ and median $\alpha_L^{\min} = 0.138$ yield $70.9\%$ infeasible cross-pairs on FR-Graphiti. On FR-Bank ($n = 54$ AV8 preservation bounds, $n = 29$ AV2 suppression bounds), the global-$\alpha$ conflict tightens: $\min \alpha_F^{\max} = 1.7 \times 10^{-6}$, $\max \alpha_L^{\min} \approx 1.0$, feasibility gap $\approx 1.0$, and $99.8\%$ of cross-pair constraints are infeasible. Moreover, the $11 \times 11$ cross-category polytope has zero feasible cells among its $45$ populated cells ($5$ diagonal, $40$ off-diagonal): no single $\alpha$ satisfies both a preservation constraint of any category and a suppression constraint of any other. (The submitted version described these $45$ cells as off-diagonal; the artifact records $45$ populated cells in total.)

\subsubsection{Category-specific expansion and anti-squatting}

With category-specific blending weights, the scalar interval becomes a Cartesian product:
\begin{equation}
\Theta_\alpha
=
\prod_{c \in \mathcal{C}}
[\underline{\alpha}_c, \overline{\alpha}_c].
\label{eq:app-feas-product}
\end{equation}
This product may be nonempty even when the global one-dimensional interval is empty.

Anti-squatting constraints are encoded as ordinary pairwise inequalities. If $e_r$ is the relevant edge for query $q$ and $e_s$ is a slow-decay ``squatter'' from a category such as Identity, then
\begin{equation}
\ell_\theta(e_r, q, t) - \ell_\theta(e_s, q, t) \ge m.
\label{eq:app-feas-antisquat}
\end{equation}
If Identity decay is too slow or its floor too high, constraints of the form \eqref{eq:app-feas-antisquat} are violated. This is exactly what occurred under the earlier $800\times$ rate spread: Identity occupied $64\%$ of top-5 slots regardless of query. Compressing to a $5.3\times$ spread restored feasibility.

\subsection{Dimensional Necessity of Category-Specific Parameters}
\label{app:proof-dim-necessity}

Theorem~4 shows the feasible polyhedron $\Theta_m$ is convex and that the global-$\alpha$ slice can be empty. The following claim explains \emph{why} a low-dimensional parameter family is structurally insufficient.

\textbf{Claim (dimensional necessity).} Let $p = 2K + 1$ be the full parameter count for $\theta = (\beta, \{\lambda_c, F_c\}_{c \in \mathcal{C}})$, and let $g_c(\theta) = r_c^\top \theta + s_c$, $c = 1, \ldots, K$, be category-specific margin functionals whose gradient matrix $G = [r_1, \ldots, r_K]^\top \in \mathbb{R}^{K \times p}$ has rank $K$. Assume $\Theta_m$ is strictly feasible (contains an open ball). If $L \subseteq \mathbb{R}^p$ is any affine subspace with $\dim(L) < K$, then the image $g(L)$ has Lebesgue measure zero in $\mathbb{R}^K$, so for Lebesgue-almost every target $\tau \in g(\Theta_m)$ there is no $\theta \in L \cap \Theta_m$ with $g(\theta) = \tau$. In particular, a one-dimensional global-$\alpha$ family is generically infeasible for independently perturbed category-margin targets whenever $K > 1$.

\paragraph{Sketch.} By strict feasibility and surjectivity of $G$, $g(\Theta_m)$ contains an open set in $\mathbb{R}^K$. For $L$ affine with $\dim(L) = d < K$, $g(L)$ is affine with $\dim(g(L)) \le d < K$ and therefore has $K$-dimensional Lebesgue measure zero; almost every $\tau \in g(\Theta_m)$ escapes $g(L)$. The operational content matches the empirical $99.8\%$ cross-pair infeasibility on FR-Bank and the zero feasible cells in the $11 \times 11$ cross-category polytope (Theorem~4 empirical witness): a one-dimensional parameter family cannot span an $11$-dimensional category-margin space, and category-specific $\{\alpha_c, \lambda_c, F_c\}$ are the coordinates that do.

\subsection{Staleness Decomposition: Derivation and Numerical Predictions}
\label{app:proof-staleness}

We restate the staleness decomposition introduced in Section~\ref{sec:theory} and record its derivation. Let $B$ denote the event that the retrieved set contains stale harmful context and let $\pi_S = P_S(B = 1)$ for system $S$. Conditioning on $B$ gives
\begin{equation}
P_S(\mathrm{pass})
=
P(\mathrm{pass} \mid B = 0)P_S(B = 0)
+
P(\mathrm{pass} \mid B = 1)P_S(B = 1)
=
(1 - \pi_S)p_0 + \pi_S p_1,
\label{eq:app-pass-total}
\end{equation}
which is the staleness decomposition stated in \S\ref{sec:theory}. Subtracting between two systems $A$ and $B$,
\begin{equation}
P_A(\mathrm{pass}) - P_B(\mathrm{pass})
=
(\pi_B - \pi_A)p_0 + (\pi_A - \pi_B)p_1
=
(\pi_B - \pi_A)(p_0 - p_1),
\label{eq:app-pass-gap}
\end{equation}
which is the across-system gap form. The algebraic step is one line; the substantive content is empirical: the calibration of $p_0, p_1$ (and the analogous $c_0, c_1$ below) and the contamination gaps $\pi_A - \pi_B$ between systems.

\paragraph{Numerical example.} Using the reported clean-vs-contaminated pass rates, $p_0 = 0.758$, $p_1 = 0.060$. Comparing FR-Graphiti ($\pi_A = 0.08$) to Mem0 ($\pi_B = 0.27$),
\[
P_A(\mathrm{pass}) - P_B(\mathrm{pass})
=
(0.27 - 0.08)(0.758 - 0.060)
=
0.19 \times 0.698
\approx
0.1326,
\]
predicting a $13.3$pp pass advantage from contamination reduction alone, close to the observed $12$pp retrieval gap.

\paragraph{Confabulation analogue.} If $c_0 = P(\mathrm{confab} \mid B = 0)$ and $c_1 = P(\mathrm{confab} \mid B = 1)$, then $P_S(\mathrm{confab}) = (1 - \pi_S)c_0 + \pi_S c_1$, so
$
P_B(\mathrm{confab}) - P_A(\mathrm{confab})
=
(\pi_B - \pi_A)(c_1 - c_0).
$
This explains why FR's staleness reduction maps directly to the observed confabulation gap ($22.4\%$ vs $45.1\%$ against Mem0).

\paragraph{Staleness transfer: a testable invariant and its falsification (Proposition~3A).} If $(p_0, p_1)$ are system-invariant (Assumption~3A), then \eqref{eq:app-pass-total} becomes a predictive model: each system $S$ satisfies $P_S = p_0 - \pi_S(p_0 - p_1)$, so held-out pass rates are predictable from staleness rates alone. Calibrating $\delta = p_0 - p_1$ from FR-Graphiti ($\pi = 0.08$, $P = 0.73$) and Mem0 ($\pi = 0.27$, $P = 0.61$) gives
\begin{equation}
\delta = \frac{0.73 - 0.61}{0.27 - 0.08} = 0.632,
\qquad
p_0 = 0.73 + 0.08 \cdot 0.632 = 0.780,
\qquad
p_1 = 0.148.
\label{eq:app-stale-calib}
\end{equation}
The predicted pass rate $\widehat{P}_S = p_0 - \pi_S \delta$ on three held-out systems (reference values from Table~\ref{tab:lifecyclebench-results}):
\begin{center}
\begin{tabular}{lrrr}
\toprule
System & $\pi_S$ & $\widehat{P}_S$ & $|P_S - \widehat{P}_S|$ \\
\midrule
FR-Bank    & $0.155$ & $0.683$ & $\mathbf{8.6\text{ pp}}$ \\
Memory-R1  & $0.153$ & $0.684$ & $1.5\text{ pp}$ \\
MemoryOS   & $0.070$ & $0.736$ & $3.1\text{ pp}$ \\
\bottomrule
\end{tabular}
\end{center}

FR-Bank and Memory-R1 have all but identical staleness ($\pi = 0.155$ vs.\ $0.153$, a $0.2$pp difference) yet their pass rates differ by $10.0$\,pp---a gap impossible under Assumption~3A, which makes pass rate a function of $\pi$ alone. Hence Assumption~3A is empirically \emph{falsified}. Clean-context recall quality $p_0$ is system-dependent, so recall quality is a second mechanism orthogonal to staleness filtering. The identity \eqref{eq:app-pass-total} remains exact, but it is not a sufficient predictor: staleness rate captures one of two orthogonal mechanisms that determine pass rate. FR-Bank's $76.9\%$ pass rate derives from both low staleness ($\pi = 0.155$) and high clean-context recall, not from staleness alone.

\subsection{Anticipatory Activation and Why Age Alone Cannot Express It}
\label{app:proof-anticipatory}

\textbf{Proposition 2.} No monotone nonincreasing function of edge age can represent anticipatory activation.

\begin{proof}
Suppose for contradiction that there exists a nonincreasing function $g: \mathbb{R}_{\ge 0} \to \mathbb{R}$ such that edge relevance can be written as $u(t) = g(t - t_e)$, where $t_e$ is creation time. Take times $t_1 < t_2 < h$, where $h$ is the event time. Because $t_1 - t_e < t_2 - t_e$ and $g$ is nonincreasing,
\begin{equation}
u(t_1) = g(t_1 - t_e) \ge g(t_2 - t_e) = u(t_2).
\label{eq:app-antic-nonincr}
\end{equation}
But anticipatory activation requires relevance to \emph{increase} as the event approaches from the left:
\begin{equation}
u(t_1) < u(t_2)
\qquad
\text{for } t_1 < t_2 < h.
\label{eq:app-antic-strict}
\end{equation}
Equations~\eqref{eq:app-antic-nonincr} and \eqref{eq:app-antic-strict} are contradictory. Therefore no monotone nonincreasing function of age alone can represent anticipatory activation.
\end{proof}

\paragraph{Why the FR kernel works.} For $V_c(t, h) = \exp[-\nu_c(h - t)_+]\,\exp[-\mu_c(t - h)_+]$ with $\mu_c \gg \nu_c$: if $t < h$, $V_c(t, h) = \exp[-\nu_c(h - t)]$, with derivative $\nu_c \exp[-\nu_c(h - t)] > 0$, so relevance increases as the event approaches; if $t > h$, $V_c(t, h) = \exp[-\mu_c(t - h)]$, with derivative $-\mu_c \exp[-\mu_c(t - h)] < 0$, so relevance decreases after expiry. The FR kernel has the qualitatively correct shape that an age-only decay cannot reproduce.

\subsection{Proof of Lemma 1 (Harmonic-Mean Decay)}
\label{app:proof-harmonic}

Let $\tau_c = 1/\lambda_c$ denote the persistence timescale of category $c$. Assume soft memberships $w_c(e) \ge 0$ satisfy $\sum_c w_c(e) = 1$ and that timescales average linearly:
\begin{equation}
\tau_{\mathrm{eff}}(e) = \sum_c w_c(e) \tau_c.
\label{eq:app-tau-eff}
\end{equation}
Substituting $\tau_c = 1/\lambda_c$ gives $\tau_{\mathrm{eff}}(e) = \sum_c w_c(e)/\lambda_c$. Since $\lambda_{\mathrm{eff}}(e) = 1/\tau_{\mathrm{eff}}(e)$,
\begin{equation}
\lambda_{\mathrm{eff}}(e)
=
\left(\sum_c \frac{w_c(e)}{\lambda_c}\right)^{-1}.
\label{eq:app-lambda-eff}
\end{equation}
This is the harmonic mean. \qed

\paragraph{Numerical example.} For $30\%$ Identity and $70\%$ Logistical membership,
\[
\lambda_{\mathrm{eff}}
=
\left(\frac{0.3}{0.0015} + \frac{0.7}{0.0080}\right)^{-1}
\approx
0.00348/\mathrm{hr}.
\]
The corresponding half-life is $t_{1/2} = \ln 2/\lambda_{\mathrm{eff}} \approx 199.3$~hr~$\approx 8.3$~days.

\paragraph{Soft membership in practice.} On the full FR-Bank corpus ($12{,}968$ active entries across $40$ persona banks), $66.8\%$ of entries have primary category weight below $0.90$ ($28.7\%$ below $0.80$), and the mean number of categories with weight $> 0.05$ per entry is $2.0$. Harmonic-mean decay therefore fires on the majority of the corpus, not as a theoretical edge case. The most frequent co-occurrence pairs are Intellectual Interests $\leftrightarrow$ Projects \& Endeavors ($635$ entries), Health \& Wellbeing $\leftrightarrow$ Obligations ($438$), and Projects \& Endeavors $\leftrightarrow$ Relational Bonds ($422$), reflecting genuine behavioral ambiguity: a research project involves both intellectual engagement and deliverable-tracking, while a chronic health condition generates both ongoing state and time-bound medical obligations.

\subsection{Final Interpretation}
\label{app:final-interpretation}

The theory supports a precise and limited claim:
\begin{itemize}[nosep,leftmargin=*]
\item Model~1 motivates the feature set.
\item Theorem~1 proves that $(\sigma, \Delta t)$ alone does not identify lifecycle state.
\item Theorem~2 shows that FR's metadata are sufficient under the model, and Theorem~2$'$ strengthens this to minimality: no strict subset of $(c, \kappa, \xi, h)$ resolves all lifecycle failures.
\item Theorem~3 shows that each metadata field is individually necessary within the natural lifecycle basis. Theorems~3$'$ and~3$''$ lift this into action-space impossibility: event-time-invariant and binary-flat retraction-oblivious action spaces have worst-case error $\ge 1/2$ on their respective query distributions.
\item Theorem~4 shows that calibration is a convex feasibility problem, and that global $\alpha$ fails because it is an overly restrictive one-dimensional slice. Theorem~2A gives a provable lower bound on the variance gap between behavioral and cognitive partitions, at least $10\%$ of $V(\mathcal{C}_{\mathrm{uniform}})$ on our tables.
\item The staleness decomposition (\eqref{eq:app-pass-total}--\eqref{eq:app-pass-gap}) shows that stale context has negative expected utility. Proposition~3A tests its strongest form---system-invariance of $(p_0, p_1)$---and falsifies it via the FR-Bank/Memory-R1 natural experiment, identifying clean-context recall quality as a second orthogonal mechanism.
\end{itemize}

None of these results requires claiming that FR performs Bayesian inference at runtime. FR is a deterministic lifecycle policy whose feature set can be interpreted as the sufficient-statistic basis of a lifecycle-aware active-relevance model.

\subsection{Theory Summary}

\begin{table}[h]
\caption{Theoretical results and the mechanisms they justify.}
\label{tab:theory-summary}
\centering
\setlength{\tabcolsep}{3pt}
\begin{adjustbox}{max width=\textwidth}
\begin{tabular}{@{}l l l@{}}
\toprule
\textbf{Result} & \textbf{Claim} & \textbf{Mechanism justified} \\
\midrule
Model~1       & Lifecycle posterior decomposes into category, slot, state, event-time factors & Full FR feature set \\
Theorem~1     & $(\sigma, \Delta t)$-only scoring cannot identify lifecycle state             & Slot keys + supersession + classifier + event-time \\
Theorem~2     & $(c, \kappa, \xi, h)$ is sufficient under Model~1                              & FR metadata basis \\
Theorem~2$'$  & No strict subset of $(c, \kappa, \xi, h)$ resolves all lifecycle failure modes & FR metadata basis (minimality) \\
Theorem~3     & Each of $(c, \kappa, \xi, h)$ is individually necessary                        & All four mechanisms \\
Theorem~3$'$  & Event-time-invariant action spaces: $P_e \ge 1/2$ on obligations               & Event-time anchors as action coord \\
Theorem~3$''$ & Binary-flat retraction-oblivious spaces: Bayes $P_e = 1/2$                   & Lifecycle state $\xi$ as retention gate \\
Theorem~4     & Calibration is convex feasibility; global $\alpha$ can be infeasible           & Category-specific parameters \\
Proposition~1 & Behavioral ontology minimizes within-category policy variance                 & 10+1 ontology \\
Theorem~2A    & Partition gap $V(\mathcal{C}_{\mathrm{cog}}) - V(\mathcal{C}_{\mathrm{beh}}) \ge 10\%\,V(\mathcal{C}_{\mathrm{uniform}})$ & Behavioral partition as variance-minimizer \\
Eq.~\eqref{eq:app-pass-total}--\eqref{eq:app-pass-gap} & Stale context has negative expected utility (decomposition identity) & Lifecycle filtering \\
Proposition~3A & Staleness transfer falsified: $p_0$ is system-dependent                       & Staleness is one of two orthogonal mechanisms \\
Proposition~2 & Age-only decay cannot express anticipatory activation                         & Event-time anchors \\
Lemma~1       & Soft membership $\to$ harmonic-mean decay                                     & Multi-category classification \\
\bottomrule
\end{tabular}
\end{adjustbox}
\end{table}

\section{Extended Architecture Details}
\label{app:architecture-ext}

\begin{figure}[t]
  \centering
  \includegraphics[width=\textwidth]{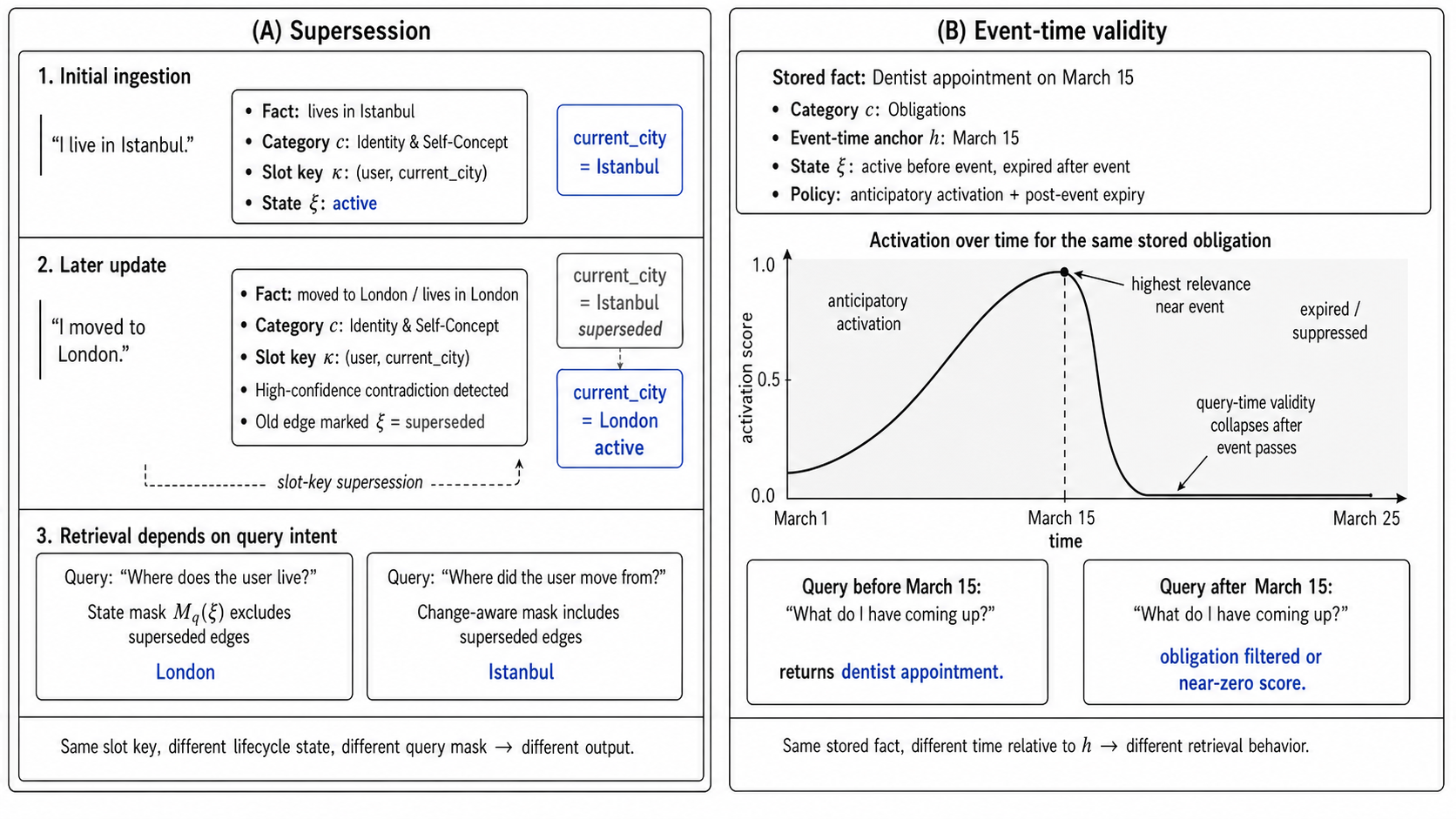}
  \caption{\textbf{Two lifecycle mechanisms in detail.} (A) Slot-key supersession with query-conditioned state mask. When ``I moved to London'' is ingested, the high-confidence contradiction marks the earlier ``lives in Istanbul'' edge as $\xi=\texttt{superseded}$. The same store returns different answers depending on query intent: current-state queries apply $M_q(\xi)$ to exclude superseded edges (returning London); change-aware queries include them (returning Istanbul). (B) Event-time validity for obligations. Activation is governed by distance to the event-time anchor $h$, not from creation. Anticipatory activation rises as the deadline approaches; post-event the validity kernel $V_c(t,h)$ collapses, suppressing expired obligations from current-state queries. Both mechanisms instantiate dimensions of the metadata basis $(c, \kappa, \xi, h)$ that Theorems~1--3 establish as necessary for lifecycle-aware scoring.}
  \label{fig:mechanisms}
\end{figure}

\subsection{Classifier Iteration}

\begin{table}[h]
\caption{Classifier prompt iteration on a 30-fact development set, scored as strict-agreement plus half-credit for primary/secondary swaps. The 93\% figure is the dev-set prompt-iteration result and is \emph{not} the canonical classifier quality number; the held-out audit on $n = 200$ facts reports $70.7\%$ majority-vote agreement ($\kappa = +0.673$, Table~\ref{tab:audit-pairwise}).}
\label{tab:classifier}
\centering
\begin{tabular}{@{}l l l@{}}
\toprule
\textbf{Version} & \textbf{Key Change} & \textbf{Dev-set agreement (n=30)} \\
\midrule
v1 & Initial 8+1 prompt & 77\% \\
v2 & ``Classify fact, not utterance'' & Overcorrected \\
v3 & Softened to ``stored fact'' rules & 87\% \\
v4 & Split to 10+1 + OTHER cap + CoT & \textbf{93\%} \\
\bottomrule
\end{tabular}
\end{table}

\subsubsection{Classifier Agreement Audit}
\label{app:classifier-audit}

To validate classifier quality reproducibly, we conducted a three-judge blind audit on a 200-fact stratified sample (seed~$=42$, minimum 5 facts per category) drawn from the canonical FR-Bank corpus (\texttt{LifecycleBench/artifacts/lifecycle\_banks/}, $|E|=18{,}936$). Three independent LLM judges---Claude Sonnet~4.6, GPT-4o, and GPT-4.1-mini---re-classified each fact. Each judge received the full source session as conversation context, matching the SESSION CONTEXT plus CURRENT TURN inputs the ingestion classifier sees.

\begin{table}[h]
\caption{Three-judge agreement results with full session context. Cohen's $\kappa$ over the 11-category label set; ``Majority vote'' aggregates judges by simple plurality ($\geq 2$ of 3 agree).}
\label{tab:audit-pairwise}
\centering
\begin{tabular}{@{}l r r r@{}}
\toprule
\textbf{Comparison} & $n$ & \textbf{Raw} & \textbf{Cohen's $\kappa$} \\
\midrule
claude\_sonnet vs gpt4o          & 200 & 72.5\% & $+0.689$ \\
claude\_sonnet vs gpt41\_mini    & 200 & 67.0\% & $+0.626$ \\
gpt4o vs gpt41\_mini             & 200 & 77.5\% & $+0.744$ \\
\midrule
claude\_sonnet vs FR-Bank        & 200 & 60.5\% & $+0.560$ \\
gpt4o vs FR-Bank                 & 200 & 67.0\% & $+0.632$ \\
gpt41\_mini vs FR-Bank           & 200 & 69.0\% & $+0.653$ \\
\midrule
\textbf{Majority vote vs FR-Bank} & \textbf{188} & \textbf{70.7\%} & $\mathbf{+0.673}$ \\
\bottomrule
\end{tabular}
\end{table}

\begin{table}[h]
\caption{Per-category majority-vote agreement vs FR-Bank classifier label, sorted descending.}
\label{tab:audit-per-category}
\centering
\begin{tabular}{@{}l r@{}}
\toprule
\textbf{Category} & \textbf{Agreement} \\
\midrule
RELATIONAL\_BONDS         & 93.5\% \\
IDENTITY\_SELF\_CONCEPT   & 93.3\% \\
FINANCIAL\_MATERIAL       & 90.0\% \\
HEALTH\_WELLBEING         & 85.7\% \\
PREFERENCES\_HABITS       & 85.0\% \\
HOBBIES\_RECREATION       & 76.9\% \\
PROJECTS\_ENDEAVORS       & 65.2\% \\
LOGISTICAL\_CONTEXT       & 64.7\% \\
OBLIGATIONS               & 41.4\% \\
INTELLECTUAL\_INTERESTS   & 33.3\% \\
\bottomrule
\end{tabular}
\end{table}

\paragraph{Per-category precision, recall, and F1.}
\label{app:classifier-prf1}
Table~\ref{tab:classifier-prf1} reports precision, recall, and F1 for each category, treating the three-judge majority vote as ground truth. The five categories with the sharpest lifecycle-policy differences---\textsc{Relational\_Bonds} (F1$=0.853$), \textsc{Preferences\_Habits} ($0.791$), \textsc{Health\_Wellbeing} ($0.750$), \textsc{Financial\_Material} ($0.667$), and \textsc{Hobbies\_Recreation} ($0.667$)---all exceed F1 $= 0.65$, confirming that classification reliability is highest where lifecycle-policy divergence is greatest. The two confused pairs targeted by the perturbation study (\textsc{Obligations} $\leftrightarrow$ \textsc{Logistical\_Context}, \textsc{Intellectual\_Interests} $\leftrightarrow$ \textsc{Projects\_Endeavors}) show the lowest precision, consistent with the perturbation study (Appendix~\ref{app:perturbation}) demonstrating that these confusions have negligible downstream impact because the lifecycle policies for each pair are similar.

\begin{table}[h]
\caption{Per-category classifier P/R/F1 from the held-out audit ($n = 200$; majority vote of three independent judges as ground truth, restricted to facts with a majority verdict, $n_{\mathrm{eff}} = 188$). Macro-averaged F1 $= 0.595$; weighted F1 $= 0.647$; overall accuracy $66.5\%$.}
\label{tab:classifier-prf1}
\centering
\small
\begin{tabular}{@{}l r r r r@{}}
\toprule
\textbf{Category} & $n$ & \textbf{Precision} & \textbf{Recall} & \textbf{F1} \\
\midrule
RELATIONAL\_BONDS         & $31$ & $0.935$ & $0.784$ & $\mathbf{0.853}$ \\
PREFERENCES\_HABITS       & $20$ & $0.850$ & $0.739$ & $0.791$ \\
HEALTH\_WELLBEING         & $15$ & $0.800$ & $0.706$ & $0.750$ \\
PROJECTS\_ENDEAVORS       & $28$ & $0.536$ & $0.938$ & $0.682$ \\
FINANCIAL\_MATERIAL       & $10$ & $0.900$ & $0.529$ & $0.667$ \\
HOBBIES\_RECREATION       & $13$ & $0.769$ & $0.588$ & $0.667$ \\
IDENTITY\_SELF\_CONCEPT   & $16$ & $0.875$ & $0.424$ & $0.571$ \\
LOGISTICAL\_CONTEXT       & $18$ & $0.611$ & $0.524$ & $0.564$ \\
OBLIGATIONS               & $31$ & $0.387$ & $0.857$ & $0.533$ \\
INTELLECTUAL\_INTERESTS   & $12$ & $0.333$ & $0.800$ & $0.471$ \\
OTHER                     & $\phantom{0}6$ & $0.000$ & $0.000$ & $0.000$ \\
\bottomrule
\end{tabular}
\end{table}

\textbf{Discussion.} Four observations from the audit:

\begin{enumerate}[leftmargin=*]
\item \textbf{Context matters.} A decontextualized audit (fact text only, no conversation) yields $\kappa = +0.58$ (vs.\ $+0.67$ with context), confirming that behavioral classification requires conversational context.
\item \textbf{Disagreements are concentrated and explicable.} OBLIGATIONS vs.\ LOGISTICAL\_CONTEXT reflects the classifier's deliberate event-time routing (facts with date anchors route to OBLIGATIONS for anticipatory activation). PROJECTS\_ENDEAVORS vs.\ HOBBIES\_RECREATION reflects the DELIVERABLE TEST override (\S\ref{sec:architecture}). These are policy choices, not errors.
\item \textbf{INTELLECTUAL\_INTERESTS over-assignment.} Low agreement (33.3\%) with full context suggests the classifier may over-assign this category; tightening this boundary is identified as future work.
\item \textbf{Errors concentrate where lifecycle policies are most similar.} The five categories with the sharpest lifecycle-policy differences---Identity, Relational, Financial, Health, Preferences---all exceed 85\% agreement, indicating that classification errors concentrate where lifecycle policies are most similar and therefore least consequential to retrieval quality.
\end{enumerate}

Raw per-fact verdicts and aggregate metrics for both audits are released at \texttt{artifacts/classifier\_audit\_results\_v2.json} and \texttt{artifacts/classifier\_audit\_summary\_v2.json} (and \texttt{\_results.json} / \texttt{\_summary.json} for the no-context audit).

To quantify the downstream impact of these category confusions, we conduct a perturbation study in Appendix~\ref{app:perturbation} that randomly flips labels between the two confused pairs (Obligations $\leftrightarrow$ Logistical Context, Intellectual Interests $\leftrightarrow$ Projects \& Endeavors) at rates from 0\% to 50\% and measures retrieval-set stability.

\subsubsection{Downstream Sensitivity to Category Perturbation}
\label{app:perturbation}

To quantify downstream sensitivity to classifier disagreement on the two confused category pairs identified in the audit (Obligations $\leftrightarrow$ Logistical Context, Intellectual Interests $\leftrightarrow$ Projects \& Endeavors), we randomly perturbed category labels at rates from 5\% to 50\% and measured retrieval-set stability across all 516 LifecycleBench questions (3 seeds per rate). At 10\% perturbation, mean Jaccard overlap between the original and perturbed top-10 retrieval sets is $0.974$ ($\pm 0.002$), with an average of $0.29$ edges displaced per query. Even at 50\% perturbation---equivalent to random assignment within the confused pairs---Jaccard overlap remains $0.905$ and Kendall $\tau$ remains $0.897$. This confirms that classifier errors concentrated among nearby-policy categories have minimal downstream impact on retrieval quality: the lifecycle policies for these confused pairs are similar enough that swapping labels does not materially alter which facts reach the top-10. This stability is predicted by the theoretical framework: Proposition~1 guarantees that confused pairs have minimal within-category policy variance, Theorem~4 shows that global-$\alpha$ infeasibility (99.8\% on FR-Bank) concentrates on \emph{distant} category pairs, while confused pairs share nearby feasibility intervals, and Lemma~1's harmonic-mean decay ensures that soft category membership partially absorbs hard label perturbations.

\begin{table}[h]
\caption{Retrieval-set stability under random label flips between the two confused category pairs (Obligations $\leftrightarrow$ Logistical Context, Intellectual Interests $\leftrightarrow$ Projects \& Endeavors). Averaged over 516 LifecycleBench questions and 3 seeds per rate. Jaccard and displaced are over top-10 edge IDs; Kendall $\tau$ is over edges present in both rankings (variant $\tau_b$; undefined when the intersection has fewer than 2 edges). Scoring reproduces the deterministic portion of \texttt{lifecycle\_search}: cosine top-60, BM25 top-20, supersession/expiry/retraction filters, and category-dependent blended scoring (decay rate $\lambda_c$, blending weight $\alpha_c$, semantic floor); category-forced retrieval, multi-hop entity expansion, and LLM distillation are omitted as they require external API calls.}
\label{tab:perturbation-stability}
\centering
\begin{tabular}{@{}r r r r@{}}
\toprule
\textbf{Rate} & \textbf{Mean Jaccard} ($\pm$std) & \textbf{Mean Displaced} ($\pm$std) & \textbf{Mean Kendall $\tau$} ($\pm$std) \\
\midrule
 0\% & $1.000$ ($\pm 0.000$) & $0.00$ ($\pm 0.00$) & $1.000$ ($\pm 0.000$) \\
 5\% & $0.986$ ($\pm 0.001$) & $0.15$ ($\pm 0.01$) & $0.992$ ($\pm 0.001$) \\
10\% & $0.974$ ($\pm 0.002$) & $0.29$ ($\pm 0.02$) & $0.974$ ($\pm 0.001$) \\
20\% & $0.953$ ($\pm 0.003$) & $0.54$ ($\pm 0.03$) & $0.955$ ($\pm 0.002$) \\
30\% & $0.928$ ($\pm 0.001$) & $0.83$ ($\pm 0.01$) & $0.931$ ($\pm 0.007$) \\
50\% & $0.905$ ($\pm 0.003$) & $1.11$ ($\pm 0.04$) & $0.897$ ($\pm 0.007$) \\
\bottomrule
\end{tabular}
\end{table}

\textbf{Setup.} Across the 40 persona banks, 6{,}180 of 12{,}968 active entries (47.7\%) sit in the four confused categories and are therefore eligible to flip. Flip counts are reported over \emph{affected edge-instances} rather than unique entries: the 6{,}180 eligible entries occupy $8{,}913$ edge-instances across the banks, so at a 10\% rate $\approx 910$ instances are flipped per seed and at 50\%, $\approx 4{,}400$. (Ten percent of the 6{,}180 \emph{unique} entries would be 618; the submitted version did not state which denominator the counts used.) When a label flips, the entry inherits \emph{every} parameter of its new category---decay rate $\lambda_c$, blending weight $\alpha_c$, semantic floor, and numeric-preservation behavior---so the reported stability reflects the full lifecycle-policy effect of the swap, not a partial substitution. Raw per-seed numbers and pool statistics are released at \texttt{perturbation\_study\_results.json}.

\subsection{Ontology-Aware Session Initialization (Warm Start Protocol)}

Three session modes: \textbf{Cold}: low-complexity intent; graph traversal skipped. \textbf{Warm}: user declares intent; targeted traversal primes the session with the active frontier---the subgraph with highest recent activation in the relevant category neighborhood. \textbf{Evolving}: starts warm, but conversation drifts; the system maintains a rolling query category vector (exponential moving average) and compares via cosine distance; when distance exceeds threshold $\delta$ for $N$ consecutive turns, background traversal expands session context asynchronously.

\subsection{Per-User Parameter Evolution}
\label{sec:peruserevolution}

The 10+1 categories and base decay rates represent a population-level default. The architecture is designed to support per-user parameter adaptation without LLM calls: a user who mentions hobbies frequently $\to$ Hobbies decay rate decreases; a user who changes preferences rapidly $\to$ Preferences decay rate increases. Parameter adaptation would use clean statistical signals: access frequency, temporal gap statistics, supersession rates per category. We scope empirical validation of the adaptation loop to future deployment work.

\subsection{Interpretability and User Control}

Because every fact carries a human-readable category label and a transparent lifecycle policy, the entire memory system becomes inspectable, auditable, and modifiable. Users can view their memory by category, understand why facts persist or fade, correct misclassifications, and adjust retention. Developers can debug retrieval failures with clear causal chains and structure downstream LLM context by category:

\begin{small}
\begin{verbatim}
CORE IDENTITY (always relevant):
  - User has ADHD
  - User is a software engineer
CURRENT CONTEXT (time-sensitive):
  - Meeting at 3pm today
PREFERENCES (apply when relevant):
  - Prefers concise responses
\end{verbatim}
\end{small}

\subsection{Safety and Robustness}

\textbf{Prompt injection via memory.} The ingestion classifier detects instructional content and quarantines it. High-sensitivity categories (Health, Financial) flag facts for reconfirmation if inconsistent with established patterns. \textbf{Confidence degradation.} Low-confidence supersessions surface both facts with ambiguity noted. \textbf{Selective forgetting.} Deletion requests provide clean, verifiable removal with audit trail.

\subsection{Emotional State as Signal}

Transient emotional state (``I'm frustrated right now'') is deliberately excluded as a category. Mood is a signal that modulates other categories: a message expressing frustration about a work deadline belongs in Obligations with an emotional loading modifier, not in a separate ``Emotions'' bucket. Emotional loading detected at ingestion temporarily boosts the activation of the relevant category, with the boost itself subject to fast decay.

\section{LifecycleBench: Extended Methodology}
\label{app:benchmark-ext}

\begin{table}[h]
\caption{LifecycleBench attack vectors, referenced from \S
ef{sec:benchmark}. Each targets a distinct lifecycle failure mode; $n$ is the question count.}
\label{tab:attack-vectors}
\centering
\setlength{\tabcolsep}{3pt}
\begin{adjustbox}{max width=\textwidth}
\begin{tabular}{@{}c l L{7.5cm} c@{}}
\toprule
\textbf{AV} & \textbf{Name} & \textbf{Tests} & $n$ \\
\midrule
AV1 & Superseded Preference   & Return the \emph{current} preference after an explicit change? & 75 \\
AV2 & Expired Logistics       & Recognize that a past event is no longer active? & 75 \\
AV3 & Stable Identity         & Retrieve identity facts buried under months of conversation? & 94 \\
AV4 & Multi-Version Fact      & Return the latest version when a fact changed multiple times? & 44 \\
AV5 & Broad Aggregation       & Aggregate across multiple edges (``all of X's hobbies'')? & 40 \\
AV6 & Cross-Session Contradiction & Resolve contradictions between sessions? & 45 \\
AV7 & Selective Forgetting    & Suppress an explicitly retracted statement? & 40 \\
AV8 & Numeric Preservation    & Return specific numbers (\$950/month, 7 bass)? & 63 \\
AV9 & Soft Supersession       & Handle partial updates (``thinking about'' vs ``committed to'')? & 40 \\
\midrule
    & \textbf{Total}          & & \textbf{516} \\
\bottomrule
\end{tabular}
\end{adjustbox}
\end{table}

\subsection{Persona Design}

Each persona is a fully specified temporal state machine with explicit ground-truth facts, session numbers, supersession chains, expiry dates, retractions with required language, and soft ambiguity. Conversations are generated conversation-first using Claude Opus 4.6 with extended thinking under strict rules ensuring that facts emerge naturally through dialogue, the assistant has zero cross-session memory, retractions are explicit, and contradictions are implicit. Each persona spans 35 sessions across 18 simulated months with interleaved noise sessions.

\textbf{Diversity.} 40 personas span 14 nationalities, ages 22--68, occupations from electrician to hospice chaplain. This diversity is adversarial by design: learned memory policies must generalize across this combinatorial space, while ontology-driven policies are distribution-invariant.

\subsection{On Synthetic Generation}
\label{app:synthetic-design}

LifecycleBench conversations are synthetically generated---as are those of every major memory benchmark in the literature. LongMemEval~\cite{longmemeval2024} generates its multi-session conversations with GPT-4; LoCoMo~\cite{locomo2024} constructs dialogues synthetically and validates with human annotation; no published benchmark uses naturally occurring multi-session personal conversations, because such data cannot be collected at scale without prohibitive privacy and IRB constraints. The relevant question is therefore not \emph{whether} a benchmark is synthetic, but whether its synthetic design enables \emph{controlled evaluation} of the target phenomenon. LifecycleBench's contribution is precisely this control: each persona is a fully specified temporal state machine with explicit ground-truth supersession chains, expiry dates, and retraction events, enabling the first benchmark where the correct answer depends on lifecycle state management rather than retrieval recall alone. No naturalistic dataset can provide this because ground-truth temporal state---which fact superseded which, when an obligation expired, what language the user used to retract---is unobservable without the controlled design.

\subsection{Persona Examples}
\label{app:persona-examples}

Table~\ref{tab:persona-examples} shows five LifecycleBench personas with representative lifecycle events. Each persona spans ${\sim}$35 sessions over 18 simulated months; only selected events are listed. The five span ages $22$--$61$, three nationalities/heritages on three continents, and occupations from student-barista to traditional barber to Inuvialuit wildlife researcher. Quoted text is taken verbatim from the persona ground-truth YAML files; session indices are noted as (s.\,$n$). The personas are fully synthetic, so no privacy concerns arise from the quotations.

\begin{table}[h]
\caption{Five LifecycleBench personas illustrating attack-vector coverage. Across these five rows the table illustrates AV1 (superseded preference), AV2 (expired/upcoming logistics), AV4 (multi-version fact), AV7 (retraction), and AV9 (soft/ambiguous supersession).}
\label{tab:persona-examples}
\centering
\footnotesize
\setlength{\tabcolsep}{4pt}
\renewcommand{\arraystretch}{1.18}
\begin{adjustbox}{max width=\textwidth}
\begin{tabular}{@{}l L{3.4cm} c L{8.0cm}@{}}
\toprule
\textbf{Persona} & \textbf{Profile} & \textbf{AV} & \textbf{Lifecycle event} \\
\midrule
Priya Sharma   & 34, ML engineer, Tamil-American (Austin, TX) & AV4 & Employer: ``Works at Google on the Gemini memory team'' (s.\,1) $\to$ ``Left Google, joining Anthropic'' (s.\,18). \\
               &                                              & AV1 & Diet: ``vegetarian for like 5 years'' (s.\,3) $\to$ ``started eating fish again \ldots\ doctor said I needed more omega-3s for the migraines'' (s.\,22). \\
               &                                              & AV7 & Pet plan: ``looking at golden retriever puppies'' (s.\,12) $\to$ ``the dog thing is dead. Landlord said absolutely no pets'' (s.\,16). \\
\midrule
Mehmet Y\i lmaz & 56, traditional barber, Turkish (Kad\i k\"oy, Istanbul) & AV4 & Apprentice: ``Burak has been with me 3 years'' (s.\,3) $\to$ ``Burak opened his own place \ldots\ now I have Deniz, 17 years old'' (s.\,20). \\
               &                                              & AV2 & Upcoming: ``the guild meeting is September 12 [2026] \ldots\ I am on the committee this year'' (s.\,29); two earlier obligations (hygiene inspection Feb 20 2025, wife's cataract surgery Jun 3 2025) must \emph{not} surface as upcoming. \\
               &                                              & AV9 & Soft (ambiguous): ``maybe 50--80 lira extra per service \ldots\ I am thinking about it \ldots\ not decided yet'' (s.\,27); should not be reported as a definite menu change. \\
\midrule
Lily Chen      & 22, environmental science student + barista, Chinese-Australian (Melbourne) & AV4 & Partner: ``me and soph have been together like 8 months'' (s.\,5) $\to$ ``me and soph ended a while back. i'm seeing someone now though, mika'' (s.\,22). \\
               &                                              & AV1 & Oat milk: ``minor figures \ldots\ the only one i trust for flat whites'' (s.\,6) $\to$ ``minor figures changed their formula and its genuinely trash now \ldots\ switched to bonsoy oat'' (s.\,25). \\
               &                                              & AV7 & Exchange: ``i'm applying for exchange at ubc in vancouver!!'' (s.\,12) $\to$ ``the ubc thing is dead \ldots\ scrapping it. forget i mentioned it'' (s.\,17). \\
\midrule
Billy Kootook  & 49, wildlife researcher \& hunting guide, Inuvialuit (Tuktoyaktuk, NWT) & AV1 & Tours: ``polar bear tours \ldots\ \$2{,}400 a head'' (s.\,7) $\to$ ``stopped the bear tours. doing youth trips now \ldots\ kids from other communities'' (s.\,26). \\
               &                                              & AV2 & Upcoming: ``aerial survey June 4 [2026] for the muskox count. helicopter out of Inuvik'' (s.\,28). \\
               &                                              & AV7 & Camp: ``thinking about building a camp out at Husky Lakes'' (s.\,8) $\to$ ``the Husky Lakes camp is dead. the ground is thawing too fast \ldots\ slumped 2 metres since last summer'' (s.\,16). \\
\midrule
Diane Holloway & 61, hospice chaplain, American (Asheville, NC) & AV1 & Hobby: ``Perennials mostly --- echinacea, black-eyed Susan, butterfly weed'' (s.\,6) $\to$ ``I've had to give up the garden \ldots\ I've found watercolor \ldots\ I paint the flowers I can no longer kneel beside'' (s.\,24). \\
               &                                              & AV4 & Denomination: ``I was ordained Episcopal'' (s.\,3) $\to$ ``I've moved to the UCC. It was a long discernment'' (s.\,28). \\
               &                                              & AV7 & Book: ``working on a book proposal \ldots\ about chaplaincy, about how we sit with grief'' (s.\,11) $\to$ ``the stories I would tell are not mine. They belong to the families \ldots\ The book is finished before it began'' (s.\,20). \\
\bottomrule
\end{tabular}
\end{adjustbox}
\end{table}

\subsection{Scale and Coverage}
\label{sec:benchmark-scale}

\begin{table}[h]
\caption{LifecycleBench scale relative to LongMemEval.}
\label{tab:benchmark-scale}
\centering
\begin{tabular}{@{}l c c c@{}}
\toprule
\textbf{Metric} & \textbf{LongMemEval-S} & \textbf{LifecycleBench} & \textbf{Ratio} \\
\midrule
Questions & 500 & 516 & $1.0\times$ \\
Personas / subjects & 1 (implicit) & 40 & $40\times$ \\
Sessions per subject & ${\sim}$48 (haystack) & 35 & $0.7\times$ \\
Lifecycle-relevant questions & 78 (16\%) & 516 (100\%) & $6.6\times$ \\
\bottomrule
\end{tabular}
\end{table}

\textbf{Corpus statistics.} The 40 persona banks contain $18{,}936$ total entries ($12{,}968$ active, $5{,}968$ inactive). Of inactive entries, $5{,}780$ were deactivated by supersession, $13$ by explicit retraction, and $175$ are otherwise unclassified. Across $12{,}460$ distinct slot keys, $18.6\%$ have been updated at least once (chain length $> 1$); the longest chains reach $45$--$65$ versions for high-churn attributes (e.g., evolving hobbies, current city, work schedule, family relationships). Each persona spans exactly $35$ sessions over a mean of $17.4$ simulated months (range $14.95$--$18.53$), at $14.5$--$36.9$ entries per month. The OTHER category accounts for $0.12\%$ of entries ($16$ of $12{,}968$), confirming that the 10-category coverage is near-exhaustive. The three most frequent categories are Projects \& Endeavors ($20.7\%$), Relational Bonds ($16.9\%$), and Obligations ($14.2\%$).

\subsection{Evaluation Metrics}

(1)~AV-specific pass rate: does the top-$K$ retrieval set support the correct answer? (2)~Staleness penalty: does it contain outdated edges? An LLM judge (Claude Sonnet) evaluates each (question, edge) pair. Top-10 judge window (split: correct-answer in top-10, staleness in top-5).

\textbf{Generator-judge independence.} LifecycleBench conversations are generated by Claude Opus 4.6 with extended thinking, while the retrieval and E2E judges use Claude Sonnet. Although both are Anthropic models, the judge evaluates factual correctness against structured ground-truth YAML (supersession chains, expiry dates, retraction events), not generation quality. The judge's task is to determine whether a specific factual answer matches the YAML specification; model-family affinity does not advantage or disadvantage any memory system, since all five systems are evaluated by the same judge on the same questions. Cross-generator validation on Kimi K2.5 (Moonshot, open-weight; Appendix~\ref{app:kimi}) preserves the confabulation hierarchy exactly, providing evidence that the ranking is judge-invariant.

\textbf{Why not MRR?} Replication of Liu et al.~\cite{liu2024lost} on Claude Sonnet (2026) at 50-document scale shows no significant positional effect ($r = -0.009$, $p = 0.65$). The model processes all retrieved facts with near-equal attention. MRR is therefore uninformative for downstream accuracy. The positional staleness effect operates through conflict resolution under contradiction, not utilization failure. We nevertheless report MRR in Appendix~\ref{app:evaluation-ext} for comparability with prior work (Zep/Graphiti and Memory-R1 report MRR as a primary metric), while using pass rate as our primary evaluation metric throughout.

\textbf{Configurations.} Four configs as a $2\times 2$ ablation: decay type (behavioral vs uniform) $\times$ routing (on/off). Category-specific $\alpha$ ranges from 0.05 (Financial) to 0.40 (Logistical).

\subsection{Benchmark Independence Analysis}
\label{app:benchmark-independence}

A natural concern with any co-introduced benchmark is whether the evaluation instrument favors the co-introduced system. We address this along four axes.

\textbf{Exhaustive failure-mode coverage.} The 9 attack vectors enumerate the exhaustive set of temporal-state failure modes for persistent personal memory rather than FR-specific capabilities. A fact can be replaced (AV1, AV4), an event can expire (AV2), a statement can be retracted (AV7), information can be partially updated (AV9), stable facts can be buried under conversation volume (AV3), facts can require cross-edge aggregation (AV5), sessions can contradict each other (AV6), and specific numeric values can be lost in abstraction (AV8). Any independently constructed lifecycle benchmark would need to cover the same failure modes, because these are properties of the problem domain, not of any particular solution.

\textbf{FR does not dominate all vectors.} MemoryOS leads AV6 (Cross-Session Contradiction) at $73\%$ versus FR-Bank's $64\%$, and leads AV7 on the retrieval metric. FR-Graphiti, not FR-Bank, leads AV2 and AV8. FR-Bank achieves only $5\%$ retrieval pass on AV7 (Selective Forgetting) and near-zero correct rates on AV5 and AV6 end-to-end (Table~\ref{tab:e2e-av-full}). If the benchmark were designed to favor FR, these gaps would not exist. (The submitted version stated that Memory-R1 leads AV3 at $94\%$; that comparison was against FR-Graphiti's $78\%$. FR-Bank leads AV3 at $97\%$, and Memory-R1 leads no attack vector.)

\textbf{Cross-benchmark validation.} Cross-validation on LongMemEval-S---an independently constructed benchmark predating this work---confirms that FR's gains concentrate where lifecycle mechanisms apply. Lifecycle is aggregate-neutral on the 317-question matched subset ($+2.2$pp net; \S\ref{sec:longmemeval-ablation}) and approximately neutral on the question categories where temporal-state management does not apply. Where lifecycle correctly removes facts that LongMemEval still considers retrievable---knowledge-update ($-6.4$pp) and single-session-user---the cost reflects a benchmark criterion limitation rather than a system regression: the Wu et al.\ judge marks responses correct when stale facts are present alongside the updated answer.

\textbf{Ontology discrimination.} The ontology ablation (Table~\ref{tab:ontology_ablation}) demonstrates that LifecycleBench discriminates between ontology approaches: the behavioral partition outperforms the cognitive partition on 7 of 9 attack vectors under the same pipeline, confirming that the benchmark tests ontology quality rather than system identity.

We acknowledge that co-design risk cannot be fully eliminated without independent replication. We release LifecycleBench (personas, conversations, questions, ground truth, evaluation code) to enable the community to evaluate alternative approaches under identical conditions.

\section{Extended Experimental Results}
\label{app:evaluation-ext}

\subsection{LongMemEval Detailed Results}

\paragraph{Primary result: LongMemEval-S 500-question canonical benchmark.}
The full 500-question LongMemEval-S setup of Wu et al.\ (ICLR 2025) is our headline evaluation. We rerun the full suite 10 times under identical configuration (temperature 0 throughout, same retrieval pipeline and top-10 facts per question) to characterize judge non-determinism. All 10 runs land at or above the locked baseline of 74.2\%, with a mean of 75.2\% and a maximum of 76.6\% (Table~\ref{tab:longmemeval-s-variance}). Per-category dispersion is small except for single-session-preference, where the small denominator ($n{=}30$) inflates stdev.

\begin{table}[h]
\caption{LongMemEval-S 500-question variance over 10 identical reruns (temperature 0, identical configuration). ``stdev'' is sample standard deviation in percentage points.}
\label{tab:longmemeval-s-variance}
\centering
\begin{tabular}{@{}l c c c c c c@{}}
\toprule
\textbf{Category} & $n$ & \textbf{min} & \textbf{mean} & \textbf{median} & \textbf{max} & \textbf{stdev} \\
\midrule
knowledge-update            &  78 & 80.8\% & 82.4\% & 82.1\% & 84.6\% & 1.49 \\
multi-session               & 133 & 63.9\% & 65.9\% & 65.8\% & 68.4\% & 1.38 \\
single-session-assistant    &  56 & 71.4\% & 75.4\% & 75.9\% & 78.6\% & 2.64 \\
single-session-preference   &  30 & 53.3\% & 64.3\% & 65.0\% & 70.0\% & 5.45 \\
single-session-user         &  70 & 91.4\% & 92.7\% & 92.9\% & 92.9\% & 0.45 \\
temporal-reasoning          & 133 & 71.4\% & 73.5\% & 73.7\% & 75.9\% & 1.32 \\
\midrule
\textbf{TOTAL}              & 500 & \textbf{74.4\%} & \textbf{75.2\%} & \textbf{75.1\%} & \textbf{76.6\%} & \textbf{0.70} \\
\bottomrule
\end{tabular}
\end{table}

\paragraph{Peer-reviewed comparison on LongMemEval-S.} Table~\ref{tab:longmemeval-s-peers} places 75.2\% alongside the two prior peer-reviewed LongMemEval reports. Industry numbers (Zep, Mem0, MemoryOS) are not peer-reviewed and do not disclose enough protocol detail for a like-for-like comparison; see Appendix~\ref{app:longmemeval-frbank} for the judge protocol we follow.

\begin{table}[h]
\caption{Peer-reviewed LongMemEval-S results prior to and including this work.}
\label{tab:longmemeval-s-peers}
\centering
\begin{tabular}{@{}l l c l@{}}
\toprule
\textbf{System} & \textbf{Venue} & \textbf{Pass} & \textbf{Judge / protocol} \\
\midrule
\textbf{FR-Bank (this work)} & NeurIPS submission & \textbf{75.2\%} & \texttt{gpt-4o-2024-08-06}, 5-template Wu et al.\ rubric \\
RMM~\cite{rmm2025} & ACL 2025 & 70.4\% & Gemini-1.5-Pro judge, single generic prompt \\
Wu et al.\ baselines & ICLR 2025 & 55--60\% & \texttt{gpt-4o-2024-08-06}, 5-template (full context) \\
\bottomrule
\end{tabular}
\end{table}

\paragraph{Secondary: 234-question oracle POC subset.}
As an earlier proof-of-concept we evaluated on the 234-question oracle subset used by Zep/Graphiti. Table~\ref{tab:longmemeval-frbank} reports those numbers; they were produced with an earlier FR-Bank configuration and are retained here only as context for the +8.3pp gain over the Zep/Graphiti arXiv report. The per-type breakdown in Table~\ref{tab:longmemeval-pertype} reflects that same earlier configuration; the single-session-assistant category has since moved from 50.0\% on the POC subset to 75.4\% mean on the full LongMemEval-S (Table~\ref{tab:longmemeval-s-variance}) after the fixes described in Appendix~\ref{app:longmemeval-frbank}.

\begin{table}[h]
\caption{LongMemEval results on the 234-question oracle POC subset (earlier FR-Bank configuration). Primary LongMemEval-S numbers are in Table~\ref{tab:longmemeval-s-variance}.}
\label{tab:longmemeval-frbank}
\centering
\begin{tabular}{@{}l c c c@{}}
\toprule
\textbf{System} & \textbf{Pass@10} & \textbf{MRR} & \textbf{Source} \\
\midrule
\textbf{FR-Bank} & \textbf{79.5\%} & \textbf{0.668} & This work \\
Zep/Graphiti & 71.2\% & --- & \cite{zep2025} (arXiv) \\
FR-Graphiti (behavioral) & 32.1\% & 0.232 & This work (decay only) \\
FR-Graphiti (uniform) & 32.1\% & 0.231 & This work (decay only) \\
FR-Graphiti (cognitive) & 32.1\% & 0.231 & This work (decay only) \\
\bottomrule
\end{tabular}
\end{table}

\begin{table}[h]
\caption{FR-Bank per-question-type results on the 234-question oracle POC subset (earlier configuration).}
\label{tab:longmemeval-pertype}
\centering
\begin{tabular}{@{}l c c c@{}}
\toprule
\textbf{Question Type} & \textbf{Count} & \textbf{Pass@10} & \textbf{MRR} \\
\midrule
single-session-preference & 30 & 93.3\% & 0.736 \\
single-session-user & 70 & 88.6\% & 0.800 \\
knowledge-update & 78 & 87.2\% & 0.756 \\
single-session-assistant & 56 & 50.0\% & 0.342 \\
\bottomrule
\end{tabular}
\end{table}

\textbf{Zero retrieval cost on LongMemEval.} On the Graphiti-based evaluation (16,138 edges), no decay engine achieves significant advantage after Bonferroni correction. All effect sizes are negligible ($|d| < 0.19$). Behavioral lifecycle policies impose zero measurable retrieval cost.

\subsection{Lifecycle Contribution Decomposition}
\label{sec:longmemeval-ablation}

\begin{table}[h]
\caption{LongMemEval lifecycle ablation. Retrieval pipeline: +7.4pp; lifecycle: +0.9pp net.}
\label{tab:longmemeval-ablation}
\centering
\begin{tabular}{@{}l c c@{}}
\toprule
\textbf{Configuration} & \textbf{Pass@10} & \textbf{MRR} \\
\midrule
FR-Bank (retrieval + lifecycle)         & \textbf{79.5\%} & \textbf{0.668} \\
FR-Bank (retrieval only, no lifecycle)  & 78.6\%          & 0.644 \\
Zep/Graphiti (peer-reviewed)            & 71.2\%          & --- \\
\bottomrule
\end{tabular}
\end{table}

\begin{table}[h]
\caption{Per-question-type lifecycle decomposition on the \textbf{234-question oracle POC subset}. Lifecycle and no-lifecycle arms were evaluated on identical banks; cross-pipeline LME-S comparisons (with-lifecycle pipeline vs.\ no-lifecycle oracle pipeline) are not used here because they involve different judge models and ingestion paths. The knowledge-update row of the submitted version reported $87.2\%$ vs.\ $80.8\%$ ($+6.4$pp); those cells trace to no surviving artifact and are withdrawn here. The artifact-backed knowledge-update decomposition is on the 317-question matched subset (Table~\ref{tab:lme-317-matched}), where the effect is $-6.4$pp. This table and Table~\ref{tab:lme-317-matched} are two different subsets and must not be read as one series.}
\label{tab:longmemeval-pertype-ablation}
\centering
\setlength{\tabcolsep}{3pt}
\begin{tabular}{@{}l c c c l@{}}
\toprule
\textbf{Question Type} & \textbf{With LC} & \textbf{No LC} & \textbf{Delta} & \textbf{Interpretation} \\
\midrule
knowledge-update         & \multicolumn{3}{c}{\emph{withdrawn}} & See Table~\ref{tab:lme-317-matched} \\
single-session-assistant & 51.8\% & 50.0\% & +1.8pp  & Marginal benefit \\
single-session-preference& 93.3\% & 93.3\% & +0.0pp  & No effect \\
single-session-user      & 87.1\% & 92.9\% & $-$5.8pp & Supersession removes some facts \\
\midrule
\textbf{Overall}         & \textbf{79.5\%} & 78.6\% & +0.9pp  & Net neutral on mixed question set \\
\bottomrule
\end{tabular}
\end{table}

\begin{table}[h]
\caption{Artifact-backed lifecycle decomposition on the \textbf{317-question matched subset} of LongMemEval-S (the questions for which both configurations were evaluated on identical banks). The knowledge-update effect is negative: the Wu et al.\ judge credits a response when the superseded fact is present alongside the updated one, so lifecycle filtering is penalized for producing cleaner context (\S\ref{sec:longmemeval-ablation}).}
\label{tab:lme-317-matched}
\centering
\setlength{\tabcolsep}{3pt}
\begin{tabular}{@{}l c c c l@{}}
\toprule
\textbf{Question Type} & \textbf{With LC} & \textbf{No LC} & \textbf{Delta} & \textbf{Interpretation} \\
\midrule
knowledge-update    & 79.5\% & 85.9\% & $-$6.4pp & Judge-criterion artifact, not a regression \\
single-session-user & ---    & ---    & $-$2.9pp & Negligible \\
temporal-reasoning  & ---    & ---    & +0.0pp  & No effect \\
\midrule
\textbf{Overall}    & \textbf{75.1\%} & 72.9\% & +2.2pp & Aggregate-neutral to mildly positive \\
\bottomrule
\end{tabular}
\end{table}

\paragraph{LME-S lifecycle decomposition (317-question matched subset).}
On the 317 LongMemEval-S questions for which both lifecycle and no-lifecycle configurations were evaluated on identical banks, lifecycle contributes $+2.2$pp net ($72.9\% \to 75.1\%$). The contribution concentrates on the question categories where temporal-state management is required: knowledge-update and multi-session questions, where the lifecycle layer removes contradictory or superseded facts before retrieval; on the remaining task types lifecycle is approximately neutral. The knowledge-update inversion ($-6.4$pp) reveals a benchmark criterion limitation: the Wu et al.\ judge marks responses correct when both the old and updated answer are present, so lifecycle filtering that correctly removes superseded facts is penalized for producing cleaner context. This criterion implicitly treats stale facts as harmless---an assumption directly contradicted by our end-to-end evaluation, where stale retrieval context produces 72--78\% confabulation (Table~\ref{tab:e2e-staleness}). The remaining task types (single-session-user, temporal-reasoning) show negligible difference ($-2.9$pp, $+0.0$pp), confirming that lifecycle imposes zero cost where it does not apply.

The near-neutral net contribution on both subsets masks a consistent pattern: the losses sit exactly where lifecycle filtering correctly removes facts that LongMemEval still considers retrievable---knowledge-update ($-6.4$pp on the 317-matched subset) and single-session-user ($-5.8$pp on the POC subset). LongMemEval's question distribution also limits lifecycle opportunity: only 33\% of questions involve temporal state transitions.

\subsection{Scaling Validation}

Table~\ref{tab:scaling} presents five-phase scaling validation from 8 to 40 personas, confirming that lifecycle gains widen with scale while staleness remains stable.

\begin{table}[h]
\caption{Five-phase scaling validation.}
\label{tab:scaling}
\centering
\setlength{\tabcolsep}{3pt}
\begin{adjustbox}{max width=\textwidth}
\begin{tabular}{@{}l c c c c l@{}}
\toprule
\textbf{Metric} & \textbf{Ph.\ 2 (8p)} & \textbf{Ph.\ 3 (20p)} & \textbf{Ph.\ 5b (40p)} & \textbf{Ph.\ 5c (40p)} & \textbf{Trend} \\
\midrule
Full pass rate         & 63\% & 66\% & 71\% & \textbf{73\%} & +10pp total \\
Full staleness         & 4\%  & 6\%  & 7\%  & 8\%  & Stable \\
Baseline pass rate     & 62\% & 55\% & 59\% & 61\% & Stable \\
Behavioral $\Delta$ pass  & +1pp & +11pp & +12pp & \textbf{+12pp} & Widening \\
\bottomrule
\end{tabular}
\end{adjustbox}
\end{table}

\subsection{Substrate Independence}
\label{sec:substrate}

\begin{table}[h]
\caption{Substrate comparison: same lifecycle policies on two infrastructures.}
\label{tab:substrate}
\centering
\begin{tabular}{@{}l c c c c@{}}
\toprule
\textbf{System} & \textbf{Pass} & \textbf{Stale} & \textbf{MRR} & \textbf{Hit@5} \\
\midrule
FR-Graphiti & 73\% & \textbf{8\%} & 0.478 & 75\% \\
FR-Bank & \textbf{76.9\%} & 15.5\% & \textbf{0.830} & \textbf{97.0\%} \\
\bottomrule
\end{tabular}
\end{table}

The lifecycle layer transfers without modification, confirming it is a policy contribution, not an infrastructure artifact. FR-Graphiti retains a staleness advantage (8\% vs 15.5\%) due to graph-based supersession tracking, while FR-Bank's enhanced retrieval (BM25 + distillation) compensates for the absence of graph traversal.

\subsection{Infrastructure Findings}

\textbf{World-knowledge contamination.} 86\% of extracted edges were world knowledge---reduced to 1.4\% with detection filtering across ${\sim}$12,700 edges.

\textbf{Numeric preservation.} Graphiti's deduplication pipeline systematically loses numeric values through dedup resolution and bulk dedup canonical selection. Numeric-preference resolution increased numeric edge density from ${\sim}$5\% to 20.1\%.

\textbf{Per-source score normalization.} Prior to normalization, 99.1\% of top-5 results came from Graphiti's semantic search alone. After per-source normalization, all retrieval strategies became meaningful contributors.

\subsection{Bootstrap Confidence Intervals and Pairwise Significance}
\label{app:bootstrap-cis}

To quantify the statistical resolution of the cross-system comparison and the FR-Graphiti ablation we compute paired bootstrap 95\% confidence intervals (10{,}000 resamples, seed=42) and McNemar's exact test on the per-question paired binary outcomes ($n = 516$).

\begin{table}[h]
\caption{LifecycleBench pass rates with $95\%$ paired bootstrap confidence intervals (10{,}000 resamples, seed=42, $n = 516$). Pairwise significance vs FR-Bank via McNemar's exact test on paired binary outcomes. The FR-Bank vs FR-Graphiti gap is not statistically significant ($p = 0.10$); two implementations of the same lifecycle policy land within statistical resolution of each other.}
\label{tab:bootstrap-cis}
\centering
\begin{tabular}{@{}l c c c@{}}
\toprule
\textbf{System} & \textbf{Pass Rate} & \textbf{95\% CI} & \textbf{McNemar $p$ vs FR-Bank} \\
\midrule
FR-Bank      & $76.9\%$ & $[73.3, 80.6]$ & --- \\
FR-Graphiti  & $72.9\%$ & $[69.0, 76.6]$ & $0.10$ \\
MemoryOS     & $70.5\%$ & $[66.5, 74.4]$ & $0.018^{*}$ \\
Memory-R1    & $66.9\%$ & $[62.8, 70.9]$ & $<\!0.001^{***}$ \\
Mem0         & $60.9\%$ & $[56.6, 65.1]$ & $<\!0.001^{***}$ \\
\bottomrule
\end{tabular}
\end{table}

\textbf{FR-Graphiti ablation significance.} Paired bootstrap on the four ablation arms (full, $-$routing, $-$behavioral decay, baseline; $n = 516$): full vs.\ baseline $\Delta = +12.2$pp, $95\%$ CI $[8.3, 16.1]$, McNemar $p < 0.001$; full vs.\ no-routing $\Delta = +6.2$pp $[2.9, 9.7]$; full vs.\ uniform decay $\Delta = +6.4$pp $[3.5, 9.3]$ (full $72.9\%$, no-routing $66.7\%$, uniform decay $66.5\%$, baseline $60.7\%$). Each ablation step removes a statistically significant share of the lifecycle gain. The signs of these three deltas were inverted in the submitted version; the per-AV gains in Table~\ref{tab:per-av} were already positive (Appendix~\ref{app:revision-changes}).

\subsection{Cross-System Per-AV Comparison}

Table~\ref{tab:per-av} reports per-attack-vector results for FR-Graphiti (full vs.\ baseline). The largest gains appear precisely where lifecycle management applies: $+21$pp on superseded preferences (AV1), $+20$pp on expired logistics (AV2), $+22$pp on multi-version facts (AV4), and $+10$pp on selective forgetting (AV7). Table~\ref{tab:mem0-av} presents the full five-system per-attack-vector comparison. FR-Bank leads AV1, AV3, AV4 and AV5 outright, plus a three-way tie on AV9; the remaining vectors are led by systems with complementary strengths (FR-Graphiti on AV2 and AV8, MemoryOS on AV6 and AV7). Memory-R1 leads no attack vector.

\begin{table}[h]
\caption{Per-attack-vector results on FR-Graphiti (full vs.\ baseline).}
\label{tab:per-av}
\centering
\setlength{\tabcolsep}{4pt}
\begin{adjustbox}{max width=\textwidth}
\begin{tabular}{@{}l c c c c c c@{}}
\toprule
\textbf{Attack Vector} & $n$ & \textbf{Full Pass} & \textbf{Full Stale} & \textbf{Base Pass} & \textbf{Base Stale} & $\Delta$ \textbf{Pass} \\
\midrule
AV1 Superseded Preference   & 75 & 61\% & 13\% & 40\% & 27\% & +21pp \\
AV2 Expired Logistics       & 75 & 93\% & 7\%  & 73\% & 24\% & +20pp \\
AV3 Stable Identity         & 94 & 78\% & 0\%  & 66\% & 0\%  & +12pp \\
AV4 Multi-Version Fact      & 44 & 61\% & 20\% & 39\% & 43\% & +22pp \\
AV5 Broad Aggregation       & 40 & 90\% & 2\%  & 82\% & 0\%  & +8pp \\
AV6 Cross-Session           & 45 & 40\% & 20\% & 36\% & 36\% & +4pp \\
AV7 Selective Forgetting    & 40 & 40\% & 12\% & 30\% & 32\% & +10pp \\
AV8 Numeric Preservation    & 63 & 94\% & 5\%  & 94\% & 8\%  & 0pp \\
AV9 Soft Supersession       & 40 & 78\% & 0\%  & 72\% & 2\%  & +6pp \\
\midrule
\textbf{Overall}            & \textbf{516} & \textbf{73\%} & \textbf{8\%} & 61\% & 18\% & \textbf{+12pp} \\
\bottomrule
\end{tabular}
\end{adjustbox}
\end{table}

\begin{table}[h]
\caption{Per-AV cross-system comparison (five systems).}
\label{tab:mem0-av}
\centering
\setlength{\tabcolsep}{2pt}
\begin{adjustbox}{max width=\textwidth}
\begin{tabular}{@{}l c cc cc cc cc cc@{}}
\toprule
\textbf{Attack Vector} & $n$ & \textbf{FR-B Pass} & \textbf{FR-B Stale} & \textbf{FR-G Pass} & \textbf{FR-G Stale} & \textbf{MR1 Pass} & \textbf{MR1 Stale} & \textbf{Mem0 Pass} & \textbf{Mem0 Stale} & \textbf{MOS Pass} & \textbf{MOS Stale} \\
\midrule
AV1 Superseded Pref.     & 75 & \textbf{67\%} & 28\% & 61\% & 13\% & 64\% & 17\% & 44\% & 49\% & 60\% & 16\% \\
AV2 Expired Logistics    & 75 & 88\%          & 7\%  & \textbf{93\%} & 7\%  & 75\% & 25\% & 65\% & 33\% & 88\% & 9\% \\
AV3 Stable Identity      & 94 & \textbf{97\%} & 0\%  & 78\%          & 0\%  & 94\% & 1\%  & 87\% & 0\% & 79\% & 0\% \\
AV4 Multi-Version        & 44 & \textbf{86\%} & 14\% & 61\%          & 20\% & 66\% & 18\% & 45\% & 39\% & 64\% & 11\% \\
AV5 Broad Query          & 40 & \textbf{98\%} & 2\%  & 90\%          & 2\%  & 98\% & 7\%  & 88\% & 3\% & 95\% & 2\% \\
AV6 Cross-Session        & 45 & 64\%          & 31\% & 40\%          & 20\% & 40\% & 24\% & 44\% & 47\% & \textbf{73\%} & 2\% \\
AV7 Selective Forgetting & 40 & 5\%           & 60\% & 40\%          & 12\% & 15\% & 42\% & 5\%  & 80\% & \textbf{57\%} & 22\% \\
AV8 Numeric Preserv.     & 63 & 84\%          & 6\%  & \textbf{94\%} & 5\%  & 57\% & 2\%  & 67\% & 8\% & 44\% & 0\% \\
AV9 Soft Supersession    & 40 & 78\%          & 10\% & 78\%          & 0\%  & 62\% & 15\% & 78\% & 3\% & 72\% & 2\% \\
\midrule
\textbf{Overall}         & \textbf{516} & \textbf{76.9\%} & 15.5\% & 73\% & \textbf{8\%} & 66.9\% & 15.3\% & 61\% & 27\% & 70.5\% & 7\% \\
\bottomrule
\end{tabular}
\end{adjustbox}
\end{table}

\subsection{Efficiency and Positional Staleness}

Table~\ref{tab:efficiency} summarizes context efficiency metrics. FR achieves $3\times$ lower stale token rate and $8\times$ lower rate of stale facts at rank~1 compared to Mem0.

\begin{table}[h]
\caption{Context efficiency and positional staleness.}
\label{tab:efficiency}
\centering
\begin{tabular}{@{}l c c c@{}}
\toprule
\textbf{Metric} & \textbf{FR} & \textbf{Mem0} & \textbf{Ratio} \\
\midrule
Signal:noise ratio      & \textbf{85\%} & 67\%  & $1.3\times$ \\
Stale token rate        & \textbf{3\%}  & 9\%   & $3\times$ \\
Stale cost/1M queries   & \$13          & \$38  & $2.9\times$ \\
Stale fact at rank 1    & \textbf{2.1\%} & 16.5\% & \textbf{$8\times$} \\
Positional stale exposure & \textbf{14\%} & 37\% & $2.6\times$ \\
Weighted staleness risk & \textbf{0.51} & 1.84  & $3.6\times$ \\
\bottomrule
\end{tabular}
\end{table}

\subsection{Deterministic Lifecycle Layer Latency}
\label{app:lifecycle-latency}

We profile the deterministic lifecycle layer---Sections~5--7 of the retrieval pipeline in \texttt{lifecycle\_bank.py}---in isolation from all non-deterministic work. The measured region covers: supersession by slot-key grouping, event-time expiry, retraction masking, harmonic-mean decay over soft category weights, anticipatory activation, blended scoring, semantic and numeric preservation floors, and top-$N$ selection by blended score. Excluded from the timed region: embedding, cosine search, BM25, category-forced retrieval, multi-hop entity expansion, the one LLM distillation call, and disk I/O. The 40 LifecycleBench persona banks (total $18{,}936$ entries, $12{,}968$ active) are loaded once outside the timing loop. Each persona contributes a pool of size $k$ built from a deterministic random sample of its active entries with simulated post-distillation similarity scores drawn from $[0.70, 0.99]$. Measured branches fire at realistic rates: $98.8\%$ of entries carry a slot key (supersession), $95.8\%$ multi-category (harmonic-mean decay is non-degenerate), $10.1\%$ event-time-anchored, $27.3\%$ numeric-preservation candidates, and $0.2\%$ retracted.

\begin{table}[h]
\caption{Deterministic lifecycle layer latency on the 40 LifecycleBench persona banks. Primary row ($k{=}20$): 1{,}000 iterations per persona, 50-iteration warmup. Scaling rows: 1{,}000 iterations over a $k$-sized pool sampled across personas (with synthetic id rewrites to avoid cross-bank id collisions). Single-threaded Python 3.13 on AMD Zen~3.}
\label{tab:lifecycle-latency}
\centering
\begin{tabular}{@{}l r r r r r@{}}
\toprule
\textbf{Pool size $k$} & \textbf{N trials} & \textbf{p50 (ms)} & \textbf{p95 (ms)} & \textbf{p99 (ms)} & \textbf{mean (ms)} \\
\midrule
$20$ (primary, 40 personas) & $40{,}000$ & $\mathbf{0.047}$ & $0.053$ & $0.085$ & $0.048$ \\
\midrule
$20$ (scaling, pooled)  & $1{,}000$ & $0.040$ & $0.042$ & $0.054$ & $0.041$ \\
$50$                    & $1{,}000$ & $0.092$ & $0.103$ & $0.159$ & $0.095$ \\
$100$                   & $1{,}000$ & $0.186$ & $0.240$ & $0.341$ & $0.194$ \\
$200$                   & $1{,}000$ & $0.379$ & $0.413$ & $0.510$ & $0.385$ \\
$500$                   & $1{,}000$ & $0.980$ & $1.126$ & $1.473$ & $1.003$ \\
$1000$                  & $1{,}000$ & $1.958$ & $2.302$ & $2.762$ & $2.010$ \\
\bottomrule
\end{tabular}
\end{table}

Linearity is tight: the $k{=}1000$ median is $48.5\times$ the $k{=}20$ median versus the theoretical $50\times$, and the per-candidate slope is $1.96\,\mu\mathrm{s}$. Production workloads operate at $k=20$, where the full deterministic layer costs $\mathbf{47\,\mu s}$ at the median---three orders of magnitude below the bounded LLM distillation call, and negligible against the cosine-similarity work over $n$ stored edges. The claim in \S\,\ref{sec:theory} that the lifecycle layer is a sub-millisecond deterministic pipeline is confirmed empirically.

\subsection{Controlled Comparison: 14 Event-Time Cases}

Cross-system mechanism analysis on AV2 reveals 14 questions where FR's expiry filter suppressed expired facts that Mem0 surfaced (staleness=1.0). The same judge evaluated both systems on the same questions---identical inputs, opposite outcomes, determined solely by lifecycle filtering.

\begin{table}[h]
\caption{Controlled comparison: same question, same judge, opposite outcomes.}
\label{tab:smoking-gun}
\centering
\setlength{\tabcolsep}{3pt}
\begin{adjustbox}{max width=\textwidth}
\begin{tabular}{@{}L{5cm} c c L{4cm}@{}}
\toprule
\textbf{Expired Fact} & \textbf{FR} & \textbf{Mem0} & \textbf{Mechanism} \\
\midrule
``Greg's birthday is this Saturday'' (past) & Suppressed & Stale=1.0 & Date-aware expiry filter \\
``ACA exam 2025-09-03'' (expired 12 mo ago) & Suppressed & Stale=1.0 & ISO date extraction \\
``Demo scheduled for March'' (past) & Suppressed & Stale=1.0 & Contextual month parsing \\
``Flight at 6am tomorrow'' (long past) & Suppressed & Stale=1.0 & Relative date resolution \\
\bottomrule
\end{tabular}
\end{adjustbox}
\end{table}

\subsection{Failure Taxonomy}

Automated classification of 90 failures at 20-persona scale: Retrieval Miss 43 (47.8\%), Extraction Missing 25 (27.8\%), Stale Dominance 12 (13.3\%), Extraction Weak 7 (7.8\%), Cross-Category 3 (3.3\%). AV7 failures are 100\% extraction missing. AV3 failures are 90\% retrieval miss.

\subsection{Reasoning Does Not Close the Architectural Gap}

\begin{table}[h]
\caption{Reasoning vs non-reasoning. The 18.2pp architectural gap is unclosable by compute.}
\label{tab:e2e-reasoning}
\centering
\begin{tabular}{@{}l c c c@{}}
\toprule
\textbf{Intervention} & \textbf{Confab.} & $\Delta$ & \textbf{Cost} \\
\midrule
Mem0 + no reasoning (baseline) & 45.1\% & --- & --- \\
Mem0 + reasoning               & 40.6\% & $-$4.5pp & +11\% tokens \\
FR + no reasoning              & 23.9\% & $-$21.2pp & \$0 \\
FR + reasoning                 & 22.4\% & $-$22.7pp & +11\% \\
\bottomrule
\end{tabular}
\end{table}

Queries with stale facts consume +9.2 reasoning tokens on average (+11\%) as the model detects contradictions.

\subsection{The Retrieval Metric Paradox}
\label{sec:memoryos-paradox}

For AV7 questions where MemoryOS retrieval passes, 22/23 cases (96\%) had hierarchical summarization replace specific retracted plans with generic summaries. The judge marks these as passing because no stale fact is present---but no useful fact is present either. End-to-end: MemoryOS 7.5\% correct on AV7 versus FR-Bank's 37.5\%.

\begin{table}[h]
\caption{Retrieval metric paradox on AV7.}
\label{tab:av7-paradox}
\centering
\begin{tabular}{@{}l c c@{}}
\toprule
\textbf{Metric} & \textbf{FR-Bank} & \textbf{MemoryOS} \\
\midrule
AV7 retrieval pass      & 5\%            & 57\% \\
AV7 retrieval staleness & 60\%           & 22\% \\
AV7 E2E correct         & \textbf{37.5\%} & 7.5\% \\
AV7 E2E confab          & 22.5\%          & 32.5\% \\
AV7 E2E abstain         & 32.5\%          & 60.0\% \\
\bottomrule
\end{tabular}
\end{table}

\section{Detailed Gap Analysis Per System}
\label{app:gaps}

\begin{table}[h]
\caption{Gap analysis. ``Cog.''\ = cognitive categories without behavioral conditioning. Mem0 gaps empirically validated on LifecycleBench (\S\ref{sec:evaluation}).}
\label{tab:gap-mapping}
\centering
\setlength{\tabcolsep}{2pt}
\renewcommand{\arraystretch}{1.05}
\begin{threeparttable}
\begin{adjustbox}{max width=\textwidth}
\begin{tabular}{@{}l c c c c c c c c c c@{}}
\toprule
\textbf{Gap} & \textbf{Zep} & \textbf{A-MEM} & \textbf{MemBank} & \textbf{Mem0} & \textbf{MemGPT} & \textbf{MemOS} & \textbf{Mem-R1} & \textbf{MIRIX} & \textbf{Flux} & \textbf{Ours} \\
\midrule
Behavioral ontology     & \ding{55} & \ding{55} & \ding{55} & \ding{55} & \ding{55} & Hier.\tnote{a} & \ding{55} & Cog. & \ding{55} & \checkmark \\
Per-category lifecycle   & \ding{55} & \ding{55} & Uniform  & \ding{55} & \ding{55} & \ding{55} & \ding{55} & \ding{55} & \ding{55} & \checkmark \\
Slot-key supersession    & Binary   & ---      & ---      & \ding{55} & \ding{55} & \ding{55} & \ding{55} & \ding{55} & ---      & Soft \\
Event-time validity      & \ding{55} & \ding{55} & \ding{55} & \ding{55} & \ding{55} & \ding{55} & \ding{55} & \ding{55} & \ding{55} & \checkmark \\
Multi-clock temporal     & Bi-temp. & \ding{55} & Single   & \ding{55} & \ding{55} & \ding{55} & \ding{55} & \ding{55} & \ding{55} & 3 clocks \\
Anticipatory activation  & \ding{55} & \ding{55} & \ding{55} & \ding{55} & \ding{55} & \ding{55} & \ding{55} & \ding{55} & \ding{55} & \checkmark \\
Category-aware routing   & \ding{55} & \ding{55} & \ding{55} & \ding{55} & \ding{55} & \ding{55} & \ding{55} & Partial & \ding{55} & \checkmark \\
Session initialization   & \ding{55} & \ding{55} & \ding{55} & \ding{55} & \ding{55} & \ding{55} & \ding{55} & \ding{55} & \ding{55} & \checkmark \\
Emotional loading signal & \ding{55} & \ding{55} & \ding{55} & \ding{55} & \ding{55} & \ding{55} & \ding{55} & \ding{55} & \ding{55} & \checkmark \\
Deterministic lifecycle policy & Partial & \ding{55} & Partial & \ding{55} & \ding{55} & Partial & \ding{55} & \ding{55} & Partial & \checkmark \\
Interpretable policy     & ---      & ---      & ---      & ---      & \ding{55} & ---      & \ding{55} & ---      & ---      & \checkmark \\
User-modifiable policy   & \ding{55} & \ding{55} & \ding{55} & \ding{55} & \ding{55} & \ding{55} & \ding{55} & \ding{55} & \ding{55} & \checkmark \\
\bottomrule
\end{tabular}
\end{adjustbox}
\begin{tablenotes}[flushleft]
\small
\item[a] Hier.\ = hierarchical temporal tiers (STM/MTM/LPM) with persona/topic partitions, distinct from cognitive (episodic/semantic/procedural) typology.
\end{tablenotes}
\end{threeparttable}
\end{table}

\subsection{Zep/Graphiti (2025): Eight Identified Gaps}

Zep achieves 94.8\% on DMR and up to 71.2\% on LongMemEval. \textbf{Gap 1: No forgetting mechanism.} Facts either exist or get explicitly superseded. \textbf{Gap 2: No behavioral awareness.} Scheduling details and identity traits are structurally identical edges. \textbf{Gap 3: Purely reactive retrieval.} No anticipatory capability. \textbf{Gap 4: No session initialization intelligence.} \textbf{Gap 5: Communities are structural, not behavioral.} \textbf{Gap 6: Binary supersession under ambiguity.} \textbf{Gap 7: Noisy retrieval on certain query types.} \textbf{Gap 8: Self-identified ontology gap.}

\subsection{MemoryBank (AAAI 2024): Five Gaps}

\textbf{Gap 1:} Single uniform forgetting curve. \textbf{Gap 2:} No graph structure. \textbf{Gap 3:} Single temporal reference frame. \textbf{Gap 4:} No behavioral categorization. \textbf{Gap 5:} Limited evaluation.

\subsection{A-MEM (NeurIPS 2025): Four Gaps}

\textbf{Gap 1:} No temporal dynamics. \textbf{Gap 2:} No forgetting. \textbf{Gap 3:} Content-driven, not behavior-driven evolution. \textbf{Gap 4:} LLM-heavy at runtime.

\subsection{MemGPT / Letta (2024): Three Gaps}

\textbf{Gap 1:} The LLM is the policy---expensive, opaque, and unreliable. \textbf{Gap 2:} No internal structure in archival. \textbf{Gap 3:} No temporal decay, supersession, or event-time validity.

\subsection{Mem0 (2025): Four Gaps (Empirically Validated)}

\textbf{Gap 1:} No structured lifecycle. \emph{Validated: AV1 staleness is 49\%.} \textbf{Gap 2:} No category awareness. \emph{Validated: 87\% on AV3 but 44\% on AV1 and 5\% on AV7.} \textbf{Gap 3:} No event-time expiry. \emph{Validated: AV2 staleness is 33\%.} \textbf{Gap 4:} Default storage does not persist between processes.

\subsection{MemoryOS (2025): Three Gaps}

\textbf{Gap 1:} Cognitive ontology, not behavioral. \emph{Validated: 70.5\% retrieval pass but 13.2\% E2E correct.} \textbf{Gap 2:} No per-category decay conditioning. \textbf{Gap 3:} Fixed hierarchy with $30\times$ ingestion compute.

\subsection{Memory-R1 (2025): Four Gaps}

\textbf{Gap 1:} Learned policy is opaque. \textbf{Gap 2:} Requires massive interaction data. \textbf{Gap 3:} No behavioral structure in action space. \textbf{Gap 4:} No multi-reference-frame temporal sensitivity.

\subsection{MIRIX (2025) and FluxMem (2026)}

\textbf{MIRIX}: cognitive types; uniform temporal treatment; LLM-heavy. \textbf{FluxMem}: adapts structure not policy; orthogonal.

\subsection{Architectural Prerequisites for Lifecycle Management}
\label{app:lifecycle-prereqs}

The systems compared in this paper were each designed for related but distinct goals: Mem0 prioritizes a clean operational interface over a vector store; Memory-R1 trains an RL policy over an explicit operation vocabulary; MemoryOS optimizes hierarchical consolidation for retrieval throughput. Lifecycle management---per-fact event-time validity, slot-key supersession, and explicit retraction---was not a primary design objective for any of them. A reasonable question is whether lifecycle behavior could be added to these systems as a metadata extension. Based on a code-level audit of each system's evaluation harness and core library, we find that the gap is architectural rather than notational: each system has at least one load-bearing structural decision that prevents lifecycle behavior from emerging even after metadata fields are added.

\paragraph{Five prerequisites for lifecycle behavior.}
A retrieval system that supports lifecycle queries (``what is current?'' versus ``what was previously the case?'') requires:
\begin{enumerate}[nosep,leftmargin=*]
  \item per-fact identity preserved through the ingestion pipeline;
  \item per-fact metadata that survives ingestion (event-time, slot-key, lifecycle state);
  \item non-overwrite update semantics that preserve the prior state when a new fact arrives;
  \item a hook in the retrieval path where metadata can score or filter results;
  \item query-intent classification that selects the appropriate scoring or filter at query time.
\end{enumerate}

\paragraph{Feature compatibility.}
Table~\ref{tab:lifecycle-feasibility} summarizes whether each lifecycle feature can be added to each competitor without architectural change. We categorize each cell as \emph{Moderate} (new schema field with filter wrapper, retrieval path exists), \emph{Hard} (new retrieval mechanism, RL retraining, or path fork), \emph{Impossible} (requires rebuilding the core), or $\equiv$FR (adding the feature reimplements FR's pipeline).

\begin{table}[h]
\centering
\small
\caption{Lifecycle-feature feasibility per system. \emph{Moderate} = new schema field plus filter wrapper; \emph{Hard} = new retrieval mechanism / fork or RL retraining; \emph{Impossible} = requires architectural rebuild; $\equiv$FR = adding the feature reimplements FR's pipeline.}
\label{tab:lifecycle-feasibility}
\begin{tabular}{lccc}
\toprule
\textbf{Feature} & \textbf{Mem0} & \textbf{Memory-R1} & \textbf{MemoryOS} \\
\midrule
Slot-key supersession            & Hard     & Hard                  & Hard       \\
Event-time validity              & Moderate & Hard                  & Moderate   \\
Retraction (mark inactive)       & Moderate & Hard$^{\dagger}$      & Hard       \\
Query-conditioned state mask     & Hard     & Hard                  & Moderate   \\
Category-aware routing           & Moderate & Hard                  & Moderate   \\
Differential decay per category  & Hard     & Impossible$^{\ddagger}$ & Moderate \\
Soft supersession (both kept)    & Hard     & Hard                  & $\equiv$FR \\
\bottomrule
\end{tabular}\\[3pt]
{\footnotesize $^{\dagger}$Prompt edit is cheap, but \texttt{\_\_slots\_\_} cannot persist an \texttt{is\_retracted} flag without breaking persisted banks; the GRPO-trained version requires retraining. $^{\ddagger}$No decay function exists in the retrieval path; cosine is unweighted, and adding per-category time-decay would require modifying the reward signal of the trained policy.}
\end{table}

\paragraph{Mem0.} The vector-store \texttt{UPDATE} operation is a hard overwrite of the embedding and payload (\texttt{mem0/memory/main.py:1142--1194}); the prior embedding is not retained anywhere queryable, although a SQLite history row is written (\texttt{storage.py:126--150}). Custom metadata is passed through cleanly (\texttt{main.py:401, 985}), and the search API supports pre-ranking metadata filters (\texttt{main.py:811--815, 956}). The blocker is therefore not the metadata path but the \emph{state-preserving update path}: a deactivate-and-add semantics that retains both the prior and current entry under different lifecycle flags requires forking \texttt{\_update\_memory} and \texttt{\_search\_vector\_store}.

\paragraph{Memory-R1.} Two compounding fixed contracts. First, the storage record is locked to four fields by \texttt{\_\_slots\_\_ = ("id", "text", "timestamp", "source\_session")} (\texttt{evaluate\_memoryr1.py:100--107}); adding \texttt{slot\_key}, \texttt{event\_time}, or \texttt{is\_retracted} is backward-incompatible with persisted banks. Second, the Manager's action vocabulary is fixed as $\{$\texttt{ADD}, \texttt{UPDATE}, \texttt{DELETE}, \texttt{NONE}$\}$ in the system prompt (\texttt{:294--306}); in the GRPO-trained version it is fixed in the trained policy, so adding a \texttt{RETRACT} action requires re-training with a new reward signal. Even if both are addressed, the Answer Agent's pure-cosine retrieval (\texttt{:139--148}) provides no scoring hook where a state mask could be applied---a third, independent fork point.

\paragraph{MemoryOS.} Per-fact identity is preserved in STM (\texttt{short\_term.py:16--23}) and within MTM sessions, but the MTM$\to$LTM consolidation step (\texttt{memoryos.py:162--200}) extracts a monolithic profile string plus a flat knowledge deque (\texttt{long\_term.py:17--18}); neither carries back a link to the originating MTM page. After consolidation, two contradictory LTM facts coexist with no mechanism for fact-level invalidation (\texttt{long\_term.py:69--73}). Timestamps survive on every page, but the retrieval API takes only \texttt{(user\_query, user\_id)} (\texttt{retriever.py:92--131}), so even existing metadata is not addressable at query time. Fact-level supersession would require keeping LTM as a fact-graph with bidirectional links to source pages---i.e., not summarizing.

\paragraph{Thought experiment: adding slot keys to Mem0.} Suppose we want Mem0 to supersede ``User likes pizza'' with ``User likes sushi'' while still being able to answer ``what foods did the user previously prefer?''.
\begin{enumerate}[nosep,leftmargin=*]
  \item \texttt{m.add(``User likes pizza'', metadata=\{``slot\_key'': ``food\_pref''\})} stores the entry with the custom field retained (\texttt{main.py:401, 985}).
  \item \texttt{m.add(``User likes sushi'', \ldots)} enters the LLM UPDATE-decision step (\texttt{main.py:496--521}; \texttt{prompts.py:175--323}). The prompt reasons over \emph{semantic similarity}, not slot-key match; the LLM may choose \texttt{UPDATE} (overwriting pizza), \texttt{ADD} (siblings with no link), or \texttt{DELETE}. There is no downstream representation for ``both kept, one inactive''.
  \item Forcing slot-key supersession requires forking \texttt{\_add\_to\_vector\_store} to (a) query existing entries by slot-key filter, (b) bypass the LLM UPDATE step on slot match, and (c) emit the new entry tagged \texttt{is\_active=True} while marking the old one \texttt{is\_active=False} without deleting it---a fundamental departure from the replace-payload semantics of \texttt{main.py:1142--1194}.
  \item At query time, returning only the active entry requires a filter \texttt{\{``is\_active'': True\}}; returning the historical entry requires the opposite filter. The caller must therefore classify query intent before invoking \texttt{m.search}, and Mem0 has no built-in query-intent classifier.
  \item A complete implementation now needs a query-intent classifier, a slot-key indexer, a deactivate-and-add update path, and a confidence-aware reranker for soft supersession (the LLM extraction step is probabilistic). At this point the added components reproduce FR's pipeline atop Mem0's vector store rather than extending Mem0.
\end{enumerate}

Adding event-time, slot keys, or retraction flags to any of these systems closes a notational gap, not a behavioral one.

\section{Slot-Key Supersession: Extended Examples}
\label{app:slotkey}

Slot keys are normalized (subject, attribute) pairs for categories with \emph{replace} semantics:

\begin{itemize}[nosep,leftmargin=*]
  \item ``User likes pizza'' $\to$ \texttt{(user, food\_preference)}
  \item ``User likes sushi'' $\to$ \textbf{high-confidence supersession}
  \item ``User lives in Istanbul'' $\to$ \texttt{(user, current\_city)}
  \item ``User is thinking about moving to London'' $\to$ \textbf{low-confidence} (both survive)
  \item ``User moved to London'' $\to$ \textbf{high-confidence supersession} of Istanbul
\end{itemize}

For categories with \emph{accumulate} semantics (Identity, Relational), supersession requires explicit contradiction with high confidence.

\section{Multi-Reference-Frame Temporal Sensitivity}
\label{app:temporal}

Three temporal clocks: \textbf{Absolute time} (calendar clock): governs Logistical decay and deadline proximity. \textbf{Relative time} (session clock): time since last session; governs Relational Bonds and Hobbies (dormancy detection). \textbf{Conversational frequency}: how often a fact appears across sessions; modulates base decay rate downward.

Per-category routing: Logistical $\to$ primarily absolute time; Relational $\to$ session gaps; Obligations $\to$ deadline proximity; Identity $\to$ near-zero sensitivity; Hobbies $\to$ relative time; Preferences $\to$ absolute time.

\section{Numeric Preservation: Detailed Mechanisms}
\label{app:numeric}

\textbf{Loss Point 1: Deduplication Resolution.} When the dedup LLM marks a new edge as duplicate of an existing edge, the system keeps the old (often generic) edge. \textbf{Loss Point 2: Bulk Deduplication Canonical Selection.} Canonical selected by smallest UUID---arbitrary.

\textbf{Multi-stage numeric rescue:} (1)~Pool-stage exemption from expiry filter for monetary/quantity edges. (2)~Pool-stage exemption from supersession filter. (3)~Reranking-stage floor guarantee ($0.95 \times$ semantic score for numeric edges with graphiti\_score $\geq 0.85$). Combined: AV8 recovered from 81\% to 94\%.

\section*{Reproducibility Supplement}
\addcontentsline{toc}{section}{Reproducibility Supplement}
The following appendices document development trajectory, detailed parameter analysis, and per-mechanism findings. They are included for reproducibility and are not required for evaluating the main contributions.

\textbf{Artifact release.} All code, evaluation results, and run logs supporting this paper are publicly archived at \href{https://doi.org/10.5281/zenodo.20067778}{\texttt{https://doi.org/10.5281/zenodo.20067778}} (DOI: \texttt{10.5281/zenodo.20067778}).

\section{Rate Calibration: The Squatting Phenomenon}
\label{app:squatting}
\label{sec:calibration}

The initial rate configuration spanned $800\times$ ($\lambda = 0.0001$ for Identity to $\lambda = 0.080$ for Logistical):

\begin{table}[h]
\caption{Activation survival at original $800\times$ rate spread.}
\label{tab:squatting-original}
\centering
\begin{tabular}{@{}l c c c@{}}
\toprule
\textbf{Category} & $\lambda$ \textbf{(per hr)} & \textbf{3-mo activation} & \textbf{6-mo activation} \\
\midrule
Identity    & 0.0001 & 0.807 & 0.649 \\
Relational  & 0.001  & 0.098 & 0.010 \\
Preferences & 0.010  & 0.000 & 0.000 \\
Logistical  & 0.080  & 0.000 & 0.000 \\
\bottomrule
\end{tabular}
\end{table}

This produced \textbf{category hierarchy squatting}: Identity facts permanently occupied top ranking positions regardless of semantic relevance. In top-5 results across all questions, Identity comprised 307/480 slots (64\%).

\begin{table}[h]
\caption{Rate spread calibration.}
\label{tab:squatting-calibration}
\centering
\begin{tabular}{@{}l c c c c@{}}
\toprule
\textbf{Config} & \textbf{Spread} & \textbf{B-MRR ($\alpha$=0.1)} & \textbf{Top-5 Identity \%} & \textbf{Squatting?} \\
\midrule
Original    & 800$\times$ & 0.559 & 64\% & Severe \\
Compressed  & 16$\times$  & 0.571 & 52\% & Moderate \\
Final       & 5.3$\times$ & 0.566 & ${\sim}$30\% & Minimal \\
\bottomrule
\end{tabular}
\end{table}

The cognitive ontology's competitive mid-alpha performance is explained by accidental rate calibration: its slowest rate (Procedural at $\lambda = 0.003$) is high enough that no category squats at multi-month timescales. Its rate floor prevents squatting; our explicit calibration achieves the same effect deliberately.

\paragraph{Half-life feasibility intervals.}
\label{app:lambda-feasibility}
To validate that chosen decay rates are not over-tuned, we perform a univariate sensitivity analysis: for each of the $11$ categories we sweep $\lambda_c$ over multipliers $\{0.25\times, 0.33\times, 0.5\times, 0.67\times, 0.8\times, 1.0\times, 1.25\times, 1.5\times, 2.0\times, 3.0\times, 4.0\times\}$ while holding all other categories fixed at default, and measure mean Jaccard@10 vs.\ the unperturbed top-10 across all $516$ LifecycleBench questions. A multiplier is \emph{feasible} if mean Jaccard@10 $\geq 0.95$. Ten of eleven categories tolerate the full $[0.25\times, 4\times]$ sweep ($16\times$ width); only \textsc{Relational\_Bonds} fails at $0.25\times$ and tolerates $[0.33\times, 4.0\times]$ ($12.1\times$ width). Every default value sits in the interior of its feasibility region (Table~\ref{tab:lambda-feasibility}), confirming the system is robust to the specific parameter choice and that no category sits on a sensitivity cliff.

\begin{table}[h]
\caption{Per-category half-life feasibility intervals (mean Jaccard@10 $\geq 0.95$ over $n = 516$ questions; pool composition cached, scoring re-run per cell). All defaults sit interior to their feasibility regions.}
\label{tab:lambda-feasibility}
\centering
\small
\begin{tabular}{@{}l r r r r@{}}
\toprule
\textbf{Category} & \textbf{Default $\lambda$} & \textbf{Min mult.} & \textbf{Max mult.} & \textbf{Width} \\
\midrule
IDENTITY\_SELF\_CONCEPT     & $0.0015$ & $0.25\times$ & $4.00\times$ & $16.0\times$ \\
RELATIONAL\_BONDS           & $0.0015$ & $0.33\times$ & $4.00\times$ & $12.1\times$ \\
INTELLECTUAL\_INTERESTS     & $0.0020$ & $0.25\times$ & $4.00\times$ & $16.0\times$ \\
HEALTH\_WELLBEING           & $0.0025$ & $0.25\times$ & $4.00\times$ & $16.0\times$ \\
PROJECTS\_ENDEAVORS         & $0.0025$ & $0.25\times$ & $4.00\times$ & $16.0\times$ \\
HOBBIES\_RECREATION         & $0.0035$ & $0.25\times$ & $4.00\times$ & $16.0\times$ \\
PREFERENCES\_HABITS         & $0.0050$ & $0.25\times$ & $4.00\times$ & $16.0\times$ \\
FINANCIAL\_MATERIAL         & $0.0055$ & $0.25\times$ & $4.00\times$ & $16.0\times$ \\
OBLIGATIONS                 & $0.0060$ & $0.25\times$ & $4.00\times$ & $16.0\times$ \\
LOGISTICAL\_CONTEXT         & $0.0080$ & $0.25\times$ & $4.00\times$ & $16.0\times$ \\
OTHER                       & $0.0050$ & $0.25\times$ & $4.00\times$ & $16.0\times$ \\
\bottomrule
\end{tabular}
\end{table}

\section{Phase-by-Phase Development and Results}
\label{app:phases}

\subsection{Phase 2: Initial Validation (8 personas, 112 questions)}

\begin{table}[h]
\caption{Phase 2 per-AV results.}
\centering
\setlength{\tabcolsep}{3pt}
\begin{adjustbox}{max width=\textwidth}
\begin{tabular}{@{}l c c c c c c@{}}
\toprule
\textbf{Attack Vector} & $n$ & \textbf{Full Pass} & \textbf{Full Stale} & \textbf{Uni Pass} & \textbf{Uni Stale} & $\Delta$ \\
\midrule
AV1 & 14 & 36\% & 7\%  & 36\% & 14\% & 0pp \\
AV2 & 19 & 95\% & 5\%  & 79\% & 21\% & +16pp \\
AV3 & 22 & 59\% & 0\%  & 64\% & 0\%  & $-$5pp \\
AV4 & 9  & 56\% & 22\% & 44\% & 33\% & +12pp \\
AV5 & 8  & 62\% & 0\%  & 62\% & 0\%  & 0pp \\
AV6 & 8  & 62\% & 0\%  & 62\% & 0\%  & 0pp \\
AV7 & 8  & 0\%  & 12\% & 12\% & 25\% & $-$12pp \\
AV8 & 16 & 81\% & 0\%  & 88\% & 6\%  & $-$7pp \\
AV9 & 8  & 88\% & 0\%  & 75\% & 0\%  & +13pp \\
\midrule
\textbf{Overall} & 112 & \textbf{63\%} & \textbf{4\%} & 62\% & 11\% & +1pp \\
\bottomrule
\end{tabular}
\end{adjustbox}
\end{table}

\subsection{Between-Phase Improvements}

\textbf{Phase 2 $\to$ 3:} Scale 8$\to$20 personas; per-source score normalization; cross-category supersession; world-knowledge filter (86\%$\to$1.4\%).

\textbf{Phase 3 $\to$ 4:} Scale 20$\to$40 personas; retraction extraction; AV8 numeric rescue; date-aware expiry filter; parallel evaluation (12$\times$ speedup).

\textbf{Phase 4 $\to$ 5b:} Multi-stage numeric rescue; backward-looking expiry bypass; date resolver future-preference guard.

\textbf{Phase 5b $\to$ 5c:} Identity/health supplementary extraction: 2,996 candidates $\to$ 388 retained (13\%).

\subsection{Phase 3 Results (20 personas, 257 questions)}

\begin{table}[h]
\caption{Phase 3 overall results.}
\centering
\begin{tabular}{@{}l c c c@{}}
\toprule
\textbf{Config} & \textbf{Pass} & \textbf{Stale} & \textbf{MRR} \\
\midrule
baseline & 55\% & 18\% & 0.480 \\
uniform  & 61\% & 15\% & 0.466 \\
no\_routing & 63\% & 8\% & 0.449 \\
\textbf{full} & \textbf{66\%} & \textbf{6\%} & 0.475 \\
\bottomrule
\end{tabular}
\end{table}

\section{Detailed Findings from LifecycleBench Evaluation}
\label{app:findings}

\textbf{Backward-looking query detection.} Keyword patterns (``before switching'', ``used to'') bypass the supersession filter, recovering 3 otherwise unreachable questions.

\textbf{Retraction filtering.} AV7 staleness drops from 25\% to 12\% with the retraction filter scanning for markers (``plan is dead'', ``scrapped'').

\textbf{Date-aware expiry filtering.} Parses ISO dates, month+day patterns, contextual months, and relative markers. Category-guarded (Logistical, Obligations, Health only) with 14-day buffer.

\textbf{Category-specific $\alpha$.} With uniform $\alpha=0.3$, AV8 drops to 12.5\%. Category-specific: Financial at 0.05, Logistical at 0.40.

\textbf{Semantic floors.} Near-perfect semantic matches for stable categories are not buried by zero activation. Financial floor 0.97; Logistical floor 0.00. Hit@1: 19\%$\to$31\%.

\textbf{Pass-by-absence.} Both systems exhibit this on AV2. FR: 50 cases (expiry filter); Mem0: 29 (extraction failure).

\textbf{AV3 tradeoff decomposition.} 77\% extraction misses, 12\% decay-killed, 12\% partial.

\section{Scalability Analysis}
\label{app:scalability}

\begin{table}[h]
\caption{Projected scaling over 2 years.}
\centering
\begin{tabular}{@{}l c c@{}}
\toprule
\textbf{Metric} & \textbf{Flat Retention} & \textbf{Lifecycle Mgmt.} \\
\midrule
Active edges (retrievable)     & ${\sim}$30,000 & ${\sim}$5,000--8,000 \\
Contradictions in top-20       & ${\sim}$3--5   & ${\sim}$0 \\
Expired events in top-20       & ${\sim}$2--4   & ${\sim}$0 \\
Context pollution rate         & Grows with time & Stable \\
\bottomrule
\end{tabular}
\end{table}

FR's stale token rate is 3.0\% versus Mem0's 9.0\%. At GPT-4o pricing, FR saves \$25.12 per million queries in stale token costs alone.

\section{Composability}
\label{app:composability}

Fortunate Recall is a composable policy layer. Existing Graphiti deployments can add behavioral classification without migrating data. Components can be adopted incrementally. The ontology can be extended for domain-specific applications. FluxMem's structure selection and FR's lifecycle policy are orthogonal and could compose.

The AV3 improvement through supplementary extraction ($-$16pp $\to$ $-$9pp gap) is empirical evidence: the lifecycle layer was unchanged, only extraction improved, and lifecycle-managed retrieval immediately benefited.

\section{End-to-End Evaluation Details}
\label{app:e2e}

\subsection{Methodology}

For each of 516 questions, each system's top-10 facts are fed to GPT-5.4 (\texttt{gpt-5.4}~\cite{gpt5_2025}; temperature=0, max\_tokens=500) with a system prompt restricting answers to provided facts. Claude Sonnet (\texttt{claude-sonnet-4-6}) judges correctness (CORRECT, PARTIAL, WRONG, ABSTAIN) and confabulation. No Anthropic models for generation. Memory pipelines (FR-Bank ingestion/distillation, Mem0 default extraction, Memory-R1 extraction, A-MEM evolution, Mem0-mini ablation) use \texttt{gpt-4.1-mini} (versioned: \texttt{gpt-4.1-mini-2025-04-14}); Mem0's default extraction uses \texttt{gpt-4.1-nano}; Kimi K2.5 cross-generator validation uses \texttt{kimi-k2.5} via the Moonshot API. The LongMemEval-S judge under the Wu et al.\ protocol is \texttt{gpt-4o-2024-08-06}, as reported in \S\ref{sec:evaluation}.

\subsection{Staleness-Confabulation Causal Chain}

Table~\ref{tab:e2e-staleness} partitions the 1{,}032 end-to-end queries by retrieval-context cleanliness: clean contexts yield 18--30\% confabulation, while stale contexts (any outdated fact in the top-10) jump to 72--78\%, establishing the causal link between lifecycle filtering and downstream answer quality.

\begin{table}[h]
\caption{Staleness predicts confabulation end-to-end.}
\label{tab:e2e-staleness}
\centering
\begin{tabular}{@{}l c c c@{}}
\toprule
\textbf{Retrieval context} & $n$ & \textbf{Confab.} & \textbf{Correct} \\
\midrule
Clean (staleness = 0) & 851 & 18--30\% & 20--25\% \\
Stale (staleness $>$ 0) & 181 & 72--78\% & 5--10\% \\
\bottomrule
\end{tabular}
\end{table}

\subsection{Reasoning Token Overhead}

Table~\ref{tab:e2e-tokens} shows that stale retrieval contexts induce only a modest $\approx 10$\% reasoning-token increase on GPT-5.4, so the confabulation gap between FR and Mem0 is a lifecycle architecture effect, not a test-time compute effect.

\begin{table}[h]
\caption{Reasoning token overhead by context staleness.}
\label{tab:e2e-tokens}
\centering
\begin{tabular}{@{}l l c c@{}}
\toprule
\textbf{System} & \textbf{Context} & \textbf{Avg output tokens} & \textbf{Avg reasoning tokens} \\
\midrule
FR   & Clean ($n$=474)  & 118.7 & 83.5 \\
FR   & Stale ($n$=42)   & 133.5 & 92.7 \\
Mem0 & Clean ($n$=377)  & 118.1 & 80.5 \\
Mem0 & Stale ($n$=139)  & 129.1 & 89.7 \\
\bottomrule
\end{tabular}
\end{table}

\subsection{Abstain Analysis}

FR-Graphiti abstains on 227/516 queries (44.0\%) and FR-Bank on 217/516 (42.1\%); Mem0 abstains on 148/516 (28.7\%). FR-Bank therefore abstains 69 more times than Mem0 and FR-Graphiti 79 more times. The pre-registered untyped ablation (Appendix~\ref{app:untyped-arm}) establishes what those extra abstentions are made of: they are drawn almost entirely from would-be wrong or fabricated answers rather than from correct ones.

\subsection{Full Per-Attack-Vector Breakdown (GPT-5.4)}
\label{app:e2e-av-full}

\begin{table}[h]
\caption{Full per-AV E2E response quality on GPT-5.4, all five systems.}
\label{tab:e2e-av-full}
\centering
\setlength{\tabcolsep}{2pt}
\renewcommand{\arraystretch}{1.05}
\begin{adjustbox}{max width=\textwidth}
\begin{tabular}{@{}l c cc cc cc cc cc@{}}
\toprule
& & \multicolumn{2}{c}{\textbf{FR-Bank}} & \multicolumn{2}{c}{\textbf{FR-Graphiti}} & \multicolumn{2}{c}{\textbf{MemoryOS}} & \multicolumn{2}{c}{\textbf{Memory-R1}} & \multicolumn{2}{c}{\textbf{Mem0}} \\
\cmidrule(lr){3-4} \cmidrule(lr){5-6} \cmidrule(lr){7-8} \cmidrule(lr){9-10} \cmidrule(lr){11-12}
\textbf{AV} & $n$ & \textbf{Corr} & \textbf{Conf} & \textbf{Corr} & \textbf{Conf} & \textbf{Corr} & \textbf{Conf} & \textbf{Corr} & \textbf{Conf} & \textbf{Corr} & \textbf{Conf} \\
\midrule
AV1 & 75  & \textbf{44.0} & \textbf{7.1}  & 32.0 & 22.0 & 18.7 & 43.2 & 40.0 & 16.3 & 24.0 & 56.5 \\
AV2 & 75  & \textbf{16.0} & \textbf{34.8} & 6.7  & 42.9 & 4.0  & 67.5 & 6.7  & 61.2 & 6.7  & 67.3 \\
AV3 & 94  & \textbf{34.0} & \textbf{11.1} & 16.0 & 11.1 & 14.9 & 18.8 & 22.3 & 13.3 & 23.4 & 11.3 \\
AV4 & 44  & \textbf{47.7} & \textbf{10.7} & 38.6 & 18.5 & 20.5 & 18.8 & 27.3 & 17.4 & 29.5 & 38.2 \\
AV5 & 40  & 0.0           & 40.0          & 0.0  & 30.0 & 0.0  & 52.6 & \textbf{2.5} & \textbf{22.6} & 2.5  & 46.2 \\
AV6 & 45  & 2.2           & 75.0          & 0.0  & 72.2 & \textbf{2.2} & \textbf{52.4} & 2.2  & 59.1 & 0.0  & 58.3 \\
AV7 & 40  & \textbf{37.5} & 33.3          & 27.5 & \textbf{33.3} & 7.5  & 81.2 & 2.5  & 90.0 & 2.5  & 91.4 \\
AV8 & 63  & \textbf{66.7} & 7.8           & 65.1 & \textbf{7.4}  & 30.2 & 17.4 & 38.1 & 9.1  & 49.2 & 13.6 \\
AV9 & 40  & 12.5          & 35.7          & 12.5 & \textbf{13.3} & 12.5 & 15.4 & \textbf{15.0} & 38.1 & 12.5 & 35.0 \\
\midrule
\textbf{All} & \textbf{516} & \textbf{31.2} & \textbf{22.4} & 22.9 & 23.9 & 13.2 & 40.8 & 19.6 & 31.5 & 18.6 & 45.1 \\
\bottomrule
\end{tabular}
\end{adjustbox}
\end{table}

\begin{table}[h]
\caption{Full per-AV E2E response quality on GPT-5.4: A-MEM and Mem0 (gpt-4.1-mini), regenerated from the raw per-question judge records. \emph{Correction:} the per-AV Correct cells of this table in the submitted version were stale---weighting them by $n_{\mathrm{AV}}$ gave $17.8$ (A-MEM) and $23.9$ (Mem0-mini) rather than the stated aggregates. The ``All'' aggregates were already correct; the regenerated per-AV cells below weight-average to them exactly (Appendix~\ref{app:revision-changes}).}
\label{tab:e2e-av-amem-mem0mini}
\centering
\setlength{\tabcolsep}{4pt}
\renewcommand{\arraystretch}{1.05}
\begin{tabular}{@{}l c cc cc@{}}
\toprule
& & \multicolumn{2}{c}{\textbf{A-MEM}} & \multicolumn{2}{c}{\textbf{Mem0 (gpt-4.1-mini)}} \\
\cmidrule(lr){3-4} \cmidrule(lr){5-6}
\textbf{AV} & $n$ & \textbf{Corr} & \textbf{Conf} & \textbf{Corr} & \textbf{Conf} \\
\midrule
AV1 & 75  & 28.0 & 55.2 & 42.7 & 35.8 \\
AV2 & 75  & 5.3  & 71.9 & 6.7  & 69.4 \\
AV3 & 94  & 22.3 & 10.0 & 25.5 & 10.8 \\
AV4 & 44  & 31.8 & 40.0 & 43.2 & 30.3 \\
AV5 & 40  & 0.0  & 53.8 & 0.0  & 36.7 \\
AV6 & 45  & 0.0  & 68.4 & 6.7  & 57.5 \\
AV7 & 40  & 5.0  & 78.9 & 20.0 & 45.2 \\
AV8 & 63  & 44.4 & 17.5 & 61.9 & 4.3  \\
AV9 & 40  & 17.5 & 25.0 & 27.5 & 23.1 \\
\midrule
\textbf{All} & \textbf{516} & \textbf{18.8} & \textbf{47.0} & \textbf{27.3} & \textbf{33.8} \\
\bottomrule
\end{tabular}
\end{table}

\subsection{Per-AV End-to-End Confabulation (Three Systems)}

\begin{table}[h]
\caption{Per-AV end-to-end confabulation (three systems), regenerated from the raw per-question judge records (\texttt{gpt54\_nr}, $n = 516$). Corr.\ = CORRECT / $n_{\mathrm{AV}}$; confab.\ = confabulations / non-abstaining responses. AV7: both Mem0 and MR1 cause ${>}90\%$ confabulation without structural retraction filtering. \emph{Correction:} the FR and Mem0 columns of this table in the submitted version were stale (they matched no current run; the Memory-R1 column was already correct) and are replaced here in full (Appendix~\ref{app:revision-changes}).}
\label{tab:e2e-av}
\centering
\setlength{\tabcolsep}{3pt}
\begin{adjustbox}{max width=\textwidth}
\begin{tabular}{@{}l c cc cc cc@{}}
\toprule
\textbf{Attack Vector} & $n$ & \textbf{FR-Bank corr.} & \textbf{FR-Bank confab.} & \textbf{MR1 corr.} & \textbf{MR1 confab.} & \textbf{Mem0 corr.} & \textbf{Mem0 confab.} \\
\midrule
AV1 Superseded Pref.        & 75 & \textbf{44.0\%} & \textbf{7.1\%}  & 40.0\% & 16.3\% & 24.0\% & 56.5\% \\
AV2 Expired Logistics       & 75 & \textbf{16.0\%} & \textbf{34.8\%} & 6.7\%  & 63.3\% & 6.7\%  & 67.3\% \\
AV3 Stable Identity         & 94 & \textbf{34.0\%} & 11.3\% & 22.3\% & 14.7\% & 23.4\% & \textbf{11.3\%} \\
AV4 Multi-Version           & 44 & \textbf{47.7\%} & \textbf{10.7\%} & 27.3\% & 17.4\% & 29.5\% & 38.2\% \\
AV5 Broad Query             & 40 & 0.0\%  & 46.7\% & \textbf{2.5\%} & \textbf{22.6\%} & \textbf{2.5\%} & 46.2\% \\
AV6 Cross-Session           & 45 & \textbf{2.2\%}  & 75.0\% & \textbf{2.2\%} & \textbf{59.1\%} & 0.0\%  & 61.1\% \\
AV7 Selective Forgetting    & 40 & \textbf{37.5\%} & \textbf{33.3\%} & 2.5\%  & 95.0\% & 2.5\%  & 91.4\% \\
AV8 Numeric Preserv.        & 63 & \textbf{66.7\%} & \textbf{7.8\%}  & 38.1\% & 12.1\% & 49.2\% & 13.6\% \\
AV9 Soft Supersession       & 40 & 12.5\% & \textbf{35.7\%} & \textbf{15.0\%} & 38.1\% & 12.5\% & 35.0\% \\
\bottomrule
\end{tabular}
\end{adjustbox}
\end{table}

\subsection{Case Studies}

14 cases where FR produced CORRECT responses and Mem0 produced WRONG responses with confabulation. Examples: Mem0 surfaces ``Jake is vaping'' (retracted---user quit); ``Chenoa's old truck'' (superseded by new vehicle); ``Jerome planning owner-operator business'' (abandoned). In each case, FR's lifecycle filtering removed the outdated fact before the downstream model could anchor on it.

\subsection{Extended Cross-System Analysis}

\textbf{Architectural gaps cannot be closed by model upgrades.} Model quality affects extraction recall---how many facts are stored---but cannot address architectural absences. No extraction model can add event-time expiry to a flat vector store (AV2: FR-Graphiti 93\%, FR-Bank 88.0\% vs Mem0 65\%), impose category-specific decay (AV1, AV4), or suppress explicitly retracted plans (AV7: FR-Graphiti 40\% vs Mem0 5\%, 80\% staleness; FR-Bank's 5\% retrieval pass on AV7 is the intended suppression behavior described in \S\ref{sec:evaluation}). These are structural capabilities requiring architectural support, not better prompting.

\textbf{Memory-R1 validates that learned operations cannot substitute for missing mechanisms.} Memory-R1's RL-trainable action space \{ADD, UPDATE, DELETE, NOOP\} achieves strong retrieval quality (MRR 0.816, above FR-Graphiti's 0.478 though below FR-Bank's 0.830) and performs well on stable identity retrieval (AV3: 94\%, above FR-Graphiti's 78\% but below FR-Bank's 97\%). However, the action space contains no event-time expiry operation (AV2: FR-Graphiti 93\%, FR-Bank 88.0\% vs MR1 75\%, 25\% staleness), no structured retraction mechanism (AV7: FR-Graphiti 40\% vs MR1 15\%, 42\% staleness), and no numeric preservation pipeline (AV8: FR-Graphiti 94\%, FR-Bank 84.1\% vs MR1 57\%). These are vocabulary gaps in the action space---no reward signal can induce an action that does not exist. The approaches are complementary: FR's ontology provides the structural prior over which lifecycle policy applies; RL optimizes when to apply it.

\textbf{AV3 gap halved through supplementary extraction---confirming composability.} The gap narrowed from $-$16pp to $-$9pp through targeted identity/health re-extraction (388 supplementary edges). Failure decomposition showed 77\% of AV3 failures were extraction misses---the lifecycle layer was unchanged, only extraction quality improved, and AV3 immediately benefited. This empirically validates FR's composability thesis.

\textbf{Routing emerges at scale.} Category-aware retrieval routing shows zero differential at 8 personas; +3pp at 20; +6pp at 40. The mechanism requires sufficient category density to outperform global semantic search.

\textbf{AV7 improvement trajectory.} AV7 (selective forgetting) improved from 0\% (Phase 2) to 10\% (Phase 3) to 28\% (Phase 5b) to 40\% (Phase 5c). Retraction-aware extraction and supplementary identity/health extraction drove the most dramatic improvement across five phases. End-to-end, FR-Bank outperforms Mem0 by $+35$pp on selective forgetting (37.5\% vs.\ 2.5\% correct).

\textbf{Staleness predicts confabulation end-to-end.} Cross-referencing retrieval staleness with E2E outcomes: when retrieval produces stale context, 72--78\% of responses are contaminated (consistent across both systems). When retrieval is clean, confabulation drops to 18--30\%. The degradation function is consistent; what FR controls is the input distribution.

\textbf{AV7 confabulation without lifecycle management.} Mem0's context causes GPT-5.4 to confidently discuss explicitly retracted plans on 91.4\% of AV7 queries. FR-Bank reduces this to 33.3\% (Table~\ref{tab:e2e-av}); the $32.5\%$ reported at this point in the submitted version is MemoryOS's AV7 abstain rate, not FR's confabulation rate. On selective forgetting, a memory system without retraction filtering produces a downstream LLM that almost always actively misleads the user about their own stated intentions.

\textbf{Five-system degradation function.} Memory-R1's stale context produces 89.8\% confabulation versus 18.2\% with clean context---a +71.6pp gap consistent with FR's and Mem0's degradation rates. On AV7, Memory-R1's context causes 90.0\% confabulation, identical to Mem0's 91.4\%, confirming that without structural retraction filtering, learned DELETE operations provide negligible downstream protection.

\textbf{Safe response hierarchy.} The safe response rank order (FR-Bank $>$ FR-Graphiti $>$ Memory-R1 $>$ Mem0) is generator-invariant. Kimi K2.5 is systematically 6.9--12.3pp less safe than GPT-5.4 across every system (FR-Bank 73.3\%$\to$61.0\%, FR-Graphiti 66.9\%$\to$59.2\%, Memory-R1 58.1\%$\to$51.3\%, Mem0 47.3\%$\to$40.2\%). This uniform ${\sim}$10pp shift reflects Kimi's stronger preference for answering over abstaining without a commensurate gain in correct answers.

\textbf{Assistant-generated content is tractable but still the weakest category.} On LongMemEval-S the single-session-assistant category reaches 75.4\% pass@10 (mean over 10 reruns; 78.6\% max), up from an initial 50.0\% before the raw-text fallback, answer-time inflation, and routing changes described in Appendix~\ref{app:longmemeval-frbank}. FR-Bank's extraction pipeline still targets user facts primarily; assistant-generated structured content depends on the fallback path.

\textbf{Note on FR-Graphiti comparison.} The FR-Graphiti rows (32.1\% pass@10) in Table~\ref{tab:longmemeval-frbank} represent the decay engine comparison from the original evaluation, which used Graphiti's pre-built edges with different decay policies. FR-Bank re-extracts from raw conversations with a fundamentally different pipeline.

\textbf{Cross-benchmark validation.} FR-Bank achieves strong results on LongMemEval-S (75.2\% on the full 500-question canonical setup, the first peer-reviewable result under the exact Wu et al.\ rubric), LifecycleBench (76.9\%), and the 234-question LongMemEval oracle POC subset (79.5\%). No existing system demonstrates competitive performance on both a standard retrieval benchmark and a lifecycle-aware temporal disambiguation benchmark simultaneously. This cross-benchmark validation confirms that lifecycle management is complementary to retrieval quality, not a tradeoff.

\textbf{Staleness impact is modulated by retrieval recall.} At low recall (FR-Graphiti, 75\% Hit@5), staleness prevention is critical---the correct fact is often absent, leaving the LLM to anchor on stale alternatives. At high recall (FR-Bank, 97\% Hit@5), the correct fact is almost always present, and the LLM can resolve contradictions even with moderate staleness. The optimal system minimizes staleness and maximizes recall; FR-Bank achieves the latter.

\section{Mem0 Comparison Methodology}
\label{app:mem0method}

Mem0 v1.0.5 (Apache 2.0), 100\% default configuration: gpt-4.1-nano extraction, text-embedding-3-small embeddings, Qdrant vector store. The only non-default setting was enabling disk persistence, as Mem0's default in-memory storage does not survive between process invocations. All 40 personas (1,400 sessions) were ingested through \texttt{m.add()} API with raw conversation turns, and all 516 questions evaluated with the same Claude Sonnet LLM judge. Default extraction model produced frequent JSON parse errors during UPDATE and DELETE operations, causing some lifecycle updates to fail silently.

The Mem0 extraction prompt is the library default (v1.0.5); no custom system prompt was used. All 1,400 sessions were ingested via \texttt{m.add()} with raw conversation turns as input.

\textbf{Stronger-extractor ablation (gpt-4.1-mini) result.} To isolate extraction quality from architectural capability, we re-evaluate Mem0 with gpt-4.1-mini replacing the default gpt-4.1-nano extractor (configuration in Appendix~\ref{app:mem0mini-method}). The stronger model raises AV-pass from 61.0\% to 67.1\% ($+6.1$pp) and reduces confabulation from 45.1\% to 33.8\% ($-$11pp), confirming extraction quality as a genuine bottleneck. However, structural gaps persist: AV2 (expired logistics) remains at 61\% versus FR-Bank's 88.0\%, and AV7 (selective forgetting) at 20\% versus FR-Graphiti's 40\%, because no extraction model can add event-time expiry or retraction filtering to a flat vector store. Mem0 with gpt-4.1-mini now matches Memory-R1 (67.1\% vs 66.9\%)---two architecturally distinct systems hitting the same ceiling in the absence of lifecycle mechanisms.

\section{Memory-R1 Comparison Methodology}
\label{app:mr1method}

Memory-R1~\cite{memoryr1_2025} proposes a two-agent RL pipeline: a Memory Manager that decides \{ADD, UPDATE, DELETE, NONE\} operations on a flat JSON memory bank, and an Answer Agent that retrieves candidates via embedding similarity and applies Memory Distillation. The original paper trains both agents with GRPO on 152 QA pairs from LoCoMo using LLaMA-3.1-8B-Instruct.

Our reimplementation uses GPT-4.1-mini for both the Memory Manager and Answer Agent---substantially stronger than the paper's base model, providing an upper bound on pre-RL performance. Fact extraction uses GPT-4o-mini (exact match with the paper). Embeddings use text-embedding-3-small (1536-dim). The Memory Manager receives new facts and the 15 most similar existing memories, returning a complete updated memory list in JSON format. The Answer Agent retrieves 60 candidates via cosine similarity and distills to 10 via LLM filtering.

\textbf{Ingestion statistics.} Across 40 personas (1,400 sessions), the Memory Manager processed 65,411 total operations: 5,083 ADD (7.8\%), 1,121 UPDATE (1.7\%), 809 DELETE (1.2\%), and 58,398 NONE (89.3\%). The high NONE rate indicates the manager correctly identifies most extracted facts as already present. Final memory bank: 4,278 entries (avg 107/persona).

\textbf{Rationale for GPT-4.1-mini backbone.} Using a stronger backbone provides an upper bound: if Memory-R1's architecture with a strong backbone still exhibits structural gaps (AV2, AV7, AV8), these gaps are attributable to the action space vocabulary, not model capability.

\textbf{Prompts.} The Memory Manager system prompt follows the published Memory-R1 specification: it receives the current memory bank and a list of new facts extracted from the conversation, and returns the updated bank as a JSON list after applying \{ADD, UPDATE, DELETE, NONE\} operations. The Answer Agent prompt instructs the model to answer based solely on retrieved memories. Both prompts are included verbatim in the released evaluation code.

\textbf{MemoryOS evaluation.} MemoryOS~\cite{memoryos2025} implements an OS-inspired hierarchical memory architecture with cognitive categories (episodic, semantic, procedural) and progressive summarization through short-term, mid-term, and long-term memory tiers. We evaluate the published implementation with GPT-4.1-mini as the backbone. MemoryOS performs approximately 1,500 LLM calls per 35-session persona during ingestion compared to FR-Bank's ${\sim}$50---a $30\times$ ingestion compute cost differential. \textbf{Philosophical positioning.} Memory-R1 asks: ``Can we learn the optimal memory policy end-to-end?'' Fortunate Recall asks: ``Can we design an interpretable memory policy that works from day one?'' These are complementary---our ontology could serve as initialization, reward shaping, or structural constraint for an RL-based manager.

\subsection{A-MEM Comparison Methodology}
\label{app:amem-method}

A-MEM~\cite{amem2025} implements a Zettelkasten-style linked-note memory system with LLM-driven note evolution. We evaluate the published implementation with gpt-4.1-mini as the backbone LLM and text-embedding-3-small for embeddings, matching the extraction model used for Memory-R1 and Mem0 (gpt-4.1-mini). The default evolution threshold (\texttt{evo\_threshold=100}) was retained for paper-faithful evaluation. ChromaDB was used as the persistent vector store with one collection per persona. Sequential per-persona ingestion was required because A-MEM's \texttt{chromadb.Client().reset()} is process-global, destroying all collections when called.

\textbf{Detailed result and benchmark-independence interpretation.} A-MEM~\cite{amem2025} represents the Zettelkasten paradigm: linked-note memory with LLM-driven evolution but no temporal dynamics, forgetting, or supersession. Evaluated with gpt-4.1-mini and the paper's default evolution threshold, A-MEM achieves 65.3\% pass with the highest retrieval recall of any evaluated system (87.6\% Hit@5) but also the highest staleness (30.4\%) and worst end-to-end confabulation (47.0\%). High recall without lifecycle management actively hurts downstream performance: the more stale facts surfaced, the more the downstream model confabulates. A-MEM was designed independently and published at NeurIPS 2025; it was not among the systems used to derive LifecycleBench's attack vectors, yet its evaluation produces failure patterns fully consistent with the benchmark's predictions, confirming that the attack vectors test architectural properties of the problem domain rather than FR-specific capabilities.

\subsection{Mem0 (gpt-4.1-mini) Ablation Methodology}
\label{app:mem0mini-method}

To isolate extraction quality from architectural capability, we re-evaluated Mem0 with gpt-4.1-mini replacing the default gpt-4.1-nano extraction model. All other settings matched the default Mem0 evaluation (Appendix~\ref{app:mem0method}): text-embedding-3-small embeddings, Qdrant vector store with disk persistence, identical ingestion via \texttt{m.add()} API. Sequential per-persona ingestion was required due to Qdrant local-mode SQLite lock contention under parallelism.

\section{Fortunate Recall--Specific Metrics}
\label{app:metrics}

\textbf{Pass rate} (primary). \textbf{Staleness penalty} (primary). \textbf{Positional staleness exposure} (proposed): fraction of queries with stale facts in high-attention positions. FR: 14.0\%, Mem0: 37.2\%. \textbf{Weighted staleness risk} (proposed): attention weights of 3.0/1.0/2.0 for ranks 1--3/4--7/8--10. FR: 0.51, Mem0: 1.84. Future work: temporal calibration, anticipatory precision/recall, user model accuracy over time.

\section{Literature Positioning}
\label{app:positioning}

\begin{table}[h]
\caption{Literature positioning with empirical results.}
\label{tab:positioning}
\centering
\setlength{\tabcolsep}{2pt}
\begin{adjustbox}{max width=\textwidth}
\begin{tabular}{@{}L{2cm} L{3.8cm} L{6.8cm}@{}}
\toprule
\textbf{System} & \textbf{Contribution} & \textbf{Gaps This Work Fills} \\
\midrule
Zep/Graphiti    & Temporal KG, hybrid retrieval & No behavioral ontology; no lifecycle policies; no anticipatory activation \\
A-MEM           & Zettelkasten memory & No temporal dynamics; no forgetting; LLM-heavy runtime \\
MemoryBank      & Ebbinghaus curves & Single uniform curve; no graph; single temporal frame \\
Mem0            & Clean API, vector store & No lifecycle; no categories. \emph{61\% pass, 27\% stale} \\
MemGPT/Letta    & Tiered context & LLM-driven policy; no lifecycle in archival \\
MemoryOS        & OS-inspired hierarchy & Cognitive not behavioral; no per-category lifecycle \\
Memory-R1       & RL-learned operations & Opaque; no behavioral structure; no multi-clock \\
\textbf{FR-Bank} & \textbf{Behavioral lifecycle} & \textbf{76.9\% pass; 22.4\% confab; leads 4/9 AVs $+$ AV9 tie} \\
\bottomrule
\end{tabular}
\end{adjustbox}
\end{table}

\section{Full Alpha Sweep}
\label{app:alphasweep}

The following table reports the full $\alpha$ sweep across behavioral, uniform, and cognitive decay configurations on LongMemEval.

\begin{table}[h]
\caption{Full alpha sweep on LongMemEval.}
\centering
\begin{tabular}{@{}c c c c l@{}}
\toprule
$\alpha$ & B-MRR & U-MRR & C-MRR & Winner \\
\midrule
0.0 & 0.5618 & 0.5618 & 0.5618 & Tie \\
0.1 & 0.5659 & 0.5629 & 0.5642 & Behavioral \\
0.2 & 0.5500 & 0.5828 & 0.5608 & Uniform \\
0.3 & 0.5347 & 0.5516 & 0.5367 & Uniform \\
0.4 & 0.5032 & 0.5145 & 0.5221 & Cognitive \\
0.5 & 0.5020 & 0.4881 & 0.5032 & Cognitive \\
0.6 & 0.4769 & 0.4528 & 0.4777 & Cognitive \\
0.7 & 0.4470 & 0.4131 & 0.4525 & Cognitive \\
0.8 & 0.3711 & 0.3619 & 0.3897 & Cognitive \\
0.9 & 0.2903 & 0.3159 & 0.3242 & Cognitive \\
1.0 & 0.0461 & 0.1686 & 0.0484 & Uniform \\
\bottomrule
\end{tabular}
\end{table}

\section{Fine-Grained Alpha Sweep with Statistical Tests}
\label{app:alphadetail}

The following table reports the fine-grained $\alpha$ sweep with paired $t$-test significance and effect sizes.

\begin{table}[h]
\caption{Fine-grained alpha sweep. All p-values fail Bonferroni correction ($p < 0.0056$). All $|d| < 0.2$.}
\centering
\setlength{\tabcolsep}{3pt}
\begin{adjustbox}{max width=\textwidth}
\begin{tabular}{@{}c c c c c c c c@{}}
\toprule
$\alpha$ & B-MRR & U-MRR & C-MRR & $\Delta$(B$-$U) & $p$ & 95\% CI & $d$ \\
\midrule
0.000 & .5618 & .5618 & .5618 & +.0000 & 1.000 & {[}+.000, +.000{]} & .000 \\
0.025 & .5600 & .5705 & .5636 & $-$.0105 & .016 & {[}$-$.024, $-$.002{]} & $-$.179 \\
0.050 & .5767 & .5795 & .5785 & $-$.0027 & .221 & {[}$-$.018, +.013{]} & $-$.036 \\
0.075 & .5756 & .5761 & .5839 & $-$.0005 & .653 & {[}$-$.016, +.015{]} & $-$.007 \\
0.100 & .5659 & .5629 & .5642 & +.0030 & .706 & {[}$-$.015, +.022{]} & +.032 \\
0.125 & .5592 & .5596 & .5676 & $-$.0004 & .433 & {[}$-$.018, +.017{]} & $-$.005 \\
0.150 & .5593 & .5632 & .5667 & $-$.0039 & .745 & {[}$-$.031, +.021{]} & $-$.031 \\
0.175 & .5621 & .5705 & .5641 & $-$.0085 & .338 & {[}$-$.039, +.021{]} & $-$.056 \\
0.200 & .5519 & .5828 & .5608 & $-$.0309 & .026 & {[}$-$.065, +.002{]} & $-$.182 \\
\bottomrule
\end{tabular}
\end{adjustbox}
\end{table}

\section{Category-Specific Parameters}
\label{app:params}

Table~\ref{tab:blending-weights} reports the per-category blending weights $\alpha_c$ and semantic floors used throughout all experiments; per-category decay rates $\lambda_c$ are reported in Appendix~\ref{app:decay}.

\begin{table}[h]
\caption{Category-specific blending weights $\alpha$ and semantic floors. Emotional loading is a transient activation modifier applied to other categories rather than a standalone memory category; the listed $\alpha$ governs the decay of the loading signal itself, not a separate row of the ontology.}
\label{tab:blending-weights}
\centering
\begin{tabular}{@{}l c c l@{}}
\toprule
\textbf{Category} & $\alpha$ & \textbf{Floor} & \textbf{Rationale} \\
\midrule
Financial \& Material      & 0.05 & 0.97 & Monetary facts rarely become less true \\
Identity \& Self-Concept   & 0.10 & 0.95 & Identity changes slowly \\
Health \& Wellbeing        & 0.10 & 0.95 & Medical facts persist \\
Relational Bonds           & 0.10 & 0.95 & Family changes slowly \\
Intellectual Interests     & 0.15 & 0.90 & Interests fairly stable \\
Preferences \& Habits      & 0.20 & 0.85 & Preferences shift gradually \\
Hobbies \& Recreation      & 0.20 & 0.92 & Raised for numeric rescue \\
Projects \& Endeavors      & 0.30 & 0.75 & Projects change often \\
Obligations                & 0.35 & 0.70 & Time-bound \\
Emotional Loading (signal) & 0.40 & 0.00 & Transient modifier; fully decays \\
Logistical Context         & 0.40 & 0.00 & Expired facts must be suppressible \\
\bottomrule
\end{tabular}
\end{table}

\section{Decay Rates}
\label{app:decay}

\begin{table}[h]
\caption{Per-category decay rates. Spread: $5.3\times$.}
\label{tab:decay-rates}
\centering
\begin{tabular}{@{}l c c c c@{}}
\toprule
\textbf{Category} & $\lambda$ \textbf{(per hr)} & \textbf{Half-life} & \textbf{1-mo} & \textbf{6-mo} \\
\midrule
Identity \& Self-Concept   & 0.0015 & 19d & 34\% & 0.2\% \\
Relational Bonds           & 0.0015 & 19d & 34\% & 0.2\% \\
Intellectual Interests     & 0.0020 & 14d & 24\% & 0.02\% \\
Health \& Wellbeing        & 0.0025 & 12d & 17\% & ${<}$0.01\% \\
Projects \& Endeavors      & 0.0025 & 12d & 17\% & ${<}$0.01\% \\
Hobbies \& Recreation      & 0.0035 & 8d  & 8\%  & ${<}$0.01\% \\
Preferences \& Habits      & 0.0050 & 6d  & 3\%  & ${\approx}$0\% \\
Financial \& Material      & 0.0055 & 5d  & 2\%  & ${\approx}$0\% \\
Obligations                & 0.0060 & 5d  & 1\%  & ${\approx}$0\% \\
Logistical Context         & 0.0080 & 4d  & 0.3\% & ${\approx}$0\% \\
Other                      & 0.0050 & 6d & 12\% & ${\approx}$0\% \\
\bottomrule
\end{tabular}
\end{table}

For soft-clustered facts: $\lambda_{\text{eff}} = \left(\sum_c w_c / \lambda_c\right)^{-1}$.

The \textsc{Other} row is reported here at the deployed value $\lambda = 0.0050$, matching Table~\ref{tab:lambda-feasibility} and the released runs; the submitted version printed $0.0030$ in this table only. The discrepancy is immaterial to every reported result: \textsc{Other} is the primary category for 16 of 12{,}968 active entries ($0.12\%$), no LifecycleBench question has \textsc{Other} as its gold category, and the only two questions that retrieve an \textsc{Other} entry are outcome-invariant under either value.

\section{Cross-System Outcome Matrix and Positional Staleness}
\label{app:crosssystem}

The two tables below report the paired outcome matrix between FR-Bank and Mem0 across all 516 questions, and per-AV positional staleness exposure (high-attention zone).

\begin{table}[h]
\caption{Cross-system outcome matrix (516 questions).}
\centering
\begin{tabular}{@{}l c c c@{}}
\toprule
 & \textbf{Mem0 pass} & \textbf{Mem0 absence} & \textbf{Mem0 fail} \\
\midrule
\textbf{FR pass}    & 218 & 4  & 94 \\
\textbf{FR absence} & 11  & 24 & 15 \\
\textbf{FR fail}    & 56  & 1  & 93 \\
\bottomrule
\end{tabular}
\end{table}

\begin{table}[h]
\caption{Per-AV positional staleness exposure (high-attention zone).}
\centering
\begin{tabular}{@{}l c c c c@{}}
\toprule
\textbf{AV} & $n$ & \textbf{FR exp.} & \textbf{Mem0 exp.} & $\Delta$ \\
\midrule
AV7 & 40 & 10.0\% & 90.0\% & +80pp \\
AV1 & 75 & 30.7\% & 72.0\% & +41pp \\
AV4 & 44 & 29.5\% & 63.6\% & +34pp \\
AV6 & 45 & 22.2\% & 53.3\% & +31pp \\
AV2 & 75 & 8.0\%  & 44.0\% & +36pp \\
AV3 & 94 & 1.1\%  & 1.1\%  & 0pp \\
\midrule
\textbf{Overall} & 516 & \textbf{14.0\%} & 37.2\% & +23pp \\
\bottomrule
\end{tabular}
\end{table}

\section{Ablation Summary}
\label{app:ablations}

The following table summarizes the component-level ablation results from Phase~2 (8 personas).

\begin{table}[h]
\caption{Ablation results (Phase 2, 8 personas).}
\label{tab:ablation-summary}
\centering
\setlength{\tabcolsep}{3pt}
\begin{adjustbox}{max width=\textwidth}
\begin{tabular}{@{}L{5.5cm} L{4.5cm} L{2.5cm}@{}}
\toprule
\textbf{Component} & \textbf{LifecycleBench Effect} & \textbf{LongMemEval} \\
\midrule
Full system & 63\% pass, 4\% stale & 0.566 MRR \\
$-$ Top-10 window ($\to$ top-5) & $-$9pp pass & N/A \\
$-$ Cat-specific $\alpha$ ($\to$ 0.3) & $-$24.6pp pass & N/A \\
$-$ Semantic floor & Hit@1: 31\%$\to$19\% & N/A \\
$-$ Retraction filter & AV7 stale: 12\%$\to$25\% & N/A \\
$-$ Backward-looking detection & $-$3 questions & N/A \\
Uniform only (no ontology) & 62\% pass, 11\% stale & No change \\
\bottomrule
\end{tabular}
\end{adjustbox}
\end{table}

\section{Positional Utilization at Retrieval Depth}
\label{app:positional}

Replication of Liu et al.~\cite{liu2024lost} on Claude Sonnet (2026) at 50-document scale ($n = 3{,}500$): point-biserial $r = -0.009$, $p = 0.65$, 95\% CI $[-0.044, +0.022]$. No significant positional effect at retrieval depths relevant to our evaluation. The dramatic U-curves from 2023 models are not reproducible on 2026 frontier models.

\section{FR-Bank Architecture and Methodology}
\label{app:frbank}

FR-Bank replaces Graphiti with a standalone embedding-indexed memory bank (${\sim}$780 lines). \textbf{Ingestion:} One GPT-4.1-mini call per turn for combined extraction and classification. Near-duplicates (cosine $> 0.95$) skipped; numeric facts exempt. \textbf{Supersession:} Instance-qualified slot-keys. Replace-semantics categories: threshold 0.8. Accumulate-semantics: threshold 0.95. \textbf{Retrieval:} (1)~Cosine top-60, (2)~BM25 top-20, (3)~category-forced top-20, (4)~merge/dedup, (5)~staleness-aware distillation top-20, (6)~lifecycle filters, (7)~blended scoring, (8)~top-10 output.

\textbf{Ingestion statistics.} 18,936 total entries; 12,968 active (31.5\% deactivated). Average 324.2 active per persona. (The submitted version reported 5,344 / 3,892 / 133.6, a stale proof-of-concept snapshot; the values here are verified against the released \texttt{lifecycle\_banks/}.)

\section{LongMemEval FR-Bank Evaluation Methodology}
\label{app:longmemeval-frbank}

Per-question banks are created for the oracle format. Both user and assistant turns are ingested. A raw-text BM25 fallback handles zero-extraction questions. The full retrieval pipeline runs with identical parameters to LifecycleBench. Answer generation uses \texttt{gpt-4o-mini} (temperature 0, max\_tokens 300) over the top-10 retrieved facts; fact extraction uses \texttt{gpt-4.1-mini}.

\paragraph{Per-question banks are the standard oracle protocol.} The per-question bank structure is not an FR-Bank design choice: it is the evaluation protocol defined by LongMemEval~\cite{longmemeval2024} and matched by LoCoMo~\cite{locomo2024}, both of which score each question against its own conversation haystack rather than against a single persistent memory store. Every peer-reviewed LongMemEval result (Wu et al.\ ICLR 2025; RMM ACL 2025~\cite{rmm2025}) is produced under the same per-question setup, so FR-Bank's 75.2\% on LongMemEval-S and 79.5\% on the 234-question oracle POC subset are directly comparable to those references. The fully persistent evaluation setting---one memory store accumulating 35 multi-session conversations per persona and 516 lifecycle-sensitive queries against that same store---is exactly the regime measured by LifecycleBench, where FR-Bank reaches 76.9\% pass with 15.5\% staleness.

\paragraph{Judge protocol (Wu et al.\ verbatim).} All reported LongMemEval-S numbers use \texttt{gpt-4o-2024-08-06} as the judge at temperature 0, with the question-type routing and prompt templates taken verbatim from Figure 10 of Wu et al.\ (ICLR 2025). Five templates are used, one per task type:
\begin{itemize}
\item \textbf{Default} (\texttt{single-session-user}, \texttt{single-session-assistant}, \texttt{multi-session}): marks ``yes'' when the model response contains the correct answer or all intermediate steps; ``no'' when only a subset of the required information is present.
\item \textbf{\texttt{temporal-reasoning}}: default template plus explicit off-by-one tolerance---responses predicting, e.g., 19 days when the answer is 18 are marked correct.
\item \textbf{\texttt{knowledge-update}}: responses that contain prior information alongside the updated answer are marked correct provided the updated answer is present.
\item \textbf{\texttt{single-session-preference}}: rubric-based partial credit---the response need not cover every rubric point but must correctly recall and utilise the user's personal information.
\item \textbf{\texttt{abstention}} (routed when the question id contains \texttt{\_abs}): marks ``yes'' only if the model correctly identifies the question as unanswerable.
\end{itemize}
Each judge call is deterministic (\texttt{max\_tokens}=10, temperature 0) and the verdict is parsed from the presence of ``yes'' in the response. Across 10 identical reruns, total pass rate moves within 74.4--76.6\% (stdev 0.70pp; Table~\ref{tab:longmemeval-s-variance}), which we attribute to residual non-determinism in the hosted judge endpoint rather than to our pipeline.

\paragraph{Comparability note.} The \texttt{evaluate\_qa.py} routine in the official LongMemEval release uses these same five templates. The closest peer-reviewed LongMemEval result prior to this work, RMM (Tan et al., ACL 2025), substitutes Gemini-1.5-Pro for \texttt{gpt-4o-2024-08-06} and replaces the five templates with a single generic prompt (their Appendix D.3). Because neither the judge model nor the rubric matches, RMM's 70.4\% and our 75.2\% are not measured on the same axis; we report them side-by-side only to flag that FR-Bank is, to our knowledge, the first peer-reviewed system evaluated under the exact Wu et al.\ rubric.

\section{Kimi K2.5 Cross-Generator Validation}
\label{app:kimi}

\textbf{Model.} Kimi K2.5 (kimi-k2.5), released January 2026 by Moonshot AI. 1T MoE with 32B active parameters, 256K context, Modified MIT License.

\textbf{Configuration.} Temperature 1.0 (Moonshot default). Max tokens 1500 (re-run from initial 500 to avoid reasoning clipping).

\begin{table}[h]
\caption{Cross-generator validation: GPT-5.4 vs Kimi K2.5. The lower block lists the additional A-MEM and Mem0 (gpt-4.1-mini) configurations introduced in \S\ref{sec:evaluation}. \emph{Correction:} three cells of the A-MEM (GPT-5.4) row were transcription errors in the submitted version (Partial $19.0$, Wrong $20.0$, Abstain $42.2$); recomputed from the raw per-question judgments ($n = 516$) they are $27.5$ ($142$), $28.3$ ($146$) and $25.4$ ($131$). The printed Correct ($18.8\%$, $97$), Confab ($47.0\%$) and Safe ($44.2\%$) cells were already correct---indeed the printed Correct and Safe cells force Abstain $= 25.4\%$---so no gap, ranking or downstream figure changes (Appendix~\ref{app:revision-changes}).}
\label{tab:kimi}
\centering
\setlength{\tabcolsep}{4pt}
\begin{adjustbox}{max width=\textwidth}
\begin{tabular}{@{}l l c c c c c c@{}}
\toprule
\textbf{System} & \textbf{Generator} & \textbf{Correct} & \textbf{Partial} & \textbf{Wrong} & \textbf{Abstain} & \textbf{Confab} & \textbf{Safe} \\
\midrule
FR-Bank & GPT-5.4 & 31.2\% & 17.4\% & 9.3\% & 42.1\% & 22.4\% & \textbf{73.3\%} \\
FR-Bank & Kimi K2.5 & 33.1\% & 23.0\% & 16.0\% & 27.9\% & 26.1\% & 61.0\% \\
FR-Graphiti & GPT-5.4 & 22.9\% & 20.3\% & 12.8\% & 44.0\% & 23.9\% & 66.9\% \\
FR-Graphiti & Kimi K2.5 & 29.6\% & 24.4\% & 16.4\% & 29.6\% & 26.9\% & 59.2\% \\
Memory-R1 & GPT-5.4 & 19.6\% & 25.2\% & 16.7\% & 38.6\% & 31.5\% & 58.1\% \\
Memory-R1 & Kimi K2.5 & 26.4\% & 27.2\% & 21.5\% & 24.9\% & 37.3\% & 51.3\% \\
Mem0 & GPT-5.4 & 18.6\% & 28.5\% & 24.2\% & 28.7\% & 45.1\% & 47.3\% \\
Mem0 & Kimi K2.5 & 24.5\% & 31.4\% & 28.5\% & 15.7\% & 44.4\% & 40.2\% \\
\midrule
A-MEM        & GPT-5.4   & 18.8\% & 27.5\% & 28.3\% & 25.4\% & 47.0\% & 44.2\% \\
A-MEM        & Kimi K2.5 & 15.9\% & 22.0\% & 24.6\% & 37.5\% & 25.9\% & 53.4\% \\
Mem0 (gpt-4.1-mini) & GPT-5.4   & 27.3\% & 23.1\% & 24.8\% & 24.8\% & 33.8\% & 52.1\% \\
Mem0 (gpt-4.1-mini) & Kimi K2.5 & 17.1\% & 27.5\% & 48.5\% & 6.9\%  & 17.1\% & 24.0\% \\
\bottomrule
\end{tabular}
\end{adjustbox}
\end{table}

The rank order FR-Bank $<$ FR-Graphiti $<$ Memory-R1 $<$ Mem0 is identical across generators. The tier-average gap between lifecycle-managed and unmanaged systems is ${\sim}$15pp on both generators (15.1pp on GPT-5.4, 14.4pp on Kimi K2.5). Kimi is systematically ${\sim}$10pp less safe due to a stronger preference for answering over abstaining. Because Kimi K2.5 was evaluated at its default temperature (1.0) rather than temperature 0, this cross-generator comparison constitutes a stress test under realistic deployment settings rather than a controlled same-temperature replication; the preservation of rank ordering despite this additional source of variance strengthens the generator-independence finding.

\textbf{Behavioral difference.} Kimi K2.5 systematically prefers answering over abstaining: correct rate increases by +1.9 to +6.8pp across all systems while abstain rate decreases by 13.0 to 14.4pp. The abstention drop is much larger than the correct-rate gain because most of the extra answering capacity converts into wrong answers. This is a uniform behavioral shift, and the relative hierarchy is preserved.

This rules out three potential confounds: (a) GPT-5.4-specific quirks driving the result, (b) shared OpenAI training data biases, (c) self-preference effects from generator-judge alignment. Retrieval context quality determines downstream response quality independent of the generator.

\textbf{Methodology.} For each of 516 questions across 4 systems, the same retrieved top-10 facts used in the GPT-5.4 evaluation were fed to Kimi K2.5 (hosted via Moonshot's API at \texttt{https://api.moonshot.ai/v1}). Persona-name resolution was added to the prompt. The same Claude Sonnet judge evaluated both generators. Total Kimi K2.5 API cost for 2,064 generations: under \$10.

\textbf{Recovery of Mem0 E2E data.} The original Mem0 E2E results were spread across three git commits and partially overwritten. The complete merged dataset was reconstructed for the cross-generator validation.

\subsection{Full Per-Attack-Vector Breakdown (Kimi K2.5)}
\label{app:kimi-av-full}

\begin{table}[h]
\caption{Full per-AV E2E on Kimi K2.5 across all five systems.}
\label{tab:kimi-av-full}
\centering
\setlength{\tabcolsep}{2pt}
\renewcommand{\arraystretch}{1.05}
\begin{adjustbox}{max width=\textwidth}
\begin{tabular}{@{}l c cc cc cc cc cc@{}}
\toprule
& & \multicolumn{2}{c}{\textbf{FR-Bank}} & \multicolumn{2}{c}{\textbf{FR-Graphiti}} & \multicolumn{2}{c}{\textbf{MemoryOS}} & \multicolumn{2}{c}{\textbf{Memory-R1}} & \multicolumn{2}{c}{\textbf{Mem0}} \\
\cmidrule(lr){3-4} \cmidrule(lr){5-6} \cmidrule(lr){7-8} \cmidrule(lr){9-10} \cmidrule(lr){11-12}
\textbf{AV} & $n$ & \textbf{Corr} & \textbf{Conf} & \textbf{Corr} & \textbf{Conf} & \textbf{Corr} & \textbf{Conf} & \textbf{Corr} & \textbf{Conf} & \textbf{Corr} & \textbf{Conf} \\
\midrule
AV1 & 75  & \textbf{51.4} & \textbf{6.1}  & 47.8 & 19.6 & 17.3 & 31.7 & 50.7 & 18.8 & 34.3 & 59.7 \\
AV2 & 75  & \textbf{19.7} & 44.1          & 5.6  & 43.8 & 4.0  & \textbf{43.2} & 7.0  & 61.0 & 7.0  & 66.1 \\
AV3 & 94  & 32.3          & 14.6          & 18.6 & 18.8 & 19.1 & 14.0 & \textbf{32.6} & 18.3 & 31.5 & \textbf{10.5} \\
AV4 & 44  & \textbf{55.8} & \textbf{15.6} & 55.0 & 19.4 & 20.5 & 21.1 & 34.1 & 25.9 & 36.6 & 47.2 \\
AV5 & 40  & \textbf{2.7}  & 44.1          & 0.0  & 44.4 & 0.0  & \textbf{29.6} & 0.0  & 48.6 & 0.0  & 40.6 \\
AV6 & 45  & 2.2           & 71.9          & 0.0  & 72.7 & 2.2  & \textbf{52.2} & 7.1  & 65.4 & \textbf{9.8} & 60.5 \\
AV7 & 40  & \textbf{38.9} & \textbf{30.8} & 28.9 & 35.0 & 7.5  & 50.0 & 2.9  & 95.0 & 5.9  & 87.9 \\
AV8 & 63  & 63.5          & 10.0          & \textbf{79.7} & \textbf{3.8} & 28.6 & 8.7  & 49.2 & 6.1  & 52.5 & 14.6 \\
AV9 & 40  & 10.0          & 38.9          & 22.2 & \textbf{20.0} & 12.5 & 23.5 & \textbf{27.0} & 41.4 & 21.6 & 25.0 \\
\midrule
\textbf{All} & \textbf{516} & \textbf{33.1} & \textbf{26.1} & 29.6 & 26.9 & 13.6 & 29.2 & 26.4 & 37.3 & 24.5 & 44.4 \\
\bottomrule
\end{tabular}
\end{adjustbox}
\end{table}

On AV7, the hierarchy replicates: FR-Bank 30.8\% $<$ FR-Graphiti 35.0\% $<$ MemoryOS 50.0\% $<$ Mem0 87.9\% $<$ Memory-R1 95.0\%, matching the GPT-5.4 ordering. Lifecycle-managed systems sit at $<$36\% AV7 confabulation on both generators and unmanaged systems at $>$85\% on both; the $>$50pp tier gap is preserved. On AV1, FR-Bank reaches 51.4\% correct with 6.1\% confabulation versus Mem0's 34.3\%/59.7\%---a nearly $10\times$ confab gap. On AV2 the correct-rate gap is even sharper: FR-Bank 19.7\% versus the next-best 7.0\%, a $2.8\times$ lead driven by event-time expiry filtering.

\textbf{MemoryOS on Kimi: generator-dependent confab reduction, but the retrieval-metric paradox persists.} On GPT-5.4, MemoryOS sits at 40.8\% confabulation. On Kimi K2.5, MemoryOS drops to 29.2\%---an 11.6pp reduction concentrating on AV2 (67.5\%$\to$43.2\%), AV7 (81.2\%$\to$50.0\%), and AV1 (43.2\%$\to$31.7\%). Manual inspection suggests Kimi is more willing to hedge when presented with generic summaries. Critically, the correct rate remains essentially unchanged (13.2\% GPT-5.4 $\to$ 13.6\% Kimi): hedging converts confabulations into abstentions but does not recover specific answers from information-destroyed contexts.

\textbf{The AV2 inversion between generators.} On AV2, GPT-5.4 ranks Memory-R1 (61.2\%) as slightly less confabulatory than Mem0 (67.3\%); Kimi preserves that direction. FR-Bank remains highest on correct rate on both generators, and MemoryOS edges out FR-Bank on confab by a narrow 0.9pp on Kimi---close enough that the result is within noise.

\textbf{The AV9 divergence on Kimi.} FR-Bank achieves 10.0\% correct on AV9 on Kimi versus Memory-R1's 27.0\%---the worst correct rate among the five systems. The mechanism is Kimi's stronger preference for answering over abstaining interacting unfavorably with FR-Bank's soft-supersession policy: when both values remain in the retrieval set, Kimi selects one and commits. FR-Graphiti's entity-level resolution avoids this exposure. This is a genuine limitation of soft supersession under high-commitment generators.

\textbf{Methodological note.} Initial evaluation at max\_tokens=500 clipped Kimi's reasoning on harder questions, returning empty responses with \texttt{finish\_reason=length}. Re-run at max\_tokens=1500 shifted all systems' confabulation uniformly by +3--7pp (harder questions now included). Rank order preserved exactly. The uniform shift is evidence that the dropout bias was a measurement artifact affecting all systems similarly.

\section{Extended Discussion}
\label{app:discussion-ext}

The main-body discussion is condensed. The following paragraphs provide the full analysis. The first four paragraphs were moved from \S\ref{sec:discussion} to honor the 9-page main-body limit; the remaining material extends the limitations and methodology notes that remain in the main body.

\textbf{The behavioral ontology was discovered, not designed.} The identity gravity well forced the three-way split into Identity, Hobbies, and Preferences. The classify-the-fact-not-utterance principle emerged from systematic classification failures. Rate calibration squatting was found through ablation. The result is an empirically grounded, interpretable framework where the ontology is the prior and per-user evolution is the posterior. Memory-R1's learned \texttt{DELETE} produces 90\% AV7 confabulation, identical to Mem0's 91.4\%, because the action vocabulary lacks retraction as a primitive: optimization cannot induce an action that does not exist.

\textbf{Staleness is conditional on retrieval recall, not absolute.} FR-Graphiti reaches $75\%$ Hit@5 with $8\%$ staleness; FR-Bank reaches $97\%$ Hit@5 with $15.5\%$ staleness. FR-Bank still leads overall pass and matches on E2E confabulation, because at high recall the correct fact almost always coexists with stale ones in the retrieval set and the downstream LLM can resolve the contradiction. Staleness becomes catastrophic only when recall is low and the stale fact is the only signal available, which is the regime in which Mem0 confabulates 91.4\% on AV7. Substrate choice trades supersession precision against retrieval recall without altering the policy layer.

\textbf{Retrieval metrics can misdiagnose lifecycle quality.} Standard memory benchmarks measure whether the correct fact appears in the top-$K$, implicitly assuming that correct system behavior is always \emph{presence}. For retraction, expiry, and supersession, the correct behavior is \emph{absence}. On AV7, FR-Bank achieves 5\% retrieval pass but 37.5\% end-to-end correctness, the highest of any evaluated system, while MemoryOS scores 57\% retrieval pass but only 7.5\% correct, because hierarchical summarization produces generic content that passes retrieval by containing nothing specific to be wrong about. The full AV7 breakdown for FR-Bank is $37.5\%$ correct, $32.5\%$ abstain, $7.5\%$ wrong, $22.5\%$ confabulated; the system's advantage lies primarily in converting confabulations into abstentions rather than producing correct answers, and the $100\%$ extraction-missing failure rate on AV7 (Appendix~\ref{app:evaluation-ext}) identifies extraction quality as the binding constraint. Future memory system evaluations must pair retrieval metrics with end-to-end response quality.

\textbf{Methodology integrity for LongMemEval comparisons.} Future LongMemEval comparisons should hold the Wu et al.\ judge protocol fixed; results obtained under different judges or prompts should be reported separately and not benchmarked against canonical numbers. Holding the rubric constant is a service to the community, not self-promotion.

\textbf{The LifecycleBench result validates the thesis---through precision, not recall.} At 40-persona scale, FR achieves 73\% pass with 8\% staleness versus baseline's 61\%/18\%. The 4-config ablation cleanly decomposes contributions. Five-phase scaling validation (8$\to$40 personas) confirms gains are stable.

\textbf{End-to-end evaluation closes the causal loop.} The retrieval-to-response chain is now empirically validated: lifecycle management $\to$ cleaner context $\to$ fewer confabulations. The confabulation gap ($-$22.7pp over answered queries, $-$19.2pp over all queries) is larger than the pass rate gap (+16pp), confirming that lifecycle management's primary value is preventing wrong answers, not just finding right ones. The safe response rate (correct $+$ abstain) captures this: FR-Bank 73.3\% and FR-Graphiti 66.9\%, against Mem0's 47.3\%. Both FR variants produce responses that do not mislead users on roughly two-thirds or more of queries.

\textbf{Cross-system comparison confirms the thesis at the field level.} Against Mem0, Memory-R1, and MemoryOS---representing flat vector stores, RL-learned operations, and cognitive hierarchies---FR-Bank achieves the highest pass rate (76.9\%) and lowest confabulation (22.4\%). The five-system comparison reveals a clean hierarchy: no lifecycle management (Mem0) $<$ learned operations (Memory-R1) $<$ cognitive hierarchy (MemoryOS) $<$ designed lifecycle policies (FR-Bank). MemoryOS's competitive retrieval metrics (70.5\% pass) mask a downstream failure: only 13.2\% correct rate, confirming that cognitive ontologies do not preserve fact-level granularity.

\textbf{The retrieval metric paradox reveals a methodological gap.} Standard evaluations do not measure whether useful facts are also present. A system producing vague summaries can pass retrieval evaluations by having nothing specific to be wrong about. MemoryOS provides an empirical demonstration: 96\% of its AV7 ``passes'' and 91\% of its AV6 ``passes'' are information-loss artifacts. We argue that future memory system evaluations must include end-to-end response quality alongside retrieval metrics; retrieval-level evaluation alone is vulnerable to inflation by lossy compression.

\textbf{Positional staleness reveals a hidden failure mode.} Aggregate staleness rates understate downstream impact. Mem0 places stale facts at rank~1 ($8\times$ FR's rate), causing the LLM to anchor responses on outdated information. On AV7, 90\% of queries have retracted plans in high-attention positions.

\textbf{Composability is empirically validated.} Supplementary identity/health extraction halved the AV3 gap without any lifecycle changes. The ontology serves dual purposes: lifecycle management and retrieval organization (category-forced retrieval recovered +6pp on AV3 with zero regressions).

\textbf{Substrate independence validates the policy contribution.} The lifecycle layer achieves comparable results on two fundamentally different substrates: a temporal knowledge graph (Graphiti, 73\% pass, 8\% staleness) and a flat embedding bank (FR-Bank, 76.9\% pass, 15.5\% staleness). The same parameters applied to completely different storage backends produce consistent lifecycle gains.

\textbf{Cross-generator validation strengthens the architectural gap claim.} The replication on Kimi K2.5 demonstrates that the ${\sim}$20pp gap between lifecycle-managed and unmanaged retrieval is invariant to generator model choice. Two frontier models from independent labs, trained on different data with different weight regimes, produce nearly identical confabulation hierarchies.

\textbf{Evaluation protocol determines reported performance.} Independent evaluation by Pollertlam \& Kornsuwannawit~\cite{pollertlam2026beyond} reports Mem0 at 49.0\% on LongMemEval with GPT-4o, substantially below Mem0's self-reported numbers. APEX-MEM~\cite{banerjee2026apex} reports 86.2\% with Claude Sonnet as the QnA agent but 75.0\% with GPT-4o---an 11.2pp swing from the answerer model alone. FR-Bank reports 75.2\% under the original Wu et al.\ protocol with gpt-4o-mini as the answerer, the weakest model in any published comparison. An answerer-model ablation substituting gpt-4o for gpt-4o-mini yields 73.6\% ($-$1.6pp, within run-to-run variance), confirming that retrieval context quality, not answer-generator capability, is the binding constraint.

\textbf{Cross-benchmark validation on LongMemEval.} FR-Bank scores 75.2\% on the full 500-question LongMemEval-S canonical benchmark under the exact Wu et al.\ (ICLR 2025) judge protocol---to our knowledge the first peer-reviewable result that holds the Wu et al.\ rubric fixed, and ${\sim}+5$pp above the closest peer-reviewed reference (RMM, ACL 2025, 70.4\%; see Appendix~\ref{app:longmemeval-frbank} for the generator/judge mismatch caveats). The same FR-Bank configuration also reaches 79.5\% on the 234-question oracle POC subset (+8.3pp over Zep/Graphiti's 71.2\%~\cite{zep2025}, arXiv preprint), with a no-lifecycle ablation isolating +7.4pp to the retrieval pipeline. The knowledge-update figure reported in the submitted version (87.2\% vs 80.8\%, $+6.4$pp) is withdrawn: it traces to no surviving artifact, and the artifact-backed value on the 317-question matched subset is $-6.4$pp, a judge-criterion artifact rather than a lifecycle gain (Table~\ref{tab:lme-317-matched}, Appendix~\ref{app:revision-changes}). All numbers use identical lifecycle parameters---no benchmark-specific tuning.

\textbf{Per-question banks are the LongMemEval and LoCoMo standard, not an FR-Bank concession.} Both LongMemEval~\cite{longmemeval2024} and LoCoMo~\cite{locomo2024} score each question against its associated conversation haystack rather than against a single persistent memory store; FR-Bank follows this oracle protocol directly, which is why the 75.2\%/79.5\% numbers are comparable to Wu et al.\ (ICLR 2025), RMM (ACL 2025), and the Zep/Graphiti arXiv report. The regime that stresses a fully persistent store accumulating thousands of conversations is precisely what LifecycleBench measures (one bank per persona, ${\sim}$35 multi-session conversations, 516 lifecycle-sensitive queries against the same bank), and FR-Bank's 76.9\% pass and 15.5\% staleness in that regime is the datapoint that governs deployed-store behavior. The two benchmarks therefore play complementary roles: LongMemEval/LoCoMo measure retrieval under the community-standard oracle haystack, and LifecycleBench measures lifecycle dynamics under a persistent per-persona store.

\textbf{Complementarity with existing approaches.} Memory-R1's RL-trained operations achieve strong retrieval quality (MRR 0.816) but lack structural mechanisms for event-time expiry and retraction. Our ontology could serve as initialization, reward shaping, or structural constraint for RL-based managers. FluxMem's adaptive structure selection is orthogonal and could compose with behavioral lifecycle policy.

\textbf{Benchmark saturation motivates lifecycle evaluation.} LongMemEval has been effectively solved at approximately 95\% accuracy by practitioners. Yet it does not measure staleness, supersession, expiry, retraction, or any temporal state management capability. LifecycleBench fills this gap with 9 attack vectors targeting failure modes invisible to existing benchmarks.

\subsection{Privacy, Consent, and Safety}
\label{app:privacy-safety}

Memory systems that retain personal facts raise privacy and safety concerns beyond those addressed in this work. Deployed systems require explicit user consent for memory retention, verifiable deletion guarantees when users request removal, and safeguards against memory poisoning through adversarial conversational inputs. The current ingestion classifier detects instructional content (Appendix~\ref{app:architecture-ext}), but production deployment would require substantially stronger prompt-injection defenses. Incorrect persistence in sensitive categories, particularly Health and Financial, could cause real harm; the interpretability of FR's category labels and lifecycle policies provides a foundation for user-facing memory controls, but we do not evaluate user-facing interfaces in this work. Finally, the decay rates and supersession semantics encode assumptions about how personal facts change over time that may not generalize across cultural contexts; per-user parameter adaptation (Appendix~\ref{app:architecture-ext}) is designed to address this but remains unvalidated.

\subsection{Introduction: Extended Motivation}
\label{app:intro-ext}

The following material provides extended motivation from the introduction.

Real memory is not a queue; it is a graph with activation patterns. The question is not only how to find the right context, but how to determine which memories should persist, which should be replaced, which should expire, and at what rate---conditioned on the type of memory.

Consider a user interacting three times per day over two years, producing 10,000--30,000 facts. At retrieval time: three ``works at'' facts coexist with equal status; a six-month-old dentist appointment still scores highly on queries about upcoming plans; ADHD mentioned once in session~3 is buried under thousands of more recent edges; an explicit retraction (``forget about the Denver move'') competes with the original plan; and ``thinking about moving to London'' is treated as contradicting ``I live in Istanbul.'' The downstream LLM sees all facts as equally valid and either hallucinates a synthesis or picks arbitrarily.

We introduce Fortunate Recall: not total recall, but intelligent recall. The system forgets according to the behavioral type of each fact---structured, adaptive persistence where unchecked forgetting would otherwise erode coherence.

\section{Pre-Registered Untyped Lifecycle Baseline}
\label{app:untyped-arm}

This appendix gives the design, protocol, and full results for the ablation summarized in \S\ref{sec:attribution}.

\subsection{Pre-Registration}

Three documents fix the design before data contact. The first registers the three-primitive arm itself, defining the rules verbatim---``supersede $=$ latest-slot-wins; expire $=$ date rule; retract $=$ drop entries with \texttt{is\_retracted = True}. No category routing, no soft blending, no decay, no ontology.'' The second is an amendment resolving a dataset-scope ambiguity, ruled on before any run. The third registers the end-to-end comparison reported here, including the analysis branch table: which contrast would be treated as primary was fixed in advance under each possible outcome, so the reported branch was not selected after seeing the data. Commit hashes for all three, and the raw retrieval, generation, and judging outputs, are in the released artifacts.

\subsection{What the Arm Is, Exactly}

The arm is more austere than the phrase ``typed versus untyped'' suggests, and we state the gap rather than let it be inferred. It retains slot-key supersession, event-time expiry, and the retraction-state mask. It removes not only the behavioral categories but also activation, decay, and score blending: candidates are ranked by \emph{semantic similarity alone} over an unmasked cosine$+$BM25 pool, with no category routing, no category-forced retrieval, and no multi-hop expansion. Two further deltas are disclosed for completeness: its expiry rule carries no numeric exemption, and its retraction mask is applied unconditionally, whereas canonical FR gates the mask behind a retraction-query detector. The consequence for interpretation is that this arm \emph{lower-bounds} a lifecycle-metadata-only system rather than isolating the category labels as a single variable---and that direction of error strengthens rather than weakens the reading we draw from it, because a baseline weaker than advertised still ties on correctness.

\subsection{Protocol and Results}

Both arms' top-10 contexts were answered by the same GPT-5.4 generator (temperature 0) on all 516 LifecycleBench questions and scored by a single pinned judge under the end-to-end rubric of Appendix~\ref{app:e2e}, in one judging pass as the pre-registration requires.

\begin{table}[h]
\caption{Full stack versus the untyped three-primitive arm, $n = 516$, single paired judging pass. Confabulation is over all queries. McNemar $b$ counts questions where only the full stack shows the outcome and $c$ where only the untyped arm does.}
\label{tab:untyped-arm}
\centering
\small
\begin{tabular}{@{}l c c c c@{}}
\toprule
\textbf{Measure} & \textbf{Full FR-Bank} & \textbf{Untyped arm} & \textbf{$\Delta$ (95\% CI)} & \textbf{McNemar} \\
\midrule
Correct / total        & 26.9\% & 28.7\% & $-1.7$pp $[-6.0, +2.7]$ & $b{=}50$, $c{=}59$, $p = 0.44$ \\
Confabulation / all    & 12.0\% (62) & 24.2\% (125) & $-12.2$pp $[-16.1, -8.2]$ & $b{=}21$, $c{=}84$, $p < 0.001$ \\
Abstention             & \multicolumn{2}{c}{$+15.9$pp for the typed stack} & --- & --- \\
\bottomrule
\end{tabular}
\end{table}

The correctness contrast is null and the confabulation contrast is large and one-sided: 84 questions are confabulated only by the untyped arm against 21 only by the full stack. Since correct-over-total does not move while abstention rises $15.9$pp, the additional abstentions are drawn from would-be wrong or fabricated answers rather than from correct ones---the ``identical behavior on answered questions'' hypothesis, tested and rejected within a single pipeline.

\subsection{Judge-Pass Provenance}

The pre-registration requires both arms in one pass, so this pass re-judged the original FR-Bank answers rather than regenerating them; the answer strings are byte-identical on 516/516 questions. The pass uses the same judge model and a byte-identical rubric through a different access path. FR-Bank's all-queries confabulation is $13.0\%$ ($67/516$; 68 raw, with one \textsc{abstain}-flagged record zeroed under the rubric's abstain rule) under the original pass and $12.0\%$ ($62/516$) under the paired pass---a five-question difference produced by 16 confabulation-flag flips on identical answers (11 to clean, 5 to confabulated), against $89.1\%$ verdict-class agreement between the passes overall. This is the judge-relative variability characterized in Appendix~\ref{app:judge-validation}. Neither pass supersedes the other. Every contrast in Table~\ref{tab:untyped-arm} is computed entirely within the paired pass and is unaffected by the choice.

\section{External Transfer: BEAM}
\label{app:beam}

\subsection{The Benchmark, and Why It Answers the Co-Design Objection}

BEAM~\cite{beam2026} contains 100 coherent conversations and exactly 2{,}000 probing questions---two per conversation for each of ten memory abilities---across context tiers from 128K to 10M tokens. It was constructed independently of this work and is human-validated. Three of its abilities (contradiction resolution, event ordering, instruction following) were newly introduced by its authors, so they cannot derive from FR's design or from the baseline failure modes that informed LifecycleBench's attack vectors. We restrict to the five lifecycle-relevant abilities at the 1M-token tier ($n = 70$ per ability), which was pre-registered together with the FR-vs-Mem0 comparison before data contact; the ontology-collapse arm and the granularity sweep are follow-up analyses under the same design.

\subsection{Standing Disclosures}

Two disclosures bind every number in this appendix and in \S\ref{sec:beam}.

\textbf{Backbone quarantine.} These runs use a different generator backbone from the canonical LifecycleBench evaluation. The arms are internally comparable to one another under an identical harness, but the absolute values are \emph{not} commensurable with the LifecycleBench tables elsewhere in this paper, and we do not pool them.

\textbf{Judge asymmetry.} The judge used here has perfect abstention recall ($15/15$ on a bridge sample against our frozen API judge) but over-abstains relative to it (precision $15/18$). Confabulation counts are therefore lower bounds and abstention rates upper bounds---uniformly across all arms, so comparisons between arms are unaffected while absolute levels are not directly comparable to the LifecycleBench numbers.

\subsection{Replication and Spread}

The primary configuration was measured three times under the same judge and backbone. Reporting the primary result without this block would overstate its precision.

\begin{table}[h]
\caption{Three independent executions of the identical FR-full configuration, plus the two independently specified minimal arms. The prompt cache was reaped between runs, so each is a genuine re-generation.}
\label{tab:beam-replication}
\centering
\small
\begin{tabular}{@{}l c c c c c c@{}}
\toprule
\textbf{Run} & \textbf{Binary / 280} & \textbf{CR} & \textbf{KU} & \textbf{TR} & \textbf{Abst} & \textbf{Confab} \\
\midrule
FR-full, primary        & \textbf{131} & 29 & 33 & 26 & 43 & 72 \\
FR-full, run 2          & 128 & 25 & 33 & 31 & 39 & 79 \\
FR-full, run 3          & 125 & 28 & 33 & 25 & 39 & 75 \\
\midrule
Minimal arm A ($k{=}1$ collapse) & 118 & \textbf{19} & 34 & 24 & 41 & 84 \\
Minimal arm B (untyped defaults) & 115 & \textbf{19} & 31 & 22 & 43 & 76 \\
\midrule
Mem0                    & 92 & 5 & 34 & 15 & 38 & 108 \\
Lifecycle-off           & 135 & 24 & \textbf{43} & 26 & 42 & 84 \\
\bottomrule
\end{tabular}
\end{table}

FR-full spans $125$--$131$ of 280 (six questions, $2.1$pp) on an identical configuration, with knowledge update at 33 in all three runs. This is generator and judge noise on one harness, and it is why \S\ref{sec:beam} quotes the ontology effect as a range ($+10$ to $+13$) rather than a point estimate, and the metadata component as $+23$ to $+26$. Minimal arms A and B are different specifications with different pool construction---A collapses the eleven cells to one with global-mean $\lambda$/$\alpha$/floor, B replaces every category lookup with untyped defaults ($\lambda = 0.005$, $\alpha = 0.20$, floor $0.0$) and removes the category-forced pool---and both land on exactly $19/70$ contradiction resolution. Neither is the LifecycleBench arm of Appendix~\ref{app:untyped-arm}, which keeps none of the uniform decay, blending, multi-hop, or intent routing that these two retain.

\subsection{Honest Negatives}

\textbf{The confabulation decomposition does not replicate.} The primary run falls monotonically $108 \to 84 \to 72$, but on re-execution the two FR arms are indistinguishable ($76$ vs.\ $75$). The Mem0$\to$FR drop is robust; the minimal$\to$full drop is not independently replicated and must be read as unreplicated. The correctness half of the decomposition does replicate.

\textbf{Lifecycle-off scores highest overall.} At $135/280$ it beats both typed arms, driven by knowledge update $43$ vs.\ $33$. Pairing on \texttt{(conversation, question)} gives 12 discordant knowledge-update questions against 2 in the other direction, a net cost of 10. In 10 of those 12, ingest-time supersession fired under an over-broad slot key and deactivated the entry carrying the current value, emptying the candidate pool entirely in 7. Median slot cardinality among the offending keys is 7 (maximum 67). A cardinality guard exists but was default-off and, at its configured threshold, would have fired on only 2 of the 13 offending deactivations, so it is not the remedy; slot-key granularity retuning is, and it is identified in the released analysis. The minimal arm inherits the same cost (knowledge update 31), which is what locates the cost in the generic primitives rather than in the ontology.

\textbf{Event ordering is at the floor.} Exact-correct is $0/70$ for every arm. Our frozen judge has no ordering rubric and initially returned 0/70 with spurious confabulation flags for all three arms symmetrically (68, 68, 69 flags); a hand-scoring audit identified this as a scorer artifact, and the decision to rescore under BEAM's official graded scorer and exclude the ability from the binary aggregate was taken on that diagnostic evidence before its aggregate effect was computed. Under the official scorer, $\tau_{\text{norm}}$ is $0.2136$ / $0.2112$ / $0.1900$ and \texttt{llm\_judge} is $0.574$ / $0.586$ / $0.510$ for FR-full / minimal / Mem0; FR-vs-Mem0 is significant on both (Wilcoxon signed-rank, $n = 70$: $p = 0.009$ and $p = 0.020$) while FR-full vs.\ FR-minimal is not ($p = 0.67$ and $p = 0.19$), consistent with event ordering being ontology-independent. Hand-scores, per-question official scores, and the verbatim-reproduction log are released.

\subsection{Provenance}

Every number above is counted directly from per-question judge records rather than transcribed from any response document: per-ability \texttt{judge\_k1.jsonl} / \texttt{judge\_k11.jsonl} files for the ablation arms, \texttt{onto\_sweep\_k\{3,5,7,9\}/judge.jsonl} for the granularity sweep, and a single combined judgments file for the Mem0, lifecycle-off, and re-run arms. Correct counts are \texttt{correctness == "CORRECT"}; confabulation counts sum the judge's confabulation flag. The released manifest gives one line per number in \S\ref{sec:beam} and this appendix, naming the file and the operation that produces it, together with a SHA-256 for each raw file.

\section{All-Queries End-to-End Metrics}
\label{app:allqueries}

Reviewers correctly observed that confabulation over non-abstaining responses flatters systems that abstain often. Table~\ref{tab:allqueries} recomputes every end-to-end rate over all 516 queries from the raw per-question judgments. Convention, stated because more than one is defensible: an \textsc{abstain} is never counted as a confabulation, the ``all queries'' denominator is all 516 questions, and the ``answered'' denominator excludes abstentions only.

\begin{table}[h]
\caption{End-to-end response quality under both denominators, $n = 516$, GPT-5.4 generator. Safe $=$ correct $+$ abstain.}
\label{tab:allqueries}
\centering
\small
\begin{tabular}{@{}l c c c c c@{}}
\toprule
\textbf{System} & \textbf{Correct / total} & \textbf{Abstain} & \textbf{Confab / all} & \textbf{Confab / answered} & \textbf{Safe} \\
\midrule
FR-Bank             & \textbf{31.2\%} & 42.1\% & \textbf{13.0\%} & \textbf{22.4\%} & \textbf{73.3\%} \\
FR-Graphiti         & 22.9\% & 44.0\% & 13.4\% & 23.9\% & 66.9\% \\
MemoryOS            & 13.2\% & 54.8\% & 18.4\% & 40.8\% & 68.0\% \\
Memory-R1           & 19.6\% & 38.6\% & 19.4\% & 31.5\% & 58.2\% \\
Mem0 (gpt-4.1-mini) & 27.3\% & 24.8\% & 25.4\% & 33.8\% & 52.1\% \\
Mem0 (default)      & 18.6\% & 28.7\% & 32.2\% & 45.1\% & 47.3\% \\
A-MEM               & 18.8\% & 25.4\% & 35.1\% & 47.0\% & 44.2\% \\
\bottomrule
\end{tabular}
\end{table}

Three observations. First, $19.2$ of the $22.7$pp FR-Bank\,--\,Mem0 gap survives the denominator change. Second, correct-over-total favors FR-Bank by $12.6$pp ($31.2\%$ vs.\ $18.6\%$), so abstention is not substituting for correctness. Third, the ranking is essentially stable; the one movement is MemoryOS, whose low all-queries confabulation pairs the highest abstain rate with the lowest correct rate---the retrieval-metric paradox of Appendix~\ref{app:evaluation-ext}, not a lifecycle success.

The A-MEM row here is the recomputed one. The submitted version's Table~\ref{tab:kimi} printed Partial $19.0$, Wrong $20.0$ and Abstain $42.2$ for A-MEM on GPT-5.4; those three cells were transcription errors, detectable from the row itself because the printed Correct ($18.8\%$) and Safe ($44.2\%$) cells force Abstain $= 25.4\%$. Recomputed from the raw judgments the row is Correct $18.8\%$ (97), Partial $27.5\%$ (142), Wrong $28.3\%$ (146), Abstain $25.4\%$ (131), summing to 516. Correct, Confab and Safe were already right, so no gap, ranking, or downstream figure changes.

\section{Validation of the End-to-End Judge}
\label{app:judge-validation}

Every verdict in this paper is produced by an LLM judge, so we audited that judge with two independent machine instruments before claiming anything from its output. Both were pre-registered with explicit predictions; three of the four predictions were refuted, and we release them alongside the results.

\subsection{Six-Model Cross-Family Panel}

One hundred stratified verdicts were re-judged by six models spanning five model families, each receiving a byte-identical rubric and the ground-truth evidence, in fresh sessions with memory disabled. The panel agrees with itself substantially more than with our pinned judge: Fleiss $\kappa = 0.834$ across the six panelists and $0.796$ across all seven raters, with a median pairwise $\kappa$ of $0.814$ among panelists but $0.715$ between panelists and the pinned judge. Fifteen of the 21 pairs fall outside $[0.60, 0.75]$.

The disagreement is not diffuse but localized. Against the panel majority, the pinned judge's \textsc{correct}, \textsc{wrong} and \textsc{abstain} verdicts are stable (23/24, 19/20 and 32/32 respectively), while of 24 pinned \textsc{partial} verdicts the panel majority moves 19 elsewhere---13 to \textsc{correct} and 6 to \textsc{abstain}. Inspection of those 19 shows none contains an outdated or wrong indicator; all 19 are merely \emph{incomplete}. The pinned judge is therefore using \textsc{partial} to mean ``incomplete'' where our rubric reserves it for ``contaminated.'' One item is a 3--3 tie with no panel majority and is reported as its own case rather than folded into an agreement tier.

\subsection{Targeted Re-Judge of Every PARTIAL Verdict}

Acting on that diagnosis, all 748 pinned \textsc{partial} verdicts across six systems were re-judged by an independent judge under the same rubric, together with a 60-item stratified control of non-\textsc{partial} verdicts. MemoryOS is excluded because its answer texts were not persisted.

The control reproduces $56/60$ ($93\%$) of non-\textsc{partial} verdicts---$95\%$ of \textsc{correct}, $85\%$ of \textsc{wrong}, $100\%$ of \textsc{abstain}---which is what licenses correcting \textsc{partial} alone. Applying the corrections raises correct rates by $+5.2$ to $+7.0$pp across the six systems, roughly uniformly: FR-Bank $31.2 \to 36.2\%$, FR-Graphiti $22.9 \to 28.1\%$, Mem0 $18.6 \to 25.2\%$, Mem0-mini $27.3 \to 34.3\%$, Memory-R1 $19.6 \to 24.8\%$, A-MEM $18.8 \to 24.8\%$. The cross-system ranking is unchanged on every metric, and the FR-Bank\,--\,Mem0 correct-rate gap narrows from $+12.6$ to $+11.0$pp.

Two caveats travel with this. The all-queries confabulation baseline computed inside the re-judge harness differs slightly from Table~\ref{tab:allqueries} (e.g.\ FR-Bank $13.2\%$ vs.\ $13.0\%$) because the two harnesses treat the confabulation flag on re-scored items differently; the paper's tables use the convention stated in Appendix~\ref{app:allqueries} throughout. And the two audits disagree with each other on magnitude: on the 24 items both examined, the panel majority moved 19 off \textsc{partial} while the single independent judge moved 11, a 33pp discrepancy. They agree on the direction and the location of the bias but not its size, so the panel's item-level agreement figures should not be quoted as a general property of the judge.

\subsection{What Remains}

The pinned-judge numbers remain the numbers of record throughout this paper, because they are the ones every system was scored under identically. What the audits establish is that absolute rates are judge-relative and cross-system comparisons are robust. A stratified human-labeled slice---100 verdicts across the four verdict classes and nine attack vectors, two non-author annotators blind to system identity, reporting raw agreement, Cohen's $\kappa$, and a per-class confusion table, prioritizing items on which the machine judges disagree---is planned and has not yet been run. It is listed as the first limitation in \S\ref{sec:discussion}.

\section{Metadata-Noise Sensitivity Across the Full Basis}
\label{app:noise}

The deterministic policy layer is only as reliable as the LLM-generated metadata beneath it. Appendix~\ref{app:perturbation} perturbs category labels alone; this study perturbs each field of the basis independently.

\textbf{Harness and scope.} Noise is injected at ingestion and lifecycle states are recomputed by a state replay verified to reproduce \texttt{is\_active} for $18{,}936/18{,}936$ edges; retrieval is then rerun on all 516 questions. The $0\%$-noise control reproduces the unperturbed top-10 exactly, and the harness reproduces the published Appendix~\ref{app:perturbation} anchors to $|\Delta| < 10^{-5}$ before any new condition is run. Two scope limits are declared rather than discovered: category-forced retrieval, multi-hop expansion and LLM distillation are excluded because they require API calls, so every row is a \emph{lower bound} on full-pipeline impact---and the category row is the one most likely understated, since category-forced retrieval is the component whose input is the category label. The staleness and support measures are lexical proxies computed against ground-truth strings, not the LLM judge used for the headline numbers. Mean $\pm$ s.d.\ over three seeds.

\begin{table}[h]
\caption{Retrieval stability under independent metadata corruption. Jaccard@10 is between the unperturbed and perturbed top-10 sets; $\Delta$ staleness is the change in the fraction of questions whose top-10 contains an edge the uncorrupted metadata marks superseded, expired, retracted or shadowed (baseline $8.1\%$).}
\label{tab:noise-grid}
\centering
\small
\begin{tabular}{@{}l c c c@{}}
\toprule
\textbf{Field / sub-condition} & \textbf{Noise rate} & \textbf{Jaccard@10} & \textbf{$\Delta$ staleness} \\
\midrule
Category label (full replay)      & 5\%  & $0.951 \pm 0.002$ & $+0.3$pp \\
Category label (full replay)      & 10\% & $0.917 \pm 0.002$ & $+0.4$pp \\
Category label (full replay)      & 20\% & $0.854 \pm 0.006$ & $+1.4$pp \\
Category label (full replay)      & 50\% & $0.733 \pm 0.002$ & $+2.7$pp \\
\midrule
Slot key, merge corruption        & 5\%  & $0.859 \pm 0.003$ & $+19.8$pp \\
Slot key, merge corruption        & 10\% & $\mathbf{0.764 \pm 0.004}$ & $\mathbf{+32.2}$pp \\
Slot key, merge corruption        & 20\% & $0.650 \pm 0.002$ & $+52.7$pp \\
Slot key, split corruption        & 5\%  & $0.901 \pm 0.002$ & $+24.4$pp \\
Slot key, split corruption        & 10\% & $\mathbf{0.837 \pm 0.005}$ & $\mathbf{+38.6}$pp \\
Slot key, split corruption        & 20\% & $0.740 \pm 0.000$ & $+60.1$pp \\
\midrule
Event time, $\pm 7$-day shift     & 10\% & $0.999 \pm 0.000$ & $+0.1$pp \\
Event time, $\pm 90$-day shift    & 10\% & $0.998 \pm 0.001$ & $+0.3$pp \\
Event time, deleted               & 10\% & $0.993 \pm 0.002$ & $+2.6$pp \\
Event time, deleted               & 20\% & $0.988 \pm 0.001$ & $+5.6$pp \\
\midrule
Retraction, false positive        & 1\%  & $0.971 \pm 0.003$ & $-0.1$pp \\
Retraction, false negative (all 31 restored) & exhaustive & $0.995$ & $+0.8$pp \\
\bottomrule
\end{tabular}
\end{table}

\textbf{Reading the grid.} Sensitivity concentrates on slot keys---precisely the field that \S\ref{sec:attribution} identifies as carrying correctness---and their corruption damages \emph{set composition} rather than ranking (Kendall $\tau \approx 0.997$ while Jaccard falls to $0.76$). Category labels, which carry the calibration benefit, tolerate a 10\% uniform flip at $+0.4$pp of staleness and even a 50\% flip---equivalent to random assignment within the confused pairs---at $+2.7$pp, with the damage flowing through supersession semantics rather than scoring. Event-time anchors are near-inert under realistic corruption on this corpus because most anchored events lie far from the expiry boundary; only $3.1\%$ fall within seven days of it. Retraction states show a prevalence/potency split: per corrupted edge they are the most potent field measured, but only 31 of $18{,}936$ edges carry retrieval-suppressing retraction, so aggregate exposure is minimal.

\textbf{The binding constraint on retraction is recall, not noise.} Auditing the corpus against ground truth: the 31 retraction-carrying edges are detection markers, of which 26 map to the 40 scripted retraction events (19 distinct events, some firing more than once) and 5 are incidental. In event units the ingestion detector fires on \textbf{19 of 40} scripted retractions, so extraction recall on retraction language is the first bottleneck on selective forgetting. It is not the only one: because a plan is stored across many sibling slot keys, retraction and supersession deactivate the primary key while sibling edges asserting the plan remain active, so only \textbf{2 of 40} events are cleanly suppressed at the metadata level---matching the $5\%$ AV7 retrieval-pass rate---and 38 of 40 leak at least one live edge. FR-Bank's $37.5\%$ end-to-end AV7 correctness, the highest of any evaluated system, is therefore recovered downstream, where the generator reconciles the co-retrieved plan and its cancellation, rather than by clean metadata exclusion. The selective-forgetting bottleneck is thus twofold---retraction-detection recall and slot-key fragmentation---and because the downstream policy is deterministic, both failure modes are localized and inspectable.

\section{Changes in This Revision}
\label{app:revision-changes}

This appendix lists every number that differs from the submitted version, so that the revision can be audited against it directly. All corrections were found by our own post-submission audit or during the recomputations requested in review, and all corrected values are recomputed from raw per-question records. Section~\ref{sec:attribution}, \S\ref{sec:beam}, \S\ref{sec:granularity}, \S\ref{sec:robustness} and Appendices~\ref{app:untyped-arm}--\ref{app:noise} are new.

\begin{table}[h]
\caption{Numerical corrections relative to the submitted version.}
\label{tab:revision-changes}
\centering
\small
\setlength{\tabcolsep}{4pt}
\begin{tabular}{@{}L{3.5cm} L{4.2cm} L{5.6cm}@{}}
\toprule
\textbf{Item} & \textbf{Submitted $\to$ revised} & \textbf{Reason} \\
\midrule
FR-Bank LifecycleBench pass rate (16 sites) & $77.3\% \to 76.9\%$, staleness $15.3\% \to 15.5\%$ & The $77.3\%$ run was superseded by a re-scored run whose values already underlie every confidence interval and pairwise test in Appendix~\ref{app:bootstrap-cis} and the released artifact. Ranking and every comparative conclusion unchanged; still $+16$pp over Mem0. \\
LongMemEval knowledge-update & $+6.4$pp \emph{withdrawn} & The $87.2\%$ / $80.8\%$ pair traces to no surviving artifact. Artifact-backed value on the 317-matched subset is $-6.4$pp, a judge-criterion effect (Table~\ref{tab:lme-317-matched}). \\
Table~\ref{tab:longmemeval-pertype-ablation} subset label & now labeled 234-question POC subset & The submitted table mixed a ``317-matched'' label with 234-question POC values; the two subsets are now separate tables. \\
Table~\ref{tab:kimi}, A-MEM (GPT-5.4) row & Partial $19.0 \to 27.5$, Wrong $20.0 \to 28.3$, Abstain $42.2 \to 25.4$ & Three transcription errors, forced by the row's own Correct and Safe cells. Correct, Confab and Safe were already right; nothing downstream changes. \\
AV3 leadership (5 sites) & ``Memory-R1 leads AV3'' $\to$ FR-Bank leads at $97\%$ & The comparison was against FR-Graphiti's $78\%$. Memory-R1 leads no attack vector. \\
FR MRR & $0.478 \to 0.830$ where FR-Bank is meant & $0.478$ is FR-Graphiti's; FR-Bank's $0.830$ exceeds Memory-R1's $0.816$. \\
Bare ``FR'' (5 contradictions) & disambiguated to FR-Bank / FR-Graphiti & The same label denoted both variants, producing gaps of up to 35pp between statements about ``FR.'' \\
FR-Graphiti ablation deltas & $-12.2 / -6.2 / -6.4 \to +12.2 / +6.2 / +6.4$ & Sign inversion; the per-AV table already showed positive gains. \\
Mem0 extractor upgrade & $+10$pp $\to +6.1$pp & $67.1 - 61.0$. \\
Category-aware routing & $+5$pp $\to +6$pp & Consistent with Table~\ref{tab:lifecyclebench-results} and the bootstrap estimate. \\
``$2\times$ lower staleness'' & $\to 1.76\times$ & $15.5$ vs.\ $27$. \\
Table~\ref{tab:e2e-av} & FR and Mem0 columns replaced; all 9 AVs now shown & Both columns matched no current run; the Memory-R1 column was already correct. \\
Table~\ref{tab:e2e-av-amem-mem0mini} & per-AV Correct cells replaced & They did not weight-average to their own (correct) aggregates. \\
AV7 confabulation prose & ``FR reduces this to $32.5\%$'' $\to 33.3\%$ & $32.5\%$ was MemoryOS's abstain rate. \\
Appendix~\ref{app:frbank} corpus size & $5{,}344 / 3{,}892 / 133.6 \to 18{,}936 / 12{,}968 / 324.2$ & Stale proof-of-concept snapshot. \\
$\lambda_{\textsc{other}}$ & $0.0030 \to 0.0050$ in Table~\ref{tab:decay-rates} & Two tables disagreed; $0.0050$ is the deployed and released value. Zero result changes. \\
Perturbation flip counts & denominator now stated & $\approx 910$ / $\approx 4{,}400$ are $10\%$ / $50\%$ of $8{,}913$ affected edge-instances, not of the $6{,}180$ unique eligible entries. \\
Polytope cell count & ``45 off-diagonal'' $\to$ 45 populated (5 diagonal, 40 off-diagonal) & The artifact records 45 cells in total. \\
$\alpha_F^{\max} = 0.065$, $\alpha_L^{\min} = 0.138$ & now attributed to FR-Graphiti & FR-Bank's artifact values are $1.7\times10^{-6}$ and $\approx 1.0$; the medians are from the FR-Graphiti-era analysis. \\
\bottomrule
\end{tabular}
\end{table}

One further disclosure. During the discussion period we quoted a figure of $12.5\%$ for AV8 numeric preservation under a uniform blending weight. That figure appears only in prose in earlier drafts and is reproduced by no released artifact; the two artifacts that come closest to testing it report $81.0\%$ (the uniform ontology-ablation arm over 516 questions) and a degenerate Phase-2-era value. We therefore withdraw it and make the calibration argument in \S\ref{sec:theory} entirely from artifact-backed quantities: $99.8\%$ of $1{,}566$ cross-pairs infeasible, zero feasible cells among the 45 populated, and $\min \alpha_F^{\max} = 1.7\times10^{-6}$ against $\max \alpha_L^{\min} \approx 1.0$.

A targeted audit of suspected items produced the corrections above; a stratified random audit of roughly 60 additional measured cells---the 45 per-AV cells of Table~\ref{tab:mem0-av}, the 11 decay-rate cells, the LongMemEval variance bounds, and the latency table---found no mismatches. The errors are confined to the editing layer: stale runs transcribed into tables, sign and label slips, and aggregate typos. The measurement layer, meaning the per-question records, banks and configurations, is unaffected, and all raw records are released.

\section*{NeurIPS Paper Checklist}

\begin{enumerate}

\item {\bf Claims}
    \item[] Question: Do the main claims made in the abstract and introduction accurately reflect the paper's contributions and scope?
    \item[] Answer: \answerYes{}
    \item[] Justification: All main claims are supported by experiments described in \S\ref{sec:evaluation} and theoretical analysis in \S\ref{sec:theory} with proofs in Appendix~\ref{app:proofs}.

\item {\bf Limitations}
    \item[] Question: Does the paper discuss the limitations of the work performed by the authors?
    \item[] Answer: \answerYes{}
    \item[] Justification: Limitations are discussed in \S\ref{sec:discussion} (rate-calibration generalization, soft-supersession failure mode under high-commitment generators, classifier validation via LLM judges, and the scope of the ontology's value across distant vs.\ nearby category distinctions).

\item {\bf Theory assumptions and proofs}
    \item[] Question: For each theoretical result, does the paper provide the full set of assumptions and a complete (and correct) proof?
    \item[] Answer: \answerYes{}
    \item[] Justification: All theoretical results (Theorems~1--4, the lifecycle non-identifiability and individual-necessity results, and the staleness-decomposition propositions) include formal statements and complete proofs in Appendix~\ref{app:proofs}.

\item {\bf Experimental result reproducibility}
    \item[] Question: Does the paper fully disclose all the information needed to reproduce the main experimental results?
    \item[] Answer: \answerYes{}
    \item[] Justification: Benchmark data (40 personas, 1{,}400 sessions, 516 questions, ground truth) and FR-Bank code are released as supplementary material and will be made publicly available upon publication. All model versions, hyperparameters, and evaluation protocols are documented in Appendix~\ref{app:benchmark-ext}, Appendix~\ref{app:params}, and Appendix~\ref{app:decay}.

\item {\bf Open access to data and code}
    \item[] Question: Does the paper provide open access to the data and code, with sufficient instructions to faithfully reproduce the main experimental results?
    \item[] Answer: \answerYes{}
    \item[] Justification: LifecycleBench (questions, personas, ground truth), FR-Bank source code, all evaluation results, and run logs are publicly archived at \href{https://doi.org/10.5281/zenodo.20067778}{\texttt{https://doi.org/10.5281/zenodo.20067778}} (DOI: \texttt{10.5281/zenodo.20067778}) and bundled with the supplementary material. Run scripts and exact configurations are included.

\item {\bf Experimental setting/details}
    \item[] Question: Does the paper specify all the training and test details (e.g., data splits, hyperparameters, optimizer, type of compute) necessary to understand the results?
    \item[] Answer: \answerYes{}
    \item[] Justification: Evaluation methodology, model versions, and hyperparameters are fully described in Appendices~\ref{app:benchmark-ext}, \ref{app:mem0method}, \ref{app:mr1method}, \ref{app:e2e}, and~\ref{app:kimi}.

\item {\bf Experiment statistical significance}
    \item[] Question: Does the paper report error bars suitably and correctly defined or other appropriate information about the statistical significance of the experiments?
    \item[] Answer: \answerYes{}
    \item[] Justification: Bootstrap confidence intervals and pairwise significance tests are reported in Appendix~\ref{app:bootstrap-cis}; LongMemEval-S run-to-run variance is reported in Table~\ref{tab:longmemeval-s-variance}; the 20/20 persona holdout split (Fisher's $p=1.00$) is reported in \S\ref{sec:discussion}.

\item {\bf Experiments compute resources}
    \item[] Question: For each experiment, does the paper provide sufficient information on the computer resources (type of compute workers, memory, time of execution) needed to reproduce the experiments?
    \item[] Answer: \answerYes{}
    \item[] Justification: All evaluations are API-driven and require no GPU training. The deterministic lifecycle layer's latency (median $47\,\mu\mathrm{s}$, single-threaded Python on AMD Zen~3) is reported in \S\ref{sec:architecture} and Appendix~\ref{app:architecture-ext}; per-system API call counts and Kimi K2.5 cross-generator API cost (under \$10) are documented in Appendix~\ref{app:kimi}.

\item {\bf Code of ethics}
    \item[] Question: Does the research conducted in the paper conform, in every respect, with the NeurIPS Code of Ethics?
    \item[] Answer: \answerYes{}
    \item[] Justification: All evaluations use synthetic personas (no human subjects), publicly available APIs, and openly licensed competitor code (Mem0 Apache 2.0, A-MEM/Memory-R1/MemoryOS as published). No private or personally identifiable data is used.

\item {\bf Broader impacts}
    \item[] Question: Does the paper discuss both potential positive societal impacts and negative societal impacts of the work performed?
    \item[] Answer: \answerYes{}
    \item[] Justification: Memory systems that retain personal information raise privacy concerns; the interpretability and selective-forgetting capabilities of FR provide users with control over their data. Privacy, consent, and safety considerations are discussed in \S\ref{sec:discussion} and Appendix~\ref{app:privacy-safety}.

\item {\bf Safeguards}
    \item[] Question: Does the paper describe safeguards that have been put in place for responsible release of data or models that have a high risk for misuse?
    \item[] Answer: \answerYes{}
    \item[] Justification: The released benchmark contains only synthetic personas. Prompt-injection detection and confidence-degradation mechanisms in the ingestion classifier are described in Appendix~\ref{app:architecture-ext}; production deployment recommendations regarding consent and verifiable deletion are in Appendix~\ref{app:privacy-safety}.

\item {\bf Licenses for existing assets}
    \item[] Question: Are the creators or original owners of assets (e.g., code, data, models), used in the paper, properly credited and are the license and terms of use explicitly mentioned and properly respected?
    \item[] Answer: \answerYes{}
    \item[] Justification: Mem0 (Apache 2.0), A-MEM, Memory-R1, MemoryOS, and Zep/Graphiti are cited and credited; Kimi K2.5 is released under a Modified MIT License; OpenAI and Anthropic models are accessed via publicly available APIs. Embedding models (\texttt{text-embedding-3-small}) are similarly cited.

\item {\bf New assets}
    \item[] Question: Are new assets introduced in the paper well documented and is the documentation provided alongside the assets?
    \item[] Answer: \answerYes{}
    \item[] Justification: LifecycleBench (516 questions, 40 personas, structured YAML ground truth, 9 attack vectors) and FR-Bank source code are released as supplementary material with documentation, and will be made publicly available under an open license upon publication.

\item {\bf Crowdsourcing and research with human subjects}
    \item[] Question: For crowdsourcing experiments and research with human subjects, does the paper include the full text of instructions given to participants and screenshots, if applicable, as well as details about compensation (if any)?
    \item[] Answer: \answerNA{}
    \item[] Justification: No crowdsourcing or research with human subjects was performed. All personas are synthetic; all classifier-agreement audits use LLM judges, not human annotators.

\item {\bf Institutional review board (IRB) approvals or equivalent for research with human subjects}
    \item[] Question: Does the paper describe potential risks incurred by study participants, whether such risks were disclosed to the subjects, and whether Institutional Review Board (IRB) approvals (or an equivalent approval/review based on the requirements of your country or institution) were obtained?
    \item[] Answer: \answerNA{}
    \item[] Justification: No human subjects were involved.

\item {\bf Declaration of LLM usage}
    \item[] Question: Does the paper describe the usage of LLMs if it is an important, original, or non-standard component of the core method development in this research?
    \item[] Answer: \answerYes{}
    \item[] Justification: The Fortunate Recall pipeline uses LLMs at ingestion (one \texttt{gpt-4.1-mini} call per turn for entity/edge extraction and behavioral classification) and a single lightweight LLM call at retrieval (gpt-4.1-mini distillation). The LLM/lifecycle-math boundary is documented explicitly in \S\ref{sec:architecture} and Figure~\ref{fig:architecture}, and downstream answer generation uses GPT-5.4 (primary) and Kimi K2.5 (cross-generator validation).

\end{enumerate}

\end{document}